\documentclass{article}

\usepackage{microtype}
\usepackage{graphicx}
\usepackage{subcaption}

\usepackage{booktabs} 

\usepackage{hyperref}

\usepackage[accepted]{icml2026}

\newif\ifshowcameraedits
\showcameraeditsfalse
\newcommand{\cameraedit}[1]{\ifshowcameraedits{\color{orange}#1}\else#1\fi}

\usepackage{amsmath}
\usepackage{amssymb}
\usepackage{mathtools}
\usepackage{amsthm}

\usepackage{amsmath,amsfonts,bm,amsthm}
\usepackage{dsfont,mathtools}

\newtheorem{remark}{Remark}

\def\eqref#1{equation~\ref{#1}}

\def\1{\bm{1}}

\def\eps{{\epsilon}}

\DeclareMathAlphabet{\mathsfit}{\encodingdefault}{\sfdefault}{m}{sl}
\SetMathAlphabet{\mathsfit}{bold}{\encodingdefault}{\sfdefault}{bx}{n}
\newcommand{\tens}[1]{\bm{\mathsfit{#1}}}
\def\tA{{\tens{A}}}

\newcommand{\Var}{\mathrm{Var}}

\DeclareMathOperator*{\argmax}{arg\,max}
\DeclareMathOperator*{\argmin}{arg\,min}

\makeatletter
\@tfor\next:=abcdefghijklmnpqstuvwxyz\do{%
  \def\command@factory#1{%
    \expandafter\def\csname t#1\endcsname{{\tilde{#1}}}
  }
 \expandafter\command@factory\next
}
\makeatother

\makeatletter
\@tfor\next:=ABCDEFGHIJKLMNOPQRSTUVWXYZ\do{%
  \def\command@factory#1{%
    \expandafter\def\csname t#1\endcsname{{\widetilde{#1}}}
  }
 \expandafter\command@factory\next
}
\makeatother

\makeatletter
\def\greekvectors#1{%
 \@for\next:=#1\do{%
    \def\X##1;{%
     \expandafter\def\csname t##1\endcsname{{\tilde{\csname##1\endcsname}}}
     }
   \expandafter\X\next;
  }
}
\greekvectors{Alpha,Beta,Gamma,Delta,Theta,Lambda,Xi,Pi,Sigma,Upsilon,Phi,Chi,Psi,Omega,alpha,beta,gamma,delta,epsilon,zeta,eta,theta,iota,kappa,lambda,mu,nu,xi,pi,rho,sigma,tau,upsilon,phi,chi,psi,omega,varphi,varepsilon,varrho,vartheta}
\makeatother

\makeatletter
\@tfor\next:=abcdefghijklmnopqrstuvwxyz\do{%
  \def\command@factory#1{%
    \expandafter\def\csname hat#1\endcsname{{\hat{#1}}}
  }
 \expandafter\command@factory\next
}
\makeatother

\makeatletter
\@tfor\next:=ABCDEFGHIJKLMNOPQRSTUVWXYZ\do{%
  \def\command@factory#1{%
    \expandafter\def\csname hat#1\endcsname{{\widehat{#1}}}
  }
 \expandafter\command@factory\next
}
\makeatother

\makeatletter
\def\greekvectors#1{%
 \@for\next:=#1\do{%
    \def\X##1;{%
     \expandafter\def\csname hat##1\endcsname{{\hat{\csname##1\endcsname}}}
     }
   \expandafter\X\next;
  }
}
\greekvectors{alpha,beta,gamma,delta,epsilon,zeta,eta,theta,iota,kappa,lambda,mu,nu,xi,pi,rho,sigma,tau,upsilon,phi,chi,psi,omega,varphi,varepsilon,varrho,vartheta}
\makeatother

\makeatletter
\def\greekvectors#1{%
 \@for\next:=#1\do{%
    \def\X##1;{%
     \expandafter\def\csname hat##1\endcsname{{\widehat{\csname##1\endcsname}}}
     }
   \expandafter\X\next;
  }
}
\greekvectors{Alpha,Beta,Gamma,Delta,Theta,Lambda,Xi,Pi,Sigma,Upsilon,Phi,Chi,Psi,Omega}
\makeatother

\makeatletter
\@tfor\next:=abcdefghijklmnopqrstuvwxyz\do{%
  \def\command@factory#1{%
    \expandafter\def\csname bar#1\endcsname{{\bar{#1}}}
  }
 \expandafter\command@factory\next
}
\makeatother

\makeatletter
\@tfor\next:=ABCDEFGHIJKLMNOPQRSTUVWXYZ\do{%
  \def\command@factory#1{%
    \expandafter\def\csname bar#1\endcsname{{\overline{#1}}}
  }
 \expandafter\command@factory\next
}
\makeatother

\makeatletter
\def\greekvectors#1{%
 \@for\next:=#1\do{%
    \def\X##1;{%
     \expandafter\def\csname bar##1\endcsname{{\bar{\csname##1\endcsname}}}
     }
   \expandafter\X\next;
  }
}
\greekvectors{alpha,beta,gamma,delta,epsilon,zeta,eta,theta,iota,kappa,lambda,mu,nu,xi,pi,rho,sigma,tau,upsilon,phi,chi,psi,omega,varphi,varepsilon,varrho,vartheta}
\makeatother

\makeatletter
\def\greekvectors#1{%
 \@for\next:=#1\do{%
    \def\X##1;{%
     \expandafter\def\csname bar##1\endcsname{{\overline{\csname##1\endcsname}}}
     }
   \expandafter\X\next;
  }
}
\greekvectors{Alpha,Beta,Gamma,Delta,Theta,Lambda,Xi,Pi,Sigma,Upsilon,Phi,Chi,Psi,Omega}
\makeatother

\makeatletter
\@tfor\next:=ABCDEFGHIJKLMNOPQRSTUVWXYZ\do{%
  \def\command@factory#1{%
    \expandafter\def\csname #1#1\endcsname{{\mathbb{#1}}}
  }
 \expandafter\command@factory\next
}
\makeatother

\makeatletter
\@tfor\next:=ABCDEFGHIJKLMNOPQRSTUVWXYZ\do{%
  \def\command@factory#1{%
    \expandafter\def\csname c#1\endcsname{{\mathcal{#1}}}
  }
 \expandafter\command@factory\next
}
\makeatother

\newcommand{\ind}{\mathds{1}}
\newcommand{\norm}[1]{\|#1\|}

\newtheorem{theorem}{Theorem}[section]
\newtheorem{lemma}{Lemma}[section]
\newtheorem{proposition}{Proposition}[section]
\newtheorem{corollary}{Corollary}[section]
\newtheorem{assumption}{Assumption}[section]

\usepackage[capitalize,noabbrev]{cleveref}

\usepackage{etoolbox}
\usepackage[normalem]{ulem}

\usepackage{booktabs,threeparttable,subcaption,multirow,makecell,tabularx}
\usepackage{microtype}
\usepackage{wrapfig}

\crefname{equation}{Eq.}{Eqs.}
\crefname{assumption}{Assumption}{Assumptions}

\usepackage{enumitem}
\usepackage{float}
\usepackage{autonum}
\usepackage{booktabs,multirow,threeparttable,graphicx}
\usepackage{subcaption}

\newif\ifshowcameraTODO
\showcameraTODOtrue
\definecolor{cameraTODOFrame}{RGB}{0,82,110}
\definecolor{cameraTODOBg}{RGB}{235,247,250}

\usepackage[disable,textsize=tiny]{todonotes}

\icmltitlerunning{SyNGLER: Efficient Synthetic Network Generation via Latent Embedding Reconstruction}

\begin{document}

\twocolumn[
  \icmltitle{Efficient Synthetic Network Generation via Latent Embedding Reconstruction}

  \icmlsetsymbol{equal}{*}

  \begin{icmlauthorlist}
    \icmlauthor{Feifan Jiang}{umich}
    \icmlauthor{Yinan Bu}{umich}
    \icmlauthor{Shihao Wu}{umich}
    \icmlauthor{Gongjun Xu}{umich}
    \icmlauthor{Ji Zhu}{umich}
  \end{icmlauthorlist}

  \icmlaffiliation{umich}{Department of Statistics, University of Michigan, Ann Arbor, Michigan, USA}

  \icmlcorrespondingauthor{Ji Zhu}{jizhu@umich.edu}

  \icmlkeywords{social networks, latent space models, structured data generation}

  \vskip 0.3in
]



\printAffiliationsAndNotice{}  

\begin{abstract}
  Network data are ubiquitous across the social sciences, biology, and information systems. 
Generating realistic synthetic network data has broad applications from network simulation to scientific discovery. 
However, many existing black-box approaches for network generation tend to overfit observed data while overlooking characteristic network structure, and incur substantial computational overhead at scale.
These practical challenges call for synthetic network generation methods that are both efficient and capable of capturing structural properties of networks. 
In this paper, we introduce Synthetic Network Generation via Latent Embedding Reconstruction (SyNGLER), a general and efficient framework for synthetic network generation that builds on latent space network models.
Given an observed network, SyNGLER first learns low-dimensional latent node embeddings via a latent space network model and then reconstructs the latent space by building a distribution-free generator over these embeddings.
For generation, SyNGLER first samples (or resamples) node embeddings from the generator in the latent space and then produces synthetic networks using the latent space network model.
Through the latent space framework, SyNGLER 
preserves unique characteristics in networks such as sparsity and node degree heterogeneity, while allowing for efficient training with lower computational cost than many existing deep architectures.
We provide theoretical guarantees by developing consistency results on the distance between the true and synthetic edge distributions. Empirical studies further demonstrate the effectiveness of SyNGLER, which efficiently produces networks that better preserve key network characteristics such as network moments and degree distributions compared with existing approaches.
\cameraedit{Code is available at \url{https://github.com/FeifanJiang/syngler}.}
\end{abstract}



\section{Introduction}

Network data, such as graphs capture interactions among entities in complex systems. 
Examples include social networks \citep{traud2012social}, molecular interaction networks \citep{gomez2018automatic}, and brain connectivity networks \citep{bullmore2009complex}. 
Generating realistic synthetic network data \citep{zhu2022survey} has broad applications, spanning drug discovery \citep{li2018multi}, material discovery \citep{merchant2023scaling}, and image recognition \citep{xie2019exploring}. Designing efficient generative models that produce realistic network data while preserving characteristic structural network properties remains a long-standing and active research challenge.

Recent years have witnessed a growing line of work on data-driven graph network generation using deep learning. 
For example, \citet{li2018learning} proposed 
an autoregressive generation scheme, in which a graph neural network \citep[GNN;][]{scarselli2008graph} sequentially adds nodes and edges based on the current graph. 
\citet{you2018graphrnn} later 
adopted recurrent neural networks \citep{schmidt2019recurrent} that summarize nodes and edges and generate, at each step, the next node and its associated edges.
\citet{liao2019efficient}
introduced a block-wise autoregressive model with graph attention
mechanism~\citep{velivckovic2017graph},
reducing serial computation while preserving long-range dependencies. 
Nevertheless, for large graphs, training and sampling in deep autoregressive models remain computationally heavy 
due to sequential modeling of a large graph \citep{salha2021fastgae}. 
Another line of research develops diffusion models \citep{sohl2015deep,ho2020denoising,song2020score} for graphs. 
Early methods \citep{niu2020permutation,jo2022score} applied continuous diffusion processes directly in adjacency-matrix space, which neglected the discreteness in graphs. 
\citet{vignac2022digress} and \citet{haefeli2022diffusion} studied discrete Markov processes on adjacency matrices. 
However, while operating in a discrete state space, applying diffusion directly in the adjacency-matrix space still overlooks the low-rank structure often present in large-scale network data~\citep{luo2023fast}. 
\citet{vahdat2021score} and \citet{rombach2022high} combined diffusion modeling with encoder-decoder architectures by applying diffusion in a continuous latent space. The resulting latent-diffusion approach has been used for molecular and protein graph generation \citep{xu2023geometric,fu2024latent} and extended to general graph generation \citep{zhou2024unifying}, covering both conditional and unconditional settings.
\cameraedit{In graph generation, this line typically uses graph-level latent variables and encoder-decoder training. SyNGLER instead targets single-network generation from one large observed graph through node-level likelihood-based embeddings and a pairwise edge decoder. It also differs from embedding-based hypergraph generation \citep{wu2025denoising}, where the generated objects are higher-order incidences rather than pairwise graph edges with explicit degree and sparsity effects.}
Nevertheless, most of the methods typically rely on variational training to connect the graphs and the latent space, which becomes computationally demanding for large-scale networks. 
Overall, existing methods tend to face computational challenges on large graphs due to deep neural network training on high-dimensional data and/or variational procedures, while the characteristic network structure is often neglected. 
A recent work, \citet{li2023graphmaker}, uses a message-passing neural network (MPNN) as the encoder to efficiently generate networks from a single large observation; however, a theoretical analysis of the algorithm is still lacking. 
\cameraedit{Other recent scalable generators, including HiGen~\citep{karami2024higen} and iterative local expansion~\citep{bergmeister2024efficient}, exploit hierarchical coarse-to-fine or localized expansion mechanisms to improve scalability, but they are still distinct from our single-network likelihood-based node-embedding formulation.}
These limitations underscore the need for graph generation models that can capture complex network structures while remaining computationally tractable and scalable to large graphs, with accompanying theoretical guarantees.

\begin{figure}[t]
    \centering
    \includegraphics[width=1.02\linewidth]{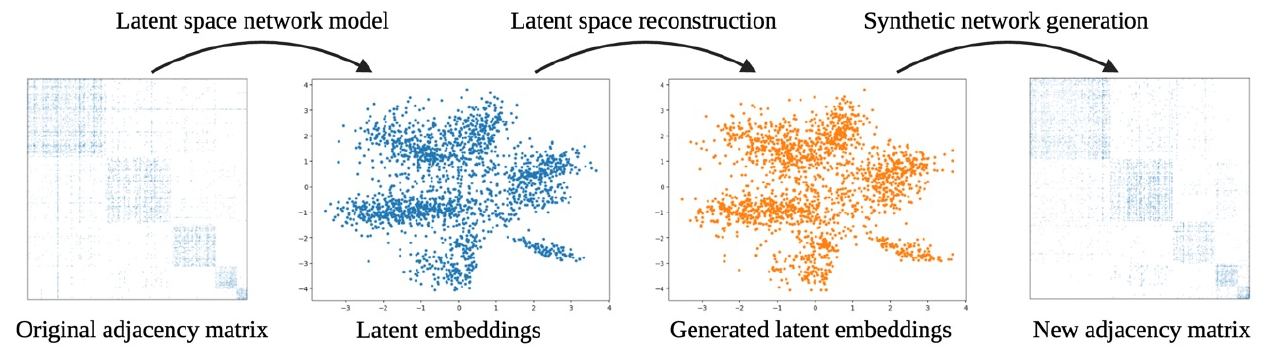}
    \caption{An illustrative SyNGLER pipeline using the \textsc{YouTube} dataset~\citep{yang2012defining} with a two-dimensional latent space.  
    From left to right: observed network in the form of an adjacency matrix;
     learned latent embeddings;
     synthetic embeddings from the generator in the latent space;
    synthetic network.
    }
    \label{fig:syngler-procedures}
\end{figure}

In this work, we introduce Synthetic Network Generation via Latent Embedding Reconstruction (SyNGLER), an efficient synthetic network generation framework that leverages latent space network models \citep{hoff2002latent,ma2020universal}, which represent networks with a group of interpretable low-dimensional embeddings, to address these challenges. 
During training, SyNGLER first fits a likelihood-based latent space model to the observed network to learn a set of low-dimensional node embeddings. 
Using these embeddings, it then trains a distribution-free generator in the latent space. 
For generation, SyNGLER samples (or resamples) node embeddings from the generator and produces synthetic networks from these node embeddings via the latent space model. 
\cref{fig:syngler-procedures} illustrates the complete pipeline.
SyNGLER avoids training deep models directly on high-dimensional network space. 
Instead, it learns low-dimensional embeddings via likelihood-based latent models and trains a lightweight generator in the latent space, substantially reducing computational costs.
Moreover, the geometry in the latent space enables SyNGLER to preserve key structural properties of the network that reflect latent node-node interactions. 
We provide a theoretical analysis of SyNGLER and establish its consistency and generalization guarantees under a hierarchical latent space network model. 
Extensive experiments on synthetic and real-world datasets further demonstrate its strong performance compared with existing approaches while substantially reducing computational costs.

The remainder of the paper is organized as follows.
In \cref{sec:setting}, we formally introduce the SyNGLER framework. \cref{sec:theory} presents the theoretical results. \cref{sec:experiment} reports empirical results on simulated and real-world data. 
\cref{sec:discussion} concludes the paper with a discussion. Additional numerical results, experimental details, and proofs are provided in the Appendix.

\section{Synthetic Network Generation via Latent Embedding Reconstruction} \label{sec:setting}

Given an observed network with \(n\) nodes, our goal is to train a generative model that produces synthetic networks preserving some key structural properties of the original network such as the community structures and spectral properties. 
Our observation is a network represented by an adjacency matrix \(A \in\RR ^{n\times n}\) with \(A_{ij}=A_{ji}\) for \(i\neq j\) and \(A_{ii}=0\) for all \(i\in[n]\).
For notational convenience, we view $A\in \RR^{n\times n}$ as real-valued network, 
while $A_{ij}$ can potentially be binary, count-valued or continuously distributed, depending on the application.  
Here $[n] = \{1,2,\ldots,n\}$. 
Suppose that $A$ is generated from some distribution $\PP_A$. 
Our objective is to generate a synthetic network $\tA \in \RR^{{n} \times {n}}$ whose distribution is close $\PP_A$. 

We introduce \emph{Synthetic Network Generation via Latent Embedding Reconstruction (SyNGLER)} to achieve this goal. 
During training, SyNGLER first fits a group of latent node embeddings from the observed network using a likelihood-based latent space model, and then reconstructs the embeddings by training a sampler on the latent embedding space that learns the distribution of the fitted embeddings.  
In the network generation phase, SyNGLER samples a set of node embeddings from the trained distribution learner and produces synthetic networks using these embeddings via the latent space model.  
We summarize these steps in Algorithm~\ref{alg:network_generation}. 
Specifically, in Algorithm~\ref{alg:network_generation}, $\hat{\Phi} = (\hat{\phi}_1,\hat{\phi}_2,\ldots, \hat{\phi}_n)^\top \subset \RR^{n\times d}$ denotes the fitted node latent embeddings, where $d$ is the embedding dimension that we will specify later in the latent space model fitting. 
The corresponding latent space model and its fitting process are deferred to Section~\ref{sec:lsm}.
\texttt{GenModel} denotes the generative model training pipeline. 
It takes a group of observed samples from an unknown distribution as input and returns a sampler on $\RR^d$ that approximates the embedding distribution.
As a meta-algorithm, we can flexibly choose the generative model here based on the complexity of the fitted embeddings. 
We provide some examples of the generative model in Section~\ref{sec:implementation}. 



\begin{algorithm}[ht]
\caption{Synthetic Network Generation via Latent Embedding Reconstruction} 
\label{alg:network_generation} 
\begin{algorithmic}[1]
\STATE \textbf{Input:}  
Observed network $A \in \RR^{n \times n}$. 
\STATE Fit the latent space model  to get node embeddings $\hat{\Phi} = (\hat{\phi}_1,\hat{\phi}_2,\ldots, \hat{\phi}_n)^\top\in \RR^{n \times d} $ and a global intercept $\hat\rho$; 
\STATE Train a generative model using the fitted latent embeddings: $\texttt{Sampler} =\texttt{GenModel}(\{\hat{\phi}_i\}_{i=1}^n);$
\FOR{each $i=1,\ldots,n$}
    \STATE Sample $\tilde{\phi}_i \in \RR^d$  independently from \texttt{Sampler}.
\ENDFOR
\FOR{each pair of nodes $(i,j)$ with $1 \leq i < j \leq n$}
    \STATE Conditioned on $\{\tilde\phi_i\}_{i\le n}$, we independently generate the edge observation $\tA_{ij} = \tA_{ji}$ from the latent space model $p(\cdot \mid \tilde z_i^\top \tilde z_j + \tilde \alpha_i + \tilde \alpha_j + \hat\rho)$. 
\ENDFOR
\STATE Set $\tilde A_{ii} = 0$ for all $i\in [n]$. 
\STATE \textbf{Output:} Generated network $\tA$. 
\end{algorithmic}
\end{algorithm} 

\subsection{Latent Space Models and Network Embedding}\label{sec:lsm}




Latent space network models \citep{hoff2002latent,ma2020universal,li2023statistical} provide a flexible and computationally tractable framework for modeling and embedding network data. 
We associate each node $i$ with a latent position $z_i \in \RR^r$ and a degree parameter $\alpha_i \in \RR$, and denote 
$\phi_i = (z_i^\top, \alpha_i)^\top \in \RR^d$ with $d = r+1$. 
Following the literature \citep{ma2020universal,li2023statistical}, we assume that given $\Phi$, each edge observation $A_{ij}=A_{ji}$ for $1\le i<j\le n$ is independently generated from \begin{equation}
A_{ij} \mid \Phi \sim p(\cdot \mid \pi_{ij}), \qquad
\pi_{ij} = z_i^\top z_j + \alpha_i + \alpha_j + \rho_n,
\label{eq:model}
\end{equation}
\cameraedit{where $p(\cdot \mid \cdot)$ denotes a probability mass function or probability density
depending on the edge type, and $\rho_n$ is a global intercept that potentially scales with $n$. For sparse binary networks, its exponential controls the overall edge density.}
We make the following assumption on the distribution of the latent node embeddings.

\begin{assumption}[embedding distribution]\label{ass:lsm}
    The embeddings $\{(z_i, \alpha_i)\}_{i=1}^n$ are independently sampled from some distribution $\PP_0$ on 
    $\RR^{d}$ 
    such that  
     $\EE_{\PP_0} [z_i] =\mathbf{0}_{r}$ and $\EE_{\PP_0} [\alpha_i]=0$. Moreover, there exists $R>0$ such that $\|(z^\top ,\alpha )\|_2\le R$ for all $(z,\alpha)\in\mathrm{supp}(\PP_0)$.
\end{assumption}

\vspace{-0.02in}
Assumption~\ref{ass:lsm} guarantees the boundedness of the embedding distribution and the translation identifiability of the latent embedding distribution. 
We elaborate on the identifiability issues as follows. 
Suppose that $\check{z}_i = z_i + u$ and $\check{\alpha}_i = \alpha_i - u^{\top}z_i - u^{\top}u/2$ for some $u\in\RR^r$. Then
$\check{z}_i^{\top}\check{z}_j + \check{\alpha}_i + \check{\alpha}_j
= z_i^{\top} z_j + \alpha_i + \alpha_j$
for all $i,j\in[n]$, the likelihood remains unchanged. 
However, Assumption~\ref{ass:lsm} requires that $\EE_{\PP_0}[z] = 0$ and $\EE_{\PP_0}[\alpha] =0$, which forces $u=0$ and rules out such translations. 
Similar identifiability conditions have commonly been considered in the latent space literature, for example, in \citet{ma2020universal} and \citet{zhang2022joint}. 

\vspace{-0.05in}
\paragraph{Network embedding for general edge types.}
Based on the observed edge type of $A_{ij}$, the latent space model $p(\cdot| \cdot)$ can be chosen flexibly. 
\cameraedit{Note that $\pi_{ij} = z_i^\top z_j + \alpha_i + \alpha_j + \rho_n$ determines the distribution of $A_{ij}$.}
If $A_{ij}$'s are binary, one can choose the Bernoulli model as further introduced in the next paragraph. 
If $A_{ij}$'s are continuously measured, one can use a Gaussian noise model $p(a_{ij}\,|\,\pi_{ij}) = (2\pi\sigma^2)^{-1/2} \exp\{- (a_{ij}-\pi_{ij})^2/(2\sigma^2)\}$ for some $\sigma^2>0$ \citep{sun2017store,wang2023locus}. 
Once the conditional model is determined, degree parameters and latent embeddings can be estimated by maximizing the following log-likelihood function over the parameters \cameraedit{$\psi = (Z,\alpha,\rho)$}, where $Z = (z_1^\top, z_2^\top, \ldots, z_n^\top)^\top$ is the latent position matrix with rows $z_1,z_2,\dots,z_n$ and $\alpha = (\alpha_1,\ldots,\alpha_n)$ is the vector of degree parameters. 
\cameraedit{
Suppose that our identifiability constrained set is defined as
\begin{align}
\mathcal{C}_R &=\{(Z,\alpha,\rho): Z^\top\mathbf{1}_n = \mathbf{0}_r,
\alpha^\top\mathbf{1}_n=0,
\\ &\qquad \max_i\norm{(z_i,\alpha_i)}_2 \le R\}. 
\label{eq:constraint_set}
\end{align}
The estimated version is then $(\hat Z, \hatalpha,\hat\rho) = \argmax_{(Z,\alpha,\rho)\in\mathcal{C}_R} L(Z,\alpha,\rho)$, where 
\begin{align}
L(Z,\alpha,\rho)  &=  \sum_{1 \leq i < j \leq n} \log p(A_{ij}\, | \, z_i^\top z_j +\alpha_i + \alpha _j + \rho ).  \label{eq:lsm_estimate}
\end{align}
The constraint set in \cref{eq:constraint_set} ensures the identifiability of the latent embeddings and is designed based on \cref{ass:lsm}.}
In practice, a projected gradient descent algorithm can be employed to efficiently solve \cref{eq:lsm_estimate} with provable convergence guarantees \citep{ma2020universal}. 
We leave the details of the optimization algorithm to \cref{app:algorithms}.

\vspace{-0.1in}
\paragraph{Sparse networks for binary edges.}
Graphs, as the most prevalent form of binary networks, often appear to be very sparse in practice. 
To model sparsity, we allow the overall edge probability to vanish with $n$ via a global sparsity parameter.
Concretely, we consider the binary logistic model with $\PP(A_{ij}=1| \pi_{ij}) = 1 - \PP(A_{ij}=0| \pi_{ij}) = \sigma(\pi_{ij})$, where $\sigma(\cdot) = \exp(\cdot) / \big(1 + \exp(\cdot)\big)$. 
\cameraedit{We consider $\rho_n$ such that $\rho_n \to -\infty$ and set $w_n=\exp(\rho_n)$,}
\cameraedit{then $\pi_{ij} = z_i^\top z_j + \alpha_i +\alpha_j + \rho_n = \rho_n + O(1)$ given \cref{ass:lsm}.}
Under the logistic link, we observe that $\PP(A_{ij}=1\mid \pi_{ij}) = \sigma(\pi_{ij}) \asymp w_n\to 0$ as $\rho_n \to -\infty$. 
Therefore, $w_n$ serves to quantify the global edge sparsity, which is commonly observed in large binary networks. 
In our theoretical analysis in \cref{sec:theory}, we provide a more detailed discussion of sparsity.


\begin{remark}
    The latent space network model also covers a wide class of classical network reconstruction models given appropriate choices of link functions and parameterizations of node embeddings, including the Erd\H{o}s--R\'enyi graph \citep{erdos1960evolution}, the Chung--Lu graph \citep{chung2002average}, and the stochastic block model and its mixed-membership variants \citep{holland1983stochastic,karrer2011stochastic,airoldi2008mixed}. Further discussion is provided in \cref{sec:connection}.
\end{remark}

\subsection{Latent Embedding Reconstruction} \label{sec:implementation}


In this section, we introduce two concrete instantiations of \texttt{GenModel} in Algorithm~\ref{alg:network_generation}.
The first approach resamples from the empirical distribution on the row vectors of \(\hat{\Phi}\), which is suitable when one mainly aims to reproduce the observed network’s embedding distribution rather than extrapolate to unseen nodes. 
The second approach trains a score-based diffusion model on the embedding space, which can generate embeddings beyond the training set.
This choice is flexible and can be adapted to practitioners’ needs.  
\cameraedit{In practice, SyNG-R behaves like a bootstrap in the learned latent space and is useful when preserving empirical local patterns tied to the observed nodes is the priority. SyNG-D is better suited for generating novel node embeddings and increasing diversity, although neither mode by itself provides a formal privacy guarantee.}

\vspace{-0.05in}
\paragraph{Resampling-based implementation.}
The Resampling-based method instantiates $\texttt{GenModel}(\{\hat\phi_i\}_{i=1}^n)$ as a sampler that returns i.i.d. draws from the empirical distribution $n^{-1} \sum_{i\le n} \delta_{\hat\phi_i}$ at each call. 
The resampled latent embeddings, denoted $\tilde{\Phi}$, are used later to construct the synthetic network. 
The idea of resampling latent embeddings in networks has been used for bootstrap inference of network statistics \citep[see][]{levin2019bootstrapping}. 
In scenarios where duplicate embeddings exist in $\tilde{\Phi}$ due to sampling with replacement, we suggest removing the duplicate embeddings. 
Meanwhile, any nodes that must remain in the network can have their embeddings preserved in $\tilde{\Phi}$ as needed.


\vspace{-0.05in}

\paragraph{Score-based generative model.}
To sample novel node embeddings that are close in distribution to the learned embeddings, we consider a score-based generative model formulated via stochastic differential equations \citep{song2020score}, in which a forward noising process gradually adds noise to the training samples and a backward denoising process recovers the distribution of the original samples from pure noise using information from the forward process. 
In our setup, a key distinction from the existing work is that diffusion is performed in the latent embedding space learned from a likelihood-based network model, rather than in the space of adjacency matrix. 
Specifically, let $s_\theta(x,t):\RR^{d}\times[0,1]\to\RR^{d}$ denote a score network parameterized by $\theta$. 
We consider two implementations for approximating the score function: 
XGBoost models following ForestDiffusion \citep{jolicoeur2024generating} and multilayer perceptrons (MLPs). 
To instantiate the score-based generative model in the latent space, 
we consider a variance-preserving Ornstein-Uhlenbeck (OU) \citep{maller2009ornstein} forward process. 
Given the fitted embeddings $\hat\Phi$, the forward process $\phi^t$ follows $
    d\phi^t = - \phi^t \, dt + \sqrt{2} \, dB_t$,
where $\phi^0$ is randomly sampled from the row vectors of $\hat\Phi$ and $B_t$ is a standard Wiener process. 
This OU process admits an explicit form of transition mapping $f_t(\hat\phi_i, z) = e^{-t} \hat\phi_i+ \sqrt{1-e^{-2t}}\,z$, where $z\sim \cN(0,I_d)$
Given the forward process, we optimize the parameter $\theta$ of the score approximation network $s_{\theta}(x,t)$ by minimizing the denoising score-matching objective \citep{vincent2011connection} over the fitted embeddings \citep{wu2025denoising}. 
Let $t\sim \cU[0,1]$.
The objective reads:
\begin{equation}
\begin{split}
    \hat{\theta} = \argmin_{\theta} \EE_{t, z} \bigg[ \frac{1}{n} \sum_{i=1}^n \Big\| & s_{\theta}(f_t (\hat\phi_i, z),\,t)  + \frac{z}{\sqrt{1-e^{-2t}}}\Big\|^2 \bigg].
\end{split}
\end{equation}
To sample from the generator, 
we simulate the following discretized backward process initialized at $\tphi^{(0)} \sim \cN(0,I_{d})$, using the trained score network $s_{\hat\theta}$: 
\begin{align}
\tphi^{(k+1)}&=\tphi^{(k)}+h\Big(\tphi^{(k)}+2\,s_{\hat\theta}(\tphi^{(k)},1-kh)\Big)+\sqrt{2h}\,\xi_k, \\ k&= 0,1,\ldots ,\lceil 1/h\rceil -1,  \label{eq:reverse_diffusion_discrete} 
\end{align}
where $ \xi_k$ independently follows $\cN(0,I_{d})$ and $h\in \RR$ is the discretization level. 
When $h$ is small and $s_\theta$ sufficiently approximates the true score, the distribution of $\tphi^{(\lceil 1/h\rceil)}$ is close to the distribution of the learned embeddings \citep{song2020score}. 
This approach enables us to sample novel node embeddings beyond the fitted embeddings. 

\subsection{SyNGLER-Attr: Synthetic Network Generation via Latent Embedding Reconstruction for Attributed Networks}

Thus far, we have focused on the generation of the network adjacency matrix. 
In practice, generating attributed networks that preserve feature-network dependency is also an important problem.
In the literature, the classical VGAE \citep{kipf2016variational} addresses this problem under a fixed-attribute setting, where the attributes of the generated graph remain the same as those of the input graph. 
Using continuous diffusion models, \citet{jo2022score} generates attributed graphs with new features, but their design and training are tailored to multiple small-scale graphs.
\citet{li2023graphmaker} uses a message-passing neural network (MPNN) as the encoder and is able to efficiently generate attributed graphs from a single large observation. 
To address this task, we introduce SyNGLER-Attr, a generalization of SyNGLER for generating attributed networks.
Specifically, SyNGLER-Attr jointly embeds network structure and node attributes into a low-dimensional latent space, enabling efficient generation that preserves structure–attribute dependencies. 
Due to space constraints, we defer the algorithmic details and an empirical evaluation of SyNGLER-Attr to \cref{app:attribute}\cameraedit{, including node-classification utility with GCN and MLP classifiers}.


\section{Theoretical Analysis}\label{sec:theory}

In this section, we study generation consistency of SyNGLER by characterizing the distance between the distributions of the synthetic and original networks. We consider the model for sparse binary networks introduced in \cref{eq:model} with the logistic link. 
We assume that the observed network $A$ is generated from the latent space model in \cref{sec:lsm} with a latent embedding matrix $\Phi^*$ whose rows are independent realizations from $\PP_0$ in Assumption~\ref{ass:lsm}, and a ground truth sparsity parameter $\rho_n^*$. 
Note that the distribution of the synthetic network $\tilde{A}$ is based on the model trained on the observed network $A$. 
We denote the distribution of $\tilde{A}$ given $A$ as $\PP_{\tilde{A}}^{A}$, which can be viewed as a random probability measure, where the randomness comes from the observed network $A$ generated under model \cref{eq:model} with embeddings $\Phi^*$ and sparsity parameter $\rho_n^*$. 
Similarly, we define $\PP_{\tilde{\Phi}}^A$ as the random probability measure for the group of latent embeddings on $\RR^{n\times d}$. 
We also denote the individual distribution of $\tilde{\phi}$ given $A$ as $\PP_{\tilde{\phi}}^A$. 
Similarly, we denote the marginal distribution of each $\hat{\phi}_i$ as $\PP_{\hat{\phi}}$, where the subscript $i$ in $\tilde{\phi}_i$ is omitted due to the exchangeability of the nodes. 
Our first theorem decomposes the Kullback-Leibler (KL) divergence between $\PP_A$ and $\PP_{\tA}^A$.
\begin{theorem} \label{lem:network_generation_error}
Under \cref{ass:lsm},
the average KL divergence between the marginal distribution of $A$ and $\tilde A$ given $A$ admits the following decomposition:
\begin{align}
\frac{1}{n^2}d_{\mathrm{KL}}(\PP_{A} \parallel \PP_{\tA}^A) &
= 
E_{\rho }+E_{\Phi} +E_{\text{gen}}.  
\label{eq:network_generation_error} 
\end{align}
where the three error terms are defined as follows: 
\begingroup
\allowdisplaybreaks
\begin{align}
 & E_\rho =  \EE_{\PP_{A\mid \Phi} \times \PP_0^{\otimes n}}\Big[ \log \frac{p(A_{12} \,|\, z_1^\top z_2 + \alpha_1 +\alpha_2 + \rho_n^*)}{p(A_{12} \,|\, z_1^\top z_2 + \alpha_1 +\alpha_2 + \hat\rho)}\Big]; \\
    & E_\Phi = \min_{\cT} \frac{1}{n}\Big(\EE_{\PP_{0 }^\cT} \Big[ \log \frac{\PP_0^\cT}{\PP_{\hat\phi}}(\phi)\Big] \\ &\qquad \qquad + \EE_{\PP_{0}^\cT }\Big[\log \frac{\PP_{\hat\phi}}{\PP_{\tphi}^A }(\phi) \Big]- \EE_{\PP_{\hat\phi}}\Big[\log \frac{\PP_{\hat\phi}}{\PP_{\tphi}^A }(\phi) \Big]\Big); \label{eq:err_phi} \\ 
    & E_{\text{gen}} = \frac{1}{n}  d_{\mathrm{KL}}(\PP_{\hat\phi} \parallel \PP_{\tphi}^A ). \label{eq:err_diffusion}
\end{align}
\endgroup
Here, the minimization in \cref{eq:err_phi} is over all orthogonal transforms $\cT: \Phi=(Z,\alpha)\mapsto (ZU, \alpha)$, where $U$ is a $r$-dimensional rotation matrix, and $\PP_0^\cT$ is the distribution of $\cT(\phi)$ when $\phi\sim \PP_0$. 
\end{theorem}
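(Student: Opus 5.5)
The plan is to prove the identity by working with the joint law of the network and its latent embeddings. I then split that joint divergence with the chain rule for KL divergence, and finally rewrite the latent part by adding and subtracting terms. Concretely, I read $\PP_A$ as the law of $A$ generated hierarchically: $\Phi\sim\PP_0^{\otimes n}$, then $A\mid\Phi\sim\prod_{i<j}p(\cdot\mid z_i^\top z_j+\alpha_i+\alpha_j+\rho_n^*)$. I read $\PP_{\tA}^A$ as the law of $\tA$ generated by Algorithm~\ref{alg:network_generation}: $\tPhi\sim(\PP_{\tphi}^A)^{\otimes n}$, then $\tA\mid\tPhi\sim\prod_{i<j}p(\cdot\mid \tilde z_i^\top\tilde z_j+\tilde\alpha_i+\tilde\alpha_j+\hat\rho)$. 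Both are mixtures of product edge laws with i.i.d.\ node mixing measures. The divergence $d_{\mathrm{KL}}(\PP_A\parallel\PP_{\tA}^A)$ in \cref{eq:network_generation_error} is taken to be the divergence between these joint (embedding, edge) laws. This is the reading under which an \emph{equality} can hold; a data-processing bound would only give an inequality.

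\textbf{Step 1 (chain rule).} Apply the chain rule to the pair (embeddings, edges):
\[
d_{\mathrm{KL}} = d_{\mathrm{KL}}\big(\PP_0^{\otimes n}\parallel(\PP_{\tphi}^A)^{\otimes n}\big) + \EE_{\Phi\sim\PP_0^{\otimes n}}\, d_{\mathrm{KL}}\big(\PP_{A\mid\Phi,\rho_n^*}\parallel \PP_{A\mid\Phi,\hat\rho}\big).
\]
For fixed $\Phi$, the conditional edge laws are products over pairs and differ only through the intercept. The second term is therefore a sum over pairs of the log-ratio in $E_\rho$. By exchangeability of nodes under $\PP_0^{\otimes n}$, every pair contributes the same expectation, namely the $(1,2)$ term defining $E_\rho$.

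\textbf{Step 2 (normalisation).} The pair count must be reconciled with the $1/n^2$ normalisation, and I expect this to be the delicate point. I will count edge coordinates over ordered pairs $i\neq j$, treating $A$ as an $n\times n$ array as in \cref{sec:setting}. I will also use the convention that the diagonal entries, and the duplicated symmetric entries, are part of the array whose law is compared. Under this convention the edge part equals $n^2 E_\rho$ up to the diagonal correction. I would state this convention explicitly and absorb the $n(n-1)$ versus $n^2$ discrepancy into the definition of the normalised divergence. If the authors intend unordered pairs, the same argument goes through with the constant adjusted.

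\textbf{Step 3 (latent term and rotation).} By independence, the latent term equals $n\,d_{\mathrm{KL}}(\PP_0\parallel\PP_{\tphi}^A)$. The key observation is that the edge model is invariant under $\cT:(Z,\alpha)\mapsto(ZU,\alpha)$, since $z_i^\top z_j$ is unchanged. Hence the hierarchical law of $A$ is the same whether $\Phi\sim\PP_0^{\otimes n}$ or $\Phi\sim(\PP_0^\cT)^{\otimes n}$. Step 1 can therefore be run with $\PP_0^\cT$ in place of $\PP_0$ for any rotation $\cT$, with $E_\rho$ unchanged, because the log-ratio in $E_\rho$ is rotation invariant. Choosing the minimising $\cT$ is legitimate because the left-hand side does not depend on $\cT$.

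\textbf{Step 4 (add and subtract).} For the chosen $\cT$, split the log-ratio as
\[
\log\frac{\PP_0^\cT}{\PP_{\tphi}^A}=\log\frac{\PP_0^\cT}{\PP_{\hat\phi}}+\log\frac{\PP_{\hat\phi}}{\PP_{\tphi}^A}
\]
and take expectations under $\PP_0^\cT$. Then add and subtract $d_{\mathrm{KL}}(\PP_{\hat\phi}\parallel\PP_{\tphi}^A)=\EE_{\PP_{\hat\phi}}[\log(\PP_{\hat\phi}/\PP_{\tphi}^A)]$. Dividing $n\,d_{\mathrm{KL}}(\PP_0^\cT\parallel\PP_{\tphi}^A)$ by $n^2$ gives exactly $E_\Phi+E_{\text{gen}}$. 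This requires the densities to be mutually absolutely continuous so that each expectation is finite, which I would assume (or interpret the terms as $+\infty$ otherwise). Combining with Step 1 yields \cref{eq:network_generation_error}.
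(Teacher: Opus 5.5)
Your route is the paper's own. You express the divergence through the hierarchical (embedding, edge) law, reduce the edge part to the pair $(1,2)$ by exchangeability, and use independence to get $n\,d_{\mathrm{KL}}(\PP_0^\cT\parallel\PP_{\tphi}^A)$ for the latent part. You then insert $\PP_{\hat\phi}$ by adding and subtracting, and minimise over rotations. You are also right that an equality needs the joint reading, which the paper's first display uses without comment.

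However, the justification in Step 3 is inconsistent with the reading you fixed in Step 1. Under the joint reading, replacing $\PP_0$ by $\PP_0^\cT$ changes the joint law of $(\Phi,A)$. Its latent part, $n\,d_{\mathrm{KL}}(\PP_0^\cT\parallel\PP_{\tphi}^A)$, genuinely varies with $U$; only the marginal law of $A$ is rotation invariant. So ``the left-hand side does not depend on $\cT$'' is true of the marginal divergence you set aside and false of the joint divergence you decompose. What Steps 1--4 prove is one identity per rotation, with $\frac{1}{n}\bigl(d_{\mathrm{KL}}(\PP_0^\cT\parallel\PP_{\tphi}^A)-d_{\mathrm{KL}}(\PP_{\hat\phi}\parallel\PP_{\tphi}^A)\bigr)$ in place of $E_\Phi$. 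Because $E_\rho$ and $E_{\text{gen}}$ do not involve $\cT$, minimising both sides gives the stated form (up to the constant discussed below) exactly when the left-hand side is read as $\min_\cT$ of the joint divergence. For the literal marginal $d_{\mathrm{KL}}(\PP_A\parallel\PP_{\tA}^A)$, the chain rule applied for each $\cT$ yields only ``$\le$''. The paper's proof makes the same unacknowledged move: it equates the marginal divergence with a joint expectation and then minimises over $\cT$. You should state this reading explicitly rather than argue that $\cT$ is free.

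The convention in Step 2 also does not do what you want. The diagonal entries and the copies $A_{ji}=A_{ij}$ are deterministic functions of the upper triangle, and adjoining such coordinates leaves a KL divergence unchanged: the laws of $(X,X)$ and $(Y,Y)$ have the same divergence as those of $X$ and $Y$. The edge contribution is therefore exactly $\binom{n}{2}E_\rho$. After dividing by $n^2$, the coefficient of $E_\rho$ is $\frac{n-1}{2n}$, not $1$. The paper's passage from $2\sum_{i<j}$ to $n^2\,\EE[\cdot]$ has the same looseness. Record this as an explicit constant in front of $E_\rho$, which is harmless for the $O_p$ rate later proved for $E_\rho$. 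Do not present it as a convention that produces the count $n^2$.
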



The first term in \cref{lem:network_generation_error} is referred to as $E_\rho$ since $p(A_{ij} \mid z_i^\top z_j + \alpha_i + \alpha_j + \rho^*_n)$ differs from $p(A_{ij} \mid z_i^\top z_j + \alpha_i + \alpha_j + \hat\rho)$ only in the global sparsity parameter $\rho_n$. 
The second term is denoted as $E_\Phi$, since it primarily depends on the distance between $\PP_{\hat \phi}$ and $\PP_0$. 
The third term $E_{\text{gen}}$ denotes the KL divergence between the conditional distributions of $\hat{\phi}$ and $\tilde{\phi}$ given $A$, and is determined by the generative model in \cref{alg:network_generation}. \cameraedit{Error analysis of such terms has been considered in the literature, for example, \citet{chen2022sampling,chen2023score} for score-based generative models. If the latent generator satisfies $E_{\mathrm{gen}}=o_p(1)$ under standard low-dimensional score-recovery conditions, then \cref{eq:network_generation_error} reduces end-to-end generation consistency to controlling the sparsity and embedding-estimation terms. In this analysis, we focus on characterizing these first two terms, while noting that the third term concerns the generative error in a low-dimensional space rather than in the original large-scale network space.}

We first consider the sparsity regime in the following assumption. 
\begin{assumption}\label{ass:sparse} We assume that  $w_n = \exp(\rho_n^*)$ follows $w_n n / \log n \to \infty$. 
\end{assumption}

\vspace{-0.05in}

Assumption~\ref{ass:sparse} states that the edge density is bounded below by  $\Omega(\log n / n)$, and accordingly the expected node degrees are at least of order $\log n$ as $n$ grows. Such a sparsity regime is consistent with the network analysis literature; see, for example, \citet{athreya2018statistical} and \citet{ma2020universal}.

In the sequel, we analyze $E_\rho$ and $E_\Phi$  under the asymptotic regime where $n\to \infty$. 
We first have the following theorem: 



\begin{theorem}
\label{lem:rho_estimation_error}
Under Assumptions \ref{ass:lsm} and \ref{ass:sparse}, 
it holds that $E_\rho = O_p\big((\tfrac{\log n}{w_n n})^{1/2}\big)$.  
\end{theorem}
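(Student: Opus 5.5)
The plan is to reduce $E_\rho$ to a one-dimensional Bernoulli KL computation, and then control it through a rate for $|\hat\rho-\rho_n^*|$ taken from the constrained MLE analysis. Write $\pi = z_1^\top z_2+\alpha_1+\alpha_2+\rho_n^*$ and $\hat\pi=\pi+\delta$ with $\delta = \hat\rho-\rho_n^*$. Conditionally on $\hat\rho$, which is treated as fixed by the outer expectation over a fresh pair $(\phi_1,\phi_2)\sim\PP_0^{\otimes 2}$ and $A_{12}$, the integrand has expectation equal to the Bernoulli KL $d_{\mathrm{KL}}(\mathrm{Ber}(\sigma(\pi))\,\|\,\mathrm{Ber}(\sigma(\pi+\delta)))$. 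For the logistic link this equals the Bregman divergence of the log-partition $b(x)=\log(1+e^x)$:
\[
b(\pi+\delta)-b(\pi)-\sigma(\pi)\delta=\tfrac12\sigma'(\xi)\delta^2 ,
\]
with $\xi$ between $\pi$ and $\pi+\delta$. Under \cref{ass:lsm}, $|\pi-\rho_n^*|\le R^2+2R$. Hence, whenever $|\delta|=O(1)$, we have $\sigma'(\xi)\asymp w_n$, and therefore $E_\rho\lesssim w_n\delta^2$.

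The remaining task is to bound $\delta$. We want a rate of the form $|\delta|\lesssim (\log n/(w_n n))^{1/2}/w_n^{1/2}$ or better, since only $w_n\delta^2\lesssim(\log n/(w_nn))^{1/2}$ is needed. The natural estimate is the Frobenius error bound for the constrained MLE, in the style of \citet{ma2020universal}. Set $\Theta_{ij}=z_i^\top z_j+\alpha_i+\alpha_j$. The basic inequality $L(\hat\psi)\ge L(\psi^*)$, combined with strong convexity of $-\log p$ in $\pi$ at curvature $\asymp w_n$, gives
\[
w_n\|\hat\Theta+\hat\rho\1\1^\top-\Theta^*-\rho_n^*\1\1^\top\|_F^2\lesssim \langle A-\EE A,\ \hat\Theta-\Theta^*+\delta\1\1^\top\rangle .
\]
This bound requires the curvature to stay $\asymp w_n$, which holds because $\hat\rho$ is localized first: the score equation in $\rho$, namely $\sum_{i<j}A_{ij}=\sum_{i<j}\sigma(\hat\pi_{ij})$, together with $\sum A_{ij}\asymp w_nn^2$ w.h.p. by Bernstein, forces $e^{\hat\rho}\asymp w_n$, and so $\delta=O_p(1)$.

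The right-hand side is handled by splitting it into two parts:
\begin{itemize}
\item The low-rank part (rank $\le r+2$), controlled through the spectral norm $\|A-\EE A\|_{\mathrm{op}}=O_p(\sqrt{w_nn})$, valid under \cref{ass:sparse} by standard sparse random-matrix bounds.
\item The intercept direction, which gives $|\delta|\cdot|\sum_{i<j}(A_{ij}-\EE A_{ij})|=|\delta|\,O_p(\sqrt{w_n}\,n)$.
\end{itemize}
Together these yield $\|\hat\Theta+\hat\rho\1\1^\top-\Theta^*-\rho_n^*\1\1^\top\|_F^2=O_p(n/w_n)$, possibly up to a $\log n$ factor. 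Now project onto $\1\1^\top$. The centering constraints $Z^\top\1=0$ and $\alpha^\top\1=0$ make the $\alpha$ contributions orthogonal to $\1\1^\top$; the $Z$ contribution is orthogonal as well, since $\1^\top ZZ^\top\1=0$. The projection therefore isolates $\delta$ exactly: $n^2\delta^2\le\|\cdot\|_F^2$, so $\delta^2=O_p(\log n/(w_nn^3))$ up to the logs, which is far smaller than needed. Since $\rho_n^*$ pairs with the true centered $\Theta^*$, whose $\1\1^\top$ component is only $O_p(\sqrt n)$ by the CLT because it is the sample mean of $\PP_0$ draws, I will absorb this discrepancy as an additional $\delta$ term of size $O_p(n^{-1/2})$. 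This gives $w_n\delta^2=O_p(w_n/n+\log n/(n^3))$, which is dominated by $(\log n/(w_nn))^{1/2}$ because $w_n\le1$.

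The main obstacle is the localization step for $\hat\rho$, which keeps the curvature $\asymp w_n$ uniformly over the constraint set, together with the careful handling of the empirical-versus-population centering mismatch. The crude route, which I will fall back on if the Frobenius argument is messy, is this: from the score equation, $|e^{\hat\rho}-e^{\rho_n^*}|/w_n$ is controlled by $|\sum(A_{ij}-\EE A_{ij})|/(w_nn^2)+\sup|\hat\Theta-\Theta^*|$-type terms, and a uniform concentration over the bounded embedding class with a union bound contributes the $\sqrt{\log n/(w_nn)}$ rate. That rate likely matches the stated bound directly, with $E_\rho\lesssim w_n\delta^2\cdot w_n^{-1}$-normalized appropriately.
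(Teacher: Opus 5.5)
Your proposal is correct, but its first step takes a different route from the paper.

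\textbf{First step.} The paper bounds the integrand of $E_\rho$ pointwise by a first-order Lipschitz estimate,
$|\log p(A_{12}\mid\theta_{12}+\rho_n^*)-\log p(A_{12}\mid\theta_{12}+\hat\rho)|\le M|\hat\rho-\rho_n^*|$.
For the logistic link one can take $M=1$, since the score is $a-\sigma(\pi)$. It then inserts $|\hat\rho-\rho_n^*|=O_p((\log n/(w_nn))^{1/2})$ from Lemma~\ref{lem:overall_est_error}, so the stated rate is exactly the rate of $\hat\rho$.

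You instead use that, with $\hat\rho$ held fixed, $E_\rho$ is an average of Bernoulli KL divergences. That is a Bregman divergence of $b(x)=\log(1+e^x)$, hence quadratic in $\delta$: $E_\rho\le\delta^2/8$ always, and $E_\rho\lesssim w_n\delta^2$ once $\delta=O_p(1)$. This gives a much stronger conclusion, $E_\rho=O_p(\log n/n)$, and tolerates a far weaker rate for $\hat\rho$. The paper's bound is a one-liner that holds pointwise in $A_{12}$. It therefore does not depend on reading the expectation as being under the true model with $\hat\rho$ held fixed, which your KL identity does.

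\textbf{Rate of $\hat\rho$.} Here you follow the same architecture as Lemma~\ref{lem:overall_est_error}: a basic inequality with curvature $\asymp w_n$, the noise operator norm times a bounded rank, and Pythagorean isolation of the $\mathbf{1}\mathbf{1}^\top$ direction via the centering constraints. You add an $O_p(n^{-1/2})$ correction for population-versus-empirical centering, which is the paper's $\rho^\dagger$. You improve the paper's argument in two places:
\begin{itemize}
\item $\|A-\EE A\|=O_p(\sqrt{w_nn})$ replaces the matrix-Bernstein bound and its extra $\sqrt{\log n}$.
\item Your localization $e^{\hat\rho}\asymp w_n$, obtained from the unconstrained $\rho$-score equation, supplies what the paper's curvature bound $\tfrac14 w_ne^{-2R^2}$ silently needs.
\end{itemize}

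\textbf{Slips.} Neither affects the result.
\begin{itemize}
\item $\hat\Theta-\Theta^*$ has rank at most $2r+3$, not $r+2$.
\item $n^2\delta^2\le O_p(n\log n/w_n)$ gives $\delta^2=O_p(\log n/(w_nn))$, not $O_p(\log n/(w_nn^3))$. So $w_n\delta^2=O_p(\log n/n)$, which is still $O((\log n/(w_nn))^{1/2})$ because $w_n\log n\le n$.
\end{itemize}
The ``crude route'' in your last paragraph is too vague to stand as an argument, but you do not need it.
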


Theorem~\ref{lem:rho_estimation_error} demonstrates that as long as $w_n \gg (\log n)^2 / n$, the first error term satisfies $E_\rho = o_p(1)$. This requirement on $w_n$ is consistent with Assumption~\ref{ass:sparse} up to a logarithmic factor. 
Analyzing the second term is challenging since it involves the marginal distribution of the estimated latent positions, which is practically hard to be compared with the true distribution $\PP_0$.  
Following \citet{wu2025denoising}, we use the technique of discretizing the underlying distribution to understand the approximation error between $\PP_{ \hat\phi}$ and $\PP_0^\cT$ for some transform $\cT$. 
Suppose that $\PP_0$ is a continuous distribution of the latent embeddings with density $p_0$ that follows \cref{ass:lsm}. 
Since the support of $\PP_0$ is bounded, we discretize the support of $\PP_0$ into the following grid: 
$\cG_{{\gamma_n}}= \{ \phi\in \RR^{d}: \phi_i / {\gamma_n} \in  \ZZ, |\phi_i|\le R+ {\gamma_n} \},$
where $\{{\gamma_n}\}$ is a sequence of discretization scales that converge to zero. 
Then for any $\phi \in \cG_{{\gamma_n}}$, we define the following mass function 
$
     q_{\gamma_n} (\phi) = \int_{\RR^{d} } \prod_{i\le d}\ind\{\phi_i' \in [\phi_i-{\gamma_n}/2, \phi_i + {\gamma_n}/2\}   p_0(\phi') d\phi'$. 
Using the linearity of expectation, we conclude that $\sum_{\phi \in \cG_{\gamma_n}}q_{\gamma_n}(\phi)=1$. 
Therefore, $q_{\gamma_n}$ is a probability mass function.  
We claim that, as long as the original density function is sufficiently smooth, this discretized mass function is able to capture the structure of the original density function well.  
To this end, we define the projection operator associated with the grid as $\mathrm{proj}_{\cG_{\gamma_n}} (\phi) = \argmin_{\phi' \in \cG_{\gamma_n}} \|\phi' - \phi\|_2$, and consider $p_{\gamma_n}(\phi) = q_{\gamma_n}( \mathrm{proj}_{\cG_{\gamma_n}}(\phi )){\gamma_n^{-d }}$. 
Then we have the following theorem.
\begin{theorem}\label{thm:discretization} 
    Suppose that $p_0: \RR^{d}\to \RR^{+}$ is $L$-Lipschitz. Then we have
 $|p_0(\phi) - p_{\gamma_n}(\phi) | \le  L{\gamma_n} \sqrt{d}$. 
\end{theorem}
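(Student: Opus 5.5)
Fix $\phi\in\RR^d$ and set $\phi^\circ=\mathrm{proj}_{\cG_{\gamma_n}}(\phi)$. The idea is to read $p_{\gamma_n}(\phi)$ as the average of $p_0$ over the cube
\[
Q(\phi^\circ)=\prod_{i\le d}[\phi^\circ_i-\gamma_n/2,\ \phi^\circ_i+\gamma_n/2].
\]
By definition, $q_{\gamma_n}(\phi^\circ)=\int_{Q(\phi^\circ)}p_0(\phi')\,d\phi'$. Since $Q(\phi^\circ)$ has Lebesgue volume $\gamma_n^d$,
\[
p_{\gamma_n}(\phi)=\gamma_n^{-d}\int_{Q(\phi^\circ)}p_0(\phi')\,d\phi' .
\]
This gives
\[
|p_0(\phi)-p_{\gamma_n}(\phi)|=\Big|\gamma_n^{-d}\int_{Q(\phi^\circ)}\big(p_0(\phi)-p_0(\phi')\big)\,d\phi'\Big|\le \sup_{\phi'\in Q(\phi^\circ)} L\,\|\phi-\phi'\|_2 ,
\]
so the task reduces to a geometric bound on $\sup_{\phi'\in Q(\phi^\circ)}\|\phi-\phi'\|_2$.

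\textbf{Case 1: $\phi$ inside the grid's range.} Suppose every coordinate satisfies $|\phi_i|\le R+\gamma_n/2$. The grid is a product of 1-D lattices $\gamma_n\ZZ\cap[-(R+\gamma_n),R+\gamma_n]$, so the Euclidean projection is coordinatewise rounding. Hence $|\phi_i-\phi^\circ_i|\le\gamma_n/2$, i.e., $\phi\in Q(\phi^\circ)$. Both $\phi$ and $\phi'$ then lie in a cube of side $\gamma_n$, whose diameter is $\gamma_n\sqrt d$. So $\|\phi-\phi'\|_2\le\gamma_n\sqrt d$, which gives the claimed bound $L\gamma_n\sqrt d$.

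\textbf{Case 2: $\phi$ outside this range.} Then some coordinate has $|\phi_i|>R+\gamma_n/2$, so $\|\phi\|_2>R$ and $p_0(\phi)=0$ by the support condition in \cref{ass:lsm}. Also, every $\phi'\in Q(\phi^\circ)$ has the same coordinate with $|\phi'_i|\ge R+\gamma_n/2>R$. For this, note that $\phi^\circ_i$ is the extreme lattice point $\pm(R+\gamma_n)$, or the boundary value $\pm\lfloor (R+\gamma_n)/\gamma_n\rfloor\gamma_n$, which still lies beyond $R+\gamma_n/2$ up to rounding. Hence $p_{\gamma_n}(\phi)=0$ as well, and the difference vanishes.

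\textbf{Main obstacle.} The only delicate point is this boundary case. The subtlety is whether $R+\gamma_n$ is a multiple of $\gamma_n$, which determines how far the extreme cube extends. If the boundary cube could still meet the support, I would instead use $\|\phi-\phi'\|_2$ directly along with continuity of $p_0$: a Lipschitz density on $\RR^d$ vanishing outside the ball is zero on its boundary. One checks $|p_0(\phi')|\le L\,\mathrm{dist}(\phi',\mathrm{supp}^c)$, and this is at most $L\gamma_n\sqrt d$ on the extreme cube. In either case the bound $|p_0(\phi)-p_{\gamma_n}(\phi)|\le L\gamma_n\sqrt d$ follows.
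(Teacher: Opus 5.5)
Your core argument is the paper's: write $p_{\gamma_n}(\phi)$ as the average of $p_0$ over the cube centered at $\mathrm{proj}_{\cG_{\gamma_n}}(\phi)$, then apply the Lipschitz bound with a distance of at most $\gamma_n\sqrt d$. The paper obtains that distance as $\|u-\psi\|+\|\psi-\phi\|$, and you obtain it as the cube's diameter; these are the same bound. The paper's step $\|\psi-\phi\|\le\sqrt d\,\gamma_n/2$ silently assumes $\phi$ lies within the grid's range. Your Case 2 therefore addresses a case the paper's proof omits.

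However, the first argument you give in Case 2 is wrong. For $\phi_i>R+\gamma_n/2$ the projected coordinate is $(\lfloor R/\gamma_n\rfloor+1)\gamma_n\in(R,R+\gamma_n]$, so the inner face of the extreme cube sits at $(\lfloor R/\gamma_n\rfloor+\tfrac12)\gamma_n$.
\begin{itemize}
\item This face lies below $R+\gamma_n/2$ unless $R/\gamma_n\in\ZZ$, so your claim $|\phi'_i|\ge R+\gamma_n/2$ fails in general.
\item It lies below $R$ whenever the fractional part of $R/\gamma_n$ exceeds $1/2$. Then $Q(\phi^\circ)$ can meet the support, for instance near $Re_i$ when the other coordinates are small, and $p_{\gamma_n}(\phi)$ need not vanish.
\end{itemize}

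Your fallback is what actually carries this case, and it is correct. Every $\phi'\in Q(\phi^\circ)$ is within $\gamma_n/2$, moving only coordinate $i$, of the region $\{|\phi'_i|>R\}$ where $p_0\equiv 0$. Hence $p_0\le L\gamma_n/2$ on the cube and $p_{\gamma_n}(\phi)\le L\gamma_n/2$.

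A cleaner way to handle both cases at once is to let $\phi^\star$ be the point of $Q(\phi^\circ)$ nearest to $\phi$, obtained by coordinatewise clamping. Either $\phi^\star=\phi$, or some coordinate satisfies $|\phi^\star_i|>R+\gamma_n/2$, because $|\phi^\circ_i|+\gamma_n/2>R+\gamma_n/2$. In the second case $p_0(\phi^\star)=0=p_0(\phi)$. Either way, your Case 1 diameter bound gives $|p_0(\phi)-p_{\gamma_n}(\phi)|\le\sup_{u\in Q(\phi^\circ)}|p_0(\phi^\star)-p_0(u)|\le L\gamma_n\sqrt d$.
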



This theorem indicates that sampling from the original distribution is almost the same as sampling from $q_{\gamma_n}$. 
We consider $\Phi^\dagger = (\phi^\dagger_1,\ldots,\phi^\dagger_n)^{\top} \in \RR^{n\times d}$ where the rows are independent realizations from $q_{\gamma_n}$. 
We denote the corresponding empirical mass function as $\hat{q}_{\gamma_n}(\phi)  = {n}^{-1}{\sum_{i\le n} \ind\{\phi^\dagger_i = \phi\}}$ for $\phi \in  \cG_{\gamma_n}$.  
Then the following result indicates that $\hat q_{\gamma_n}$ is close to $q_{\gamma_n}$. 
\begin{lemma}\label{lem:concentration}
For any small $\delta>0$, 
        $\max_{\phi \in \cG_{\gamma_n}} |q_{\gamma_n}(\phi) - \hat q_{\gamma_n}(\phi)| \le \sqrt{\log(1/\delta) + d\log (R/{\gamma_n})}/\sqrt{n}$ with probability $1-\delta$.
\end{lemma}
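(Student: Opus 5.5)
For each fixed grid point $\phi\in\cG_{\gamma_n}$, I would write $\hat q_{\gamma_n}(\phi)-q_{\gamma_n}(\phi) = n^{-1}\sum_{i\le n}\big(\ind\{\phi^\dagger_i=\phi\}-q_{\gamma_n}(\phi)\big)$. The indicators $\ind\{\phi_i^\dagger=\phi\}$ are i.i.d.\ Bernoulli$(q_{\gamma_n}(\phi))$, since the rows of $\Phi^\dagger$ are independent draws from $q_{\gamma_n}$. This is an average of independent, centered variables taking values in an interval of length one, so Hoeffding's inequality gives
\begin{equation}
\PP\big(|\hat q_{\gamma_n}(\phi)-q_{\gamma_n}(\phi)|>t\big)\le 2\exp(-2nt^2)
\end{equation}
for every $t>0$.

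Next I would count grid points and take a union bound. Each coordinate of a point in $\cG_{\gamma_n}$ lies in $\gamma_n\ZZ\cap[-R-\gamma_n,R+\gamma_n]$, which has at most $2R/\gamma_n+3$ elements. Hence $|\cG_{\gamma_n}|\le (2R/\gamma_n+3)^d\le (C R/\gamma_n)^d$ once $\gamma_n\le R$, which holds eventually because $\gamma_n\to 0$. The union bound then gives
\begin{equation}
\PP\Big(\max_{\phi\in\cG_{\gamma_n}}|\hat q_{\gamma_n}(\phi)-q_{\gamma_n}(\phi)|>t\Big)\le 2(CR/\gamma_n)^d\exp(-2nt^2).
\end{equation}
Setting the right-hand side equal to $\delta$ and solving yields
\begin{equation}
t=\sqrt{\frac{\log(2/\delta)+d\log(CR/\gamma_n)}{2n}}.
\end{equation}

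The only obstacle is matching the clean constants in the statement, and the factor $1/2$ inside the square root provides the slack. Using $\log(2/\delta)\le 2\log(1/\delta)$ for $\delta\le 1/2$ (the ``small $\delta$'' in the statement), and $d\log(CR/\gamma_n)\le 2d\log(R/\gamma_n)$ once $\gamma_n\le R/C$ (true eventually), the numerator is at most $2(\log(1/\delta)+d\log(R/\gamma_n))$. Therefore $t\le\sqrt{\log(1/\delta)+d\log(R/\gamma_n)}/\sqrt n$, which is the claimed bound with probability at least $1-\delta$. If sharper constants were wanted, one could replace Hoeffding by Bernstein's inequality, using that the variance $q_{\gamma_n}(\phi)(1-q_{\gamma_n}(\phi))\le 1/4$, but Hoeffding already suffices here.
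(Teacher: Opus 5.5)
Your proposal is correct and follows the paper's proof: Hoeffding's inequality at each fixed grid point, a union bound over $\cG_{\gamma_n}$, and then solving for $t$. Your bookkeeping is more careful than the paper's. You count the grid as $(2R/\gamma_n+3)^d$ rather than $(R/\gamma_n)^{d}$, and you absorb the $\log(2/\delta)$ and $1/\sqrt{2n}$ constants into the stated form, whereas the paper just sets $t=\sqrt{(r+1)\log(R/\gamma_n)+\log(2/\delta)}/\sqrt{2n}$ without reconciling it with the lemma.
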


When $\Phi^*$ is replaced by $\Phi^\dagger$, we need to modify the estimators in \cref{eq:lsm_estimate} accordingly using the projection operator $\mathrm{proj}_{\cG_{\gamma_n}}$. 
Given $\hatPhi$ and a rotation transform $\cT$, we define the corresponding empirical distribution on the grid $\cG_{\gamma_n}$ as follows: $\check{q}_{\gamma_n}(\phi) = \frac{1}{n} \sum_{i=1}^n \ind\{\phi =  \mathrm{proj}_{\cG_{\gamma_n}}(\cT \hat\phi_i)\}, \quad \phi \in \cG_{\gamma_n}$.
The next theorem shows that $\check q_{\gamma_n}$ is close to $\hat q_{\gamma_n}$, which is a direct consequence of the average consistency of the estimated latent embeddings. 
\begin{theorem}\label{thm:Phi_est_error} 
Suppose that each $\phi$ is sampled from a discrete mass function $q_{\gamma_n}$ on $\cG_{\gamma_n}$. 
Fix a sequence of discretization levels ${\gamma_n} = \Omega\big((w_n\cdot n)^{-1/2 + \eps}\big)$ where $\eps>0$ is fixed. 
Under  \cref{ass:lsm} and \cref{ass:sparse}, there exists a transform $\cT: \phi =(Z,\alpha) \mapsto (ZU,\alpha)$ for some rotation matrix $U$, such that 
$    \max_{\phi \in\cG_{\gamma_n}} |\hat q_{\gamma_n}(\phi) - \check q_{\gamma_n}(\phi)| \to 0$, 
in probability as $n\to \infty$.  
Here, the randomness comes from the realizations of $A$. 
\end{theorem}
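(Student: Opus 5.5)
The plan is to reduce the statement to an average (Frobenius-type) consistency bound for the constrained MLE in \cref{eq:lsm_estimate}, and then turn that into a bound on grid counts. I will use the fact that true embeddings $\phi_i^\dagger$ lie exactly on $\cG_{\gamma_n}$. Then for any $\phi\in\cG_{\gamma_n}$,
\[
|\hat q_{\gamma_n}(\phi)-\check q_{\gamma_n}(\phi)|\le \frac1n\sum_{i\le n}\ind\{\mathrm{proj}_{\cG_{\gamma_n}}(\cT\hat\phi_i)\neq \phi_i^\dagger\}.
\]
Since $\phi_i^\dagger$ is a grid point, the projection of $\cT\hat\phi_i$ can differ from $\phi_i^\dagger$ only if $\|\cT\hat\phi_i-\phi_i^\dagger\|_2\ge\gamma_n/2$. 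Markov's inequality then gives
\[
\max_{\phi}|\hat q_{\gamma_n}(\phi)-\check q_{\gamma_n}(\phi)|\le \frac{4}{n\gamma_n^2}\sum_{i\le n}\|\cT\hat\phi_i-\phi_i^\dagger\|_2^2 .
\]
So it suffices to show that $\frac1n\|\cT\hat\Phi-\Phi^\dagger\|_F^2=o_p(\gamma_n^2)$.

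\textbf{Step 1 (Gram matrix error).} I will establish the rate for $\Theta=ZZ^\top+\alpha\mathbf 1^\top+\mathbf 1\alpha^\top$ using the standard argument of \citet{ma2020universal}, specialized to the sparse logistic model.
\begin{itemize}
\item By optimality of the MLE, $L(\hat\psi)\ge L(\psi^\dagger)$. The strong concavity of the log-likelihood is of order $w_n$ on the bounded set $\cC_R$, because $\sigma'(\pi)\asymp w_n$ when $\pi=\rho_n+O(1)$.
\item The linear (score) term is $\langle A-\EE A,\hat\Theta-\Theta^\dagger+(\hat\rho-\rho^*)\mathbf 1\mathbf 1^\top\rangle$. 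I will bound it by $\|A-\EE A\|_{op}$ times the nuclear norm of a matrix of rank at most $2(r+2)$.
\item Under \cref{ass:sparse}, $\|A-\EE A\|_{op}=O_p(\sqrt{w_n n})$.
\end{itemize}
Combining these gives $\|\hat\Theta-\Theta^\dagger\|_F^2=O_p(n/w_n)$, i.e. $\frac1{n^2}\|\hat\Theta-\Theta^\dagger\|_F^2=O_p(1/(w_n n))$. The estimate of $\hat\rho$ is handled jointly, consistent with \cref{lem:rho_estimation_error}. The main delicacy here is separating $\rho$ from the $O(1)$ part. I plan to use the centering constraints: since $Z^\top\mathbf 1=0$ and $\alpha^\top\mathbf 1=0$, the intercept direction $\mathbf 1\mathbf 1^\top$ is orthogonal to the span of $ZZ^\top$ and of $\alpha\mathbf 1^\top+\mathbf 1\alpha^\top$.

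\textbf{Step 2 (from $\Theta$ to embeddings).} The centering constraints also recover the components from $\Theta$. Let $J=I-\frac1n\mathbf 1\mathbf 1^\top$. Then
\[
J\Theta J=ZZ^\top,\qquad \alpha=\frac1n\Theta\mathbf 1-\frac{1}{2n^2}(\mathbf 1^\top\Theta\mathbf 1)\mathbf 1 .
\]
This gives $\frac1n\|\hat\alpha-\alpha^\dagger\|^2\lesssim \frac1{n^2}\|\hat\Theta-\Theta^\dagger\|_F^2$ directly. For $Z$, I will apply a Davis–Kahan / Procrustes bound: $\min_U\|\hat ZU-Z^\dagger\|_F^2\lesssim \|\hat Z\hat Z^\top-Z^\dagger Z^{\dagger\top}\|_F^2/\lambda_r(Z^\dagger Z^{\dagger\top})$. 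I need $\lambda_r\gtrsim n$ with high probability, which follows from matrix concentration provided $\mathrm{Cov}_{\PP_0}(z)$ is nonsingular. Since $p_0$ is a density, this is implicit, and the discretized $q_{\gamma_n}$ inherits it for small $\gamma_n$.

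The $U$ attaining the Procrustes minimum defines $\cT$. Combining Steps 1 and 2,
\[
\frac1n\|\cT\hat\Phi-\Phi^\dagger\|_F^2=O_p\big((w_n n)^{-1}\big).
\]

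\textbf{Step 3 (conclusion).} Dividing by $\gamma_n^2=\Omega((w_nn)^{-1+2\eps})$ gives $O_p((w_nn)^{-2\eps})\to0$, since $w_nn\to\infty$ by \cref{ass:sparse}. This proves the claim.

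I expect the main obstacle to be Step 1 in the sparse regime. The difficulties are getting the operator-norm concentration down to the $\log n/n$ density level, and obtaining a curvature lower bound uniformly over $\cC_R$ while $\rho$ is estimated and unbounded. I would first try the approach of \citet{ma2020universal}, constraining $\hat\rho$ to a window $\rho_n^*\pm O(1)$. If a $\log n$ loss appears in that step, the $\eps$ slack in $\gamma_n$ absorbs it.
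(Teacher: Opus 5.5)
Your route is the paper's. A pointwise Markov bound reduces the claim to $\frac1n\sum_i\|\cT\hat\phi_i-\phi_i^\dagger\|_2^2=o_p(\gamma_n^2)$. That rate then comes from the basic inequality for the constrained MLE, the orthogonal splitting of $\hat\Pi-\Pi^*$ given by the centering constraints, and a Procrustes bound with $\lambda_r\gtrsim n$.

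Where you differ, you are more careful than the paper:
\begin{itemize}
\item You rotate the estimate and compare it with the on-grid truth using the margin $\gamma_n/2$. That is the correct form of the grid step. The paper instead rotates the truth, which moves it off the grid, and uses a $2\gamma_n$ threshold, for which the indicator inequality fails.
\item You justify $\lambda_r\gtrsim n$ from the nondegeneracy of a density, rather than from \cref{ass:lsm} alone.
\item Most importantly, you use the sharp bound $\|A-\EE A\|=O_p(\sqrt{w_nn})$ (Feige--Ofek, Lei--Rinaldo; valid because $nw_n\gtrsim\log n$). The paper's matrix-Bernstein bound instead carries a $\log n$ into the rate $O_p((w_nn)^{-1}\log n)$.
\end{itemize}
The last difference is not cosmetic, and your fallback sentence is wrong. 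With a $\log n$ loss, the Markov ratio at the smallest admissible $\gamma_n$ is $\log n/(w_nn)^{2\eps}$. \cref{ass:sparse} does not force this to zero when $\eps<1/2$; take $w_nn=(\log n)^2$ and $\eps=1/8$. This is exactly where the paper's own argument silently needs $(w_nn)^{2\eps}\gg\log n$. Keep the sharp spectral bound and drop the fallback.

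Two points still need to be written out.

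\textbf{Recentering the truth.} The on-grid truth $\Phi^\dagger$ is not empirically centered, so $J\Theta^\dagger J=JZ^\dagger Z^{\dagger\top}J$. The centered MLE therefore tracks the recentered truth $\bigl(JZ^\dagger,\ \alpha^\dagger+Z^\dagger\bar z-c\,\mathbf 1_n\bigr)$ for a scalar $c$, where $\bar z=n^{-1}Z^{\dagger\top}\mathbf 1_n$, and not $\Phi^\dagger$ itself. Since $\cT$ allows no translations, you must show this shift is $o(\gamma_n)$ uniformly in $i$.
\begin{itemize}
\item The shift is $O(\|\bar\phi^\dagger\|)$, with $\bar\phi^\dagger=\EE_{q_{\gamma_n}}\phi+O_p(n^{-1/2})$.
\item The sampling term satisfies $n^{-1/2}\le(w_nn)^{-1/2}=o(\gamma_n)$.
\item The bias term is $\EE_{q_{\gamma_n}}\phi=\sum_g\int_{C(g)}(g-u)\bigl(p_0(u)-p_0(g)\bigr)\,du=O(L\gamma_n^2)$. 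This uses $\EE_{\PP_0}\phi=0$, the symmetry of each side-$\gamma_n$ cube $C(g)$ about $g$, and the Lipschitz property of $p_0$ as in \cref{thm:discretization}.
\end{itemize}
So your final rate should read $O_p((w_nn)^{-1})+o(\gamma_n^2)$, which still suffices. The same shift, together with grid points lying up to $\sqrt d\,\gamma_n/2$ outside the radius-$R$ ball, means the comparison point in $L(\hat\psi)\ge L(\psi^\dagger)$ must be made feasible. For example, fit with a slightly inflated radius.

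\textbf{Localizing $\hat\rho$.} You do not need to change the estimator by confining $\hat\rho$ to a window. Because $\rho$ is unconstrained, the MLE satisfies the score equation $\sum_{i<j}\sigma(\hat\pi_{ij})=\sum_{i<j}A_{ij}$. Combine this with $|\hat\pi_{ij}-\hat\rho|\le R^2+2R$ and with $\sum_{i<j}A_{ij}\asymp n^2w_n$, which holds with high probability. Together these already force $|\hat\rho-\rho_n^*|=O(1)$, which gives curvature of order $w_n$ along every segment between $\hat\pi_{ij}$ and $\pi^*_{ij}$.
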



\vspace{-0.05in}

This result shows that the marginal distribution of any single point from $\{\mathrm{proj}_{\gamma_n}(\hat\phi_i)\}_{i\le n}$ is close to the discretized distribution $q_{\gamma_n}$, up to a rotation.
Furthermore, it provides a tractable surrogate for understanding the embedding-distribution error term $E_\Phi$. 
\cameraedit{Appendix~\ref{app:proof} also proves a fixed-order motif consistency result for SyNGLER. In particular, motif densities that are functionals of the edge-probability matrix, such as triangle density and other fixed-size motifs, are stable when the fitted and generated probability matrices are close.}
Proofs of all results in this section are provided in Appendix~\ref{app:proof}.

\section{Experiments}\label{sec:experiment}

In this section, we empirically evaluate the effectiveness and efficiency of the SyNGLER framework on both simulated and real-world network datasets.
\vspace{-0.05in}

\paragraph{Simulated network dataset.}
We consider simulated sparse networks with sizes and latent dimensions $(n,r) \in \{500,1000,1500\} \times\{2,3,4\}$. 
For each $(n,r)$, we independently sample latent node embeddings $\{z_i^*\}_{i=1}^n$ from a truncated Gaussian mixture in $\mathbb{R}^r$ and degree parameters $\{\alpha_i^*\}_{i=1}^n$ from a uniform distribution. 
We set the sparsity parameter $\rho_n = -0.4 \log(n)$, which results in an edge density that scales as $O(n^{-0.4})$. 
This sparse regime is able to capture many real-world networks. 
Based on the sampled latent embeddings, we generate a network from the binary logistic network model in \cref{sec:lsm}. 
Each result is based on $200$ Monte Carlo replications;
additional details are provided in \cref{app:datasets}. 

\vspace{-0.05in}

\paragraph{Real-world datasets.}
We use four large real-world networks: (i) the user-user friendship network from the Yelp Open Dataset \citep{yelp_dataset}; (ii) the YouTube social network dataset \citep{yang2012defining}; (iii) the DBLP co-authorship network \citep{yang2012defining}; and (iv) the PolBlogs network \citep{adamic2005political}. 
Details regarding the preprocessing of these datasets are in \cref{app:implementation}.
\cameraedit{For baseline comparability, these real-data experiments use extracted induced subgraphs and their largest connected components; the one-million-node SBM experiment in \cref{sec:large_graph} supports scalability, but is not a substitute for evaluation on truly large full real-world graphs.}

\vspace{-0.05in}

\paragraph{Baselines and implementations.}
We consider two diffusion model based approaches, namely SyNG-D and SyNG-D (MLP), where SyNG-D (MLP) uses a multilayer perceptron to approximate the score instead of the tree-based estimator in SyNG-D.
We also consider a baseline implementation that is built on resampling, namely SyNG-R, as defined in \cref{sec:implementation}. 
SyNG-D and SyNG-R are included in all experiments, whereas SyNG-D (MLP) is evaluated only on real-world datasets.
Implementation details for our methods are provided in \cref{app:implementation}. 
For baseline methods, we include VGAE~\citep{kipf2016variational} through all experiments.
In the real-data evaluations, we also include GRAN~\citep{liao2019efficient}, EDGE~\citep{chen2023efficient}, GraphMaker~\citep{li2023graphmaker}, \cameraedit{CELL~\citep{rendsburg2020netgan},} and the classical Erd\H{o}s--R\'enyi model~\citep{erdos1960evolution}, mKPGM \citep{moreno2013learning} and BTER \citep{kolda2014scalable} as additional baselines.
\cameraedit{Appendix~\ref{subsec:real-full-results} also reports auxiliary comparisons with recent scalable baselines HiGen~\citep{karami2024higen} and iterative local expansion~\citep{bergmeister2024efficient}.}
The implementations of the baselines on simulated and real-world datasets are provided separately in \cref{subsec:sim-full-result} and \cref{subsec:real-full-results}. 


\subsection{Structural Property Recovery}
To evaluate the quality of the synthetic networks, we consider the distance between network statistics of the synthetic and the observed networks. Specifically, we evaluate triangle density (Tri., measuring the prevalence of triangle motifs / three-node interactions), clustering coefficient (Clus., summarizing local transitivity), eigenvalue distributions (Eig., reflecting global spectral structure), and degree centrality (DegC., describing node connectivity). For Tri. and Clus., which are single values for each network, we compute the root-mean-square error (RMSE) and bias with respect to the observed network. For Eig. and DegC., which are vectors for each network, we compute the maximum mean discrepancy (MMD), the Kolmogorov-Smirnov (KS) statistic, and the energy distance. 
\cameraedit{In addition, we include orbit-based 4-node graphlet frequency distance (Orbit) and 4-cycle density error (4-Cyc) in the real-data experiment.}
More details are in Appendix~\ref{sec:append:metrics}.

\begin{table*}[t] 
\centering
\small
\begin{threeparttable}
    \caption{
    Simulated sparse networks. Tri. and Clus. are evaluated by RMSE and bias; Eig. and DegC. are evaluated by energy (En.) and Kolmogorov–Smirnov (KS) distances. 
    Best results are in \textbf{bold}.
    }
    \begin{tabular}{c c *{8}{c}}
    \toprule
    \multirow{2}{*}{$(n,r)$} & \multirow{2}{*}{Method} &
    \multicolumn{2}{c}{Tri. ($\times 10^{-4}$)} &
    \multicolumn{2}{c}{Clus. ($\times 10^{-3}$)} &
    \multicolumn{2}{c}{Eig. ($\times 10^{-2}$)} &
    \multicolumn{2}{c}{DegC. ($\times 10^{-2}$)} \\
    \cmidrule(lr){3-4}\cmidrule(lr){5-6}\cmidrule(lr){7-8}\cmidrule(lr){9-10}
    & & RMSE & Bias & RMSE & Bias & En. & KS & En. & KS \\
    \midrule
    \multirow{3}{*}{$(500,2)$}
    & SyNG\text{-}D & \textbf{4.60} & \textbf{4.37} & \textbf{15.7} & \textbf{15.2} & \textbf{5.88} & \textbf{9.07} & \textbf{3.34} & \textbf{11.07} \\
    & SyNG\text{-}R & 5.37 & 5.11 & 18.1 & 17.7 & 6.38 & 9.78 & 3.69 & 12.45 \\
    & VGAE          & 9.70 & -9.16 & 53.1 & -50.2 & 10.18 & 15.67 & 10.69 & 44.45 \\
    \addlinespace
    \multirow{3}{*}{$(1500,4)$}
    & SyNG\text{-}D & \textbf{0.63} & \textbf{0.57} & \textbf{4.58} & \textbf{4.07} & \textbf{7.09} & \textbf{12.09} & \textbf{1.59} & \textbf{6.42} \\
    & SyNG\text{-}R & 1.35 & 1.26 & 9.80 & 9.65 & 9.41 & 14.45 & 2.52 & 9.58 \\
    & VGAE          & 3.05 & -2.78 & 37.4 & -35.0 & 17.26 & 32.46 & 9.25 & 48.18 \\
    \bottomrule
    \end{tabular}
    \label{tab:simu-perform}
\end{threeparttable}
\end{table*}

\cref{tab:simu-perform} summarizes the results for simulated networks.
Our methods consistently outperform VGAE on the simulated networks, which is expected since the data are generated from a sparse latent space model. 
In the larger setting with $(n,r)=(1500,4)$, we note that SyNG-D outperforms SyNG-R on all metrics, indicating the effectiveness of generating novel node embeddings while capturing network structures. 
Here, we report only the results for $(n,r)\in\{(500,2),(1500,4)\}$ due to page constraints. \cameraedit{More results on simulated networks, including a smaller comparison with GRAN and EDGE, are provided in \cref{subsec:sim-full-result,tab:sparse-modern-baselines}.}

Real-world datasets provide a complementary evaluation beyond the simulated setting.
For each method, we select the configuration that yields the best average performance across all four metrics. 
The results are summarized in \cref{tab:real_data}. 
EDGE and GRAN ran out of memory on the Yelp dataset on a single NVIDIA GeForce RTX 4090 (24GB), and are marked ``-'' at the corresponding entries. 
This indicates that these methods can be prohibitively expensive at this scale under limited hardware.
Across most metrics, SyNG-D and SyNG-R better match the observed networks than the baselines. 
Visualizations on YouTube further suggest improved community/cluster preservation (see \cref{fig:vis_youtube}). 
\cameraedit{The higher-order comparison in \cref{tab:higher_order} further shows that SyNGLER preserves orbit-based and 4-cycle structure more accurately than EDGE and GRAN and remains competitive with CELL, which is designed to preserve random-walk and degree-related structure.}
\cameraedit{On Yelp, the gap between SyNG-D and SyNG-R is larger because this dataset is the largest real-data subgraph and appears to require a richer latent representation; SyNG-R better preserves empirical local transitivity and degree heterogeneity, whereas SyNG-D better preserves broader latent geometry such as spectral structure.}
\cameraedit{DBLP is also challenging because the extracted subgraph exhibits strong clustering and degree assortativity, so matching local motifs and assortative connectivity simultaneously can be difficult for a low-dimensional latent space model.}
Overall, SyNG methods achieve the strongest or near-strongest performance on all four datasets, with particularly large gains on Yelp and PolBlogs.


\begin{figure}[ht]
\centering
\includegraphics[width=0.9\linewidth]{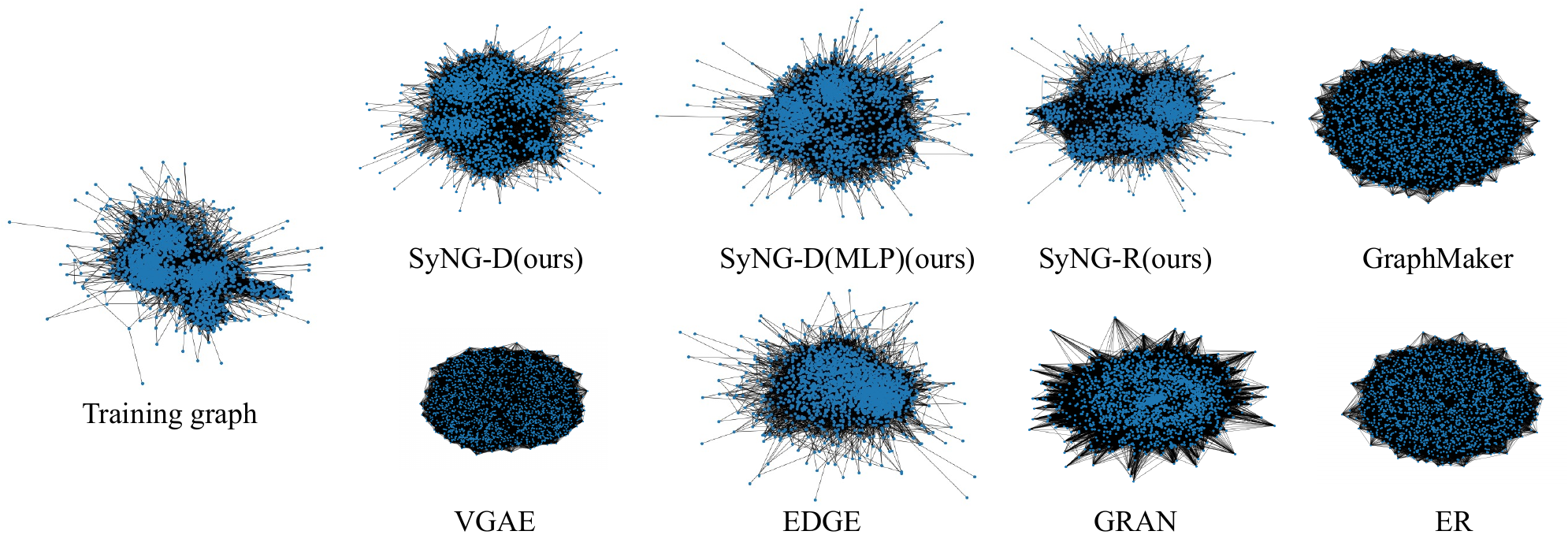}
\caption{Visualization synthetic networks by different methods, generated on YouTube dataset.}
\label{fig:vis_youtube}
\end{figure}

\begin{table*}[t]
\centering
\footnotesize 
\renewcommand{\arraystretch}{0.85} 
\setlength{\tabcolsep}{4pt} 

\caption{Results on real-world networks. Best performances are in \textbf{bold}.}
\label{tab:real_data}
\vspace{-0.5em} 

\begin{minipage}[t]{0.48\linewidth}
    \centering
    \captionof{table}{YouTube} 
    \vspace{-0.5em} 
    \begin{tabular}{l cccc} 
    \toprule
    Method & Tri. & Clus. & Eig. & DegC. \\
    & \tiny($10^{-4}$) & \tiny($10^{-2}$) & \tiny($10^{-2}$) & \tiny($10^{-2}$) \\ 
    \midrule
    E\text{-}R          & 1.05 & 16.3 & 25.96 & 69.22 \\
    BTER                & \textbf{0.02} & 9.7 & 6.42 & \textbf{0.00} \\
    mKPGM               & 1.03 & 16.0 & 22.38 & 44.86 \\
    VGAE                & 0.56 & 8.0 & 17.20 & 27.83 \\
    GRAN                & 1.13 & 10.1 & 7.88 & 7.86 \\
    EDGE                & 0.79 & 14.4 & 4.44 & 7.84 \\
    GraphMaker          & 1.12 & 16.7 & 23.13 & 64.99 \\
    \cameraedit{CELL}                & \cameraedit{0.62} & \cameraedit{3.40} & \cameraedit{2.17} & \cameraedit{\textbf{0.00}} \\
    \midrule
    SyNG\text{-}D       & 0.97 & 2.28 & 2.61 & 2.13 \\
    SyNG\text{-}D(MLP)  & 1.62 & 1.58 & \textbf{0.38} & 3.89 \\
    SyNG\text{-}R       & 1.11 & \textbf{2.23} & 4.47 & 1.10 \\
    \bottomrule
    \end{tabular}
\end{minipage}
\hfill 
\begin{minipage}[t]{0.48\linewidth}
    \centering
    \captionof{table}{DBLP}
    \vspace{-0.5em}
    \begin{tabular}{l cccc}
    \toprule
    Method & Tri. & Clus. & Eig. & DegC. \\
    & \tiny($10^{-4}$) & \tiny($10^{-1}$) & \tiny($10^{-1}$) & \tiny($10^{-1}$) \\
    \midrule
    E\text{-}R          & 7.96 & 8.94 & 4.41 & 8.72 \\
    BTER                & 6.09 & 6.78 & 1.48 & \textbf{0.46} \\
    mKPGM               & 8.00 & 9.00 & 3.00 & 4.57 \\
    VGAE                & \textbf{0.58} & \textbf{0.14} & 1.20 & 2.80 \\
    GRAN                & 7.99 & 8.92 & 1.76 & 2.97 \\
    EDGE                & 2.20 & 1.33 & 1.39 & 0.99 \\
    GraphMaker          & 7.98 & 8.98 & 3.93 & 7.51 \\
    \cameraedit{CELL}                & \cameraedit{4.68} & \cameraedit{3.20} & \cameraedit{2.31} & \cameraedit{3.35} \\
    \midrule
    SyNG\text{-}D       & 1.58 & 0.62 & 0.88 & 0.93 \\
    SyNG\text{-}D(MLP)  & 3.63 & 1.01 & 1.12 & 1.40 \\
    SyNG\text{-}R       & 1.43 & 0.38 & \textbf{0.80} & 0.75 \\
    \bottomrule
    \end{tabular}
\end{minipage}

\vspace{1em} 

\begin{minipage}[t]{0.48\linewidth}
    \centering
    \captionof{table}{Yelp}
    \vspace{-0.5em}
    \begin{tabular}{l cccc}
    \toprule
    Method & Tri. & Clus. & Eig. & DegC. \\
    & \tiny($10^{-4}$) & \tiny($10^{-2}$) & \tiny($10^{-2}$) & \tiny($10^{-2}$) \\
    \midrule
    E\text{-}R          & 8.12 & 14.5 & 24.5 & 77.4 \\
    BTER                & 2.27 & 4.45 & 8.26 & \textbf{0.00} \\
    mKPGM               & 9.35 & 16.0 & 9.26 & 50.0 \\
    VGAE                & 7.41 & 10.3 & 17.6 & 33.7 \\
    GRAN                & -    & -    & -    & -    \\
    EDGE                & -    & -    & -    & -    \\
    GraphMaker          & 8.82 & 15.5 & 20.7 & 69.3 \\
    \cameraedit{CELL}                & \cameraedit{2.67} & \cameraedit{4.44} & \cameraedit{5.67} & \cameraedit{2.96} \\
    \midrule
    SyNG\text{-}D       & 2.00 & 1.77 & \textbf{2.50} & 6.72 \\
    SyNG\text{-}D(MLP)  & \textbf{0.75} & 2.35 & 9.62 & 3.26 \\
    SyNG\text{-}R       & 0.78 & \textbf{0.76} & 4.69 & 0.62 \\
    \bottomrule
    \end{tabular}
\end{minipage}
\hfill
\begin{minipage}[t]{0.48\linewidth}
    \centering
    \captionof{table}{PolBlogs}
    \vspace{-0.5em}
    \begin{tabular}{l cccc}
    \toprule
    Method & Tri. & Clus. & Eig. & DegC. \\
    & \tiny($10^{-4}$) & \tiny($10^{-2}$) & \tiny($10^{-2}$) & \tiny($10^{-2}$) \\
    \midrule
    E\text{-}R          & 3.22 & 20.4 & 40.1 & 79.2 \\
    BTER                & 0.88 & 5.27 & 3.04 & \textbf{0.00} \\
    mKPGM               & 3.32 & 21.4 & 31.2 & 50.7 \\
    VGAE                & 2.07 & 3.26 & 31.1 & 35.3 \\
    GRAN                & 1.57 & 11.4 & 9.04 & 13.9 \\
    EDGE                & 0.79 & 5.69 & 4.69 & \textbf{0.00} \\
    GraphMaker          & 3.27 & 20.8 & 38.3 & 74.2 \\
    \cameraedit{CELL}                & \cameraedit{1.00} & \cameraedit{5.14} & \cameraedit{\textbf{0.00}} & \cameraedit{0.003} \\
    \midrule
    SyNG\text{-}D       & 0.71 & \textbf{1.90} & \textbf{2.58} & 0.97 \\
    SyNG\text{-}D(MLP)  & \textbf{0.59} & 2.23 & 5.27 & 3.05 \\
    SyNG\text{-}R       & 0.83 & 2.45 & 3.01 & 0.92 \\
    \bottomrule
    \end{tabular}
\end{minipage}

\vspace{-1em} 
\end{table*}

\begin{table}[t]
\centering
\cameraedit{
\footnotesize
\setlength{\tabcolsep}{3pt}
\renewcommand{\arraystretch}{0.88}
\caption{Higher-order real-data metrics. Orbit denotes 4-node graphlet-frequency distance; 4-Cyc denotes 4-cycle density error. Lower is better; ``--'' indicates OOM.}
\label{tab:higher_order}
\begin{tabular}{@{}llcc@{}}
\toprule
Dataset & Method & Orbit & 4-Cyc\\
\midrule
\multirow{5}{*}{PolBlogs}
& SyNG\text{-}D & \textbf{0.0001} & \textbf{0.14}\\
& SyNG\text{-}R & 0.0002 & 0.16\\
& CELL & 0.005 & 0.20\\
& EDGE & 0.043 & 2.88\\
& GRAN & 0.290 & 5.11\\
\midrule
\multirow{5}{*}{DBLP}
& SyNG\text{-}D & 0.035 & 0.45\\
& SyNG\text{-}R & \textbf{0.009} & \textbf{0.41}\\
& CELL & 0.099 & 1.14\\
& EDGE & 0.347 & 237\\
& GRAN & 1.425 & 135\\
\midrule
\multirow{5}{*}{YouTube}
& SyNG\text{-}D & 0.002 & 0.09\\
& SyNG\text{-}R & \textbf{0.0003} & \textbf{0.05}\\
& CELL & 0.004 & 0.09\\
& EDGE & 0.512 & 598\\
& GRAN & 0.577 & 17.3\\
\midrule
\multirow{5}{*}{Yelp}
& SyNG\text{-}D & \textbf{0.0003} & \textbf{0.21}\\
& SyNG\text{-}R & 0.0004 & 0.28\\
& CELL & 0.004 & 0.65\\
& EDGE & -- & --\\
& GRAN & -- & --\\
\bottomrule
\end{tabular}
}
\end{table}

\begin{figure}[ht]
  \centering
  \begin{subfigure}[t]{0.45\linewidth}
    \includegraphics[width=\linewidth]{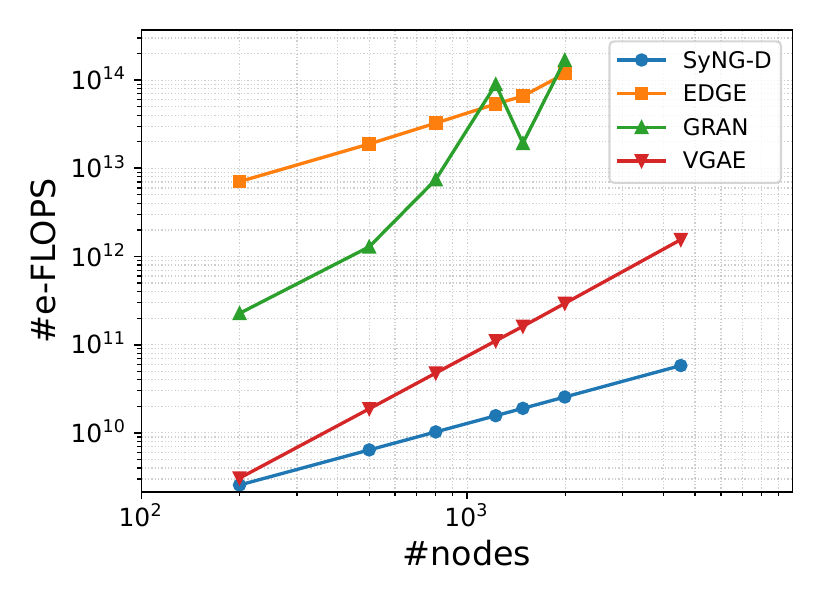}
  \end{subfigure}\hfill
  \begin{subfigure}[t]{0.54\linewidth}
    \includegraphics[width=\linewidth]{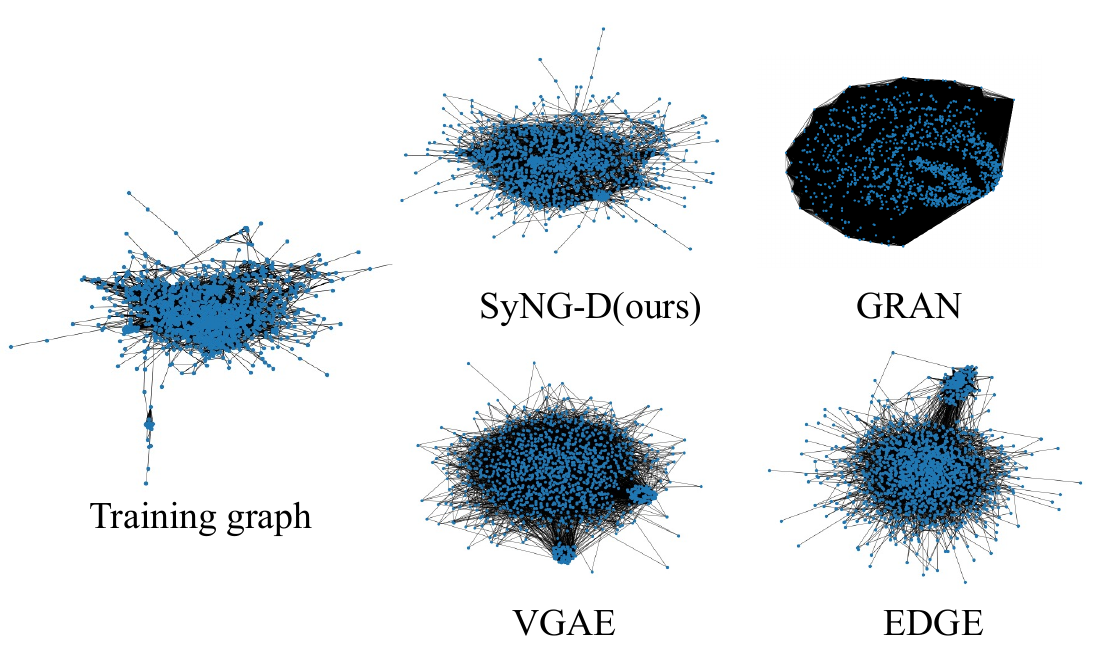} 
  \end{subfigure}
  \caption{Efficiency comparison. Left: number of e-FLOPS versus the number of nodes in the observed graph. Right: Visualization of the synthetic graphs generated on DBLP.}
  \label{fig:efficiency}
\end{figure}

\subsection{ML Utility Evaluation}

We also evaluate the machine learning utility of the generated graphs, that is, whether synthetic graphs can effectively support downstream predictive tasks. 
Following the evaluation protocol proposed by \citet{li2023graphmaker}, we adopt a discriminative-model-based framework to quantify utility. In this setting, a predictive model is first trained on the original graph
$G$ and then separately on a generated graph $\hat{G}$. 
Both models are then evaluated on the test set of the original graph to obtain performance measures $\mathrm{ACC}(G \mid G)$ and $\mathrm{ACC}(G \mid \hat{G})$ \citep{li2023graphmaker}.
A generated graph is regarded as having high ML utility if
\({\mathrm{ACC}(G \mid \hat{G})}/{\mathrm{ACC}(G \mid G)} \approx 1,\)
indicating that training on the synthetic graph yields predictive performance comparable to training on the real graph.
We consider the link prediction task, which aims to infer missing edges in a partially observed graph and may utilize node features and node labels where available. For this task, we adopt the Graph Autoencoder (GAE) model of \citet{kipf2016variational}, and use AUROC as the performance measure $\mathrm{ACC}$. Additional implementation details are provided in \cref{app:downstream}.

Table~\ref{tab:ml-utility} summarizes the results for ML utility ratios. Our SyNGLER-based methods consistently achieve ratios close to one, indicating strong preservation of the learning signal necessary for downstream ML tasks.  These results suggest that SyNGLER not only captures structural characteristics but also maintains the discriminative information required for effective model training.

\subsection{Efficiency}\label{sec:num:eff}
\paragraph{Evaluation metrics and configuration.}
\cameraedit{We compare training efficiency using a hardware-agnostic workload metric, equivalent FLOPs (e-FLOPs), which counts the number of floating-point operations or their approximate equivalents performed during training.}
\cameraedit{The main e-FLOPs comparison counts the dominant generator-training stage; for SyNG-D, this excludes the one-time latent-space fitting step, which is separately discussed with wall-clock accounting in \cref{sec:supp:eff}.}


\begin{table}
\centering
\caption{ML utility evaluation. ``--'': OOM.}
\label{tab:ml-utility}

\scriptsize 
\setlength{\tabcolsep}{1.5pt} 
\newcommand{\res}[2]{#1\raisebox{.2ex}{\tiny$\pm$}#2}

\begin{tabular}{l cccc} 
\toprule
Method & DBLP & PolB & YT & Yelp \\ 
\midrule
ER          & \res{0.90}{0.03} & \textbf{\res{1.00}{0.00}} & \res{0.99}{0.00} & \res{0.80}{0.01} \\
BTER        & \res{0.85}{0.05} & \res{0.95}{0.01} & \res{0.91}{0.01} & \res{0.94}{0.01} \\
mKPGM       & \res{0.96}{0.02} & \res{1.01}{0.00} & \res{0.92}{0.01} & \res{0.89}{0.02} \\
VGAE        & \textbf{\res{1.00}{0.00}} & \res{1.01}{0.00} & \textbf{\res{1.00}{0.00}} & \textbf{\res{1.00}{0.00}} \\
GRAN        & \res{0.98}{0.08} & \res{0.92}{0.08} & \textbf{\res{1.00}{0.00}} & -- \\
EDGE        & \textbf{\res{1.00}{0.00}} & \res{0.98}{0.01} & \textbf{\res{1.00}{0.00}} & -- \\
GraphMaker  & \res{0.95}{0.02} & \textbf{\res{1.00}{0.00}} & \res{0.99}{0.00} & \res{0.83}{0.01} \\
\midrule
SyNG-D      & \textbf{\res{1.00}{0.00}} & \res{0.99}{0.00} & \res{0.99}{0.01} & \res{0.99}{0.00} \\
SyNG-D(MLP) & \textbf{\res{1.00}{0.00}} & \res{0.97}{0.01} & \res{0.99}{0.00} & \res{0.99}{0.00} \\
SyNG-R      & \textbf{\res{1.00}{0.00}} & \res{0.99}{0.01} & \res{0.99}{0.01} & \textbf{\res{1.00}{0.00}} \\
\bottomrule
\end{tabular}
\vspace{-1em} 
\end{table}
Some methods include neural components, while SyNG-D additionally uses a tree-based score estimator. 
For neural networks, we count floating-point operations. For tree-based components, we count node visits and convert them to FLOP-equivalents using a fixed calibration constant measured once on the same hardware (details in Appendix~\ref{sec:supp:eff}). 
In addition to four real-world datasets, we also compare these methods on a group of simulated networks with sizes of 200, 500, and 800.
For a fair comparison, we keep the latent dimension of all models at 4. 
Specifically, for SyNG-D and VGAE, the latent dimension corresponds to the dimension of the latent space. 
For GRAN, it is the output dimension of the attention layers. 
For EDGE, it is the dimension of the hidden layer in the score network. 



\vspace{-0.05in}
\paragraph{Results.} \cref{fig:efficiency} summarizes the results. We have several observations.  In terms of e-FLOPs, SyNG-D attains the lowest cost across all settings and remains stable as network size grows. In terms of synthetic network quality, SyNG-D best preserves the overall structure of the observed networks; VGAE, while computationally comparable to SyNG-D, does not match its quality. Third, GRAN and EDGE require substantially longer training times, yet their performance is worse, especially for GRAN. Overall, SyNG-D is both effective and efficient for this task, producing realistic synthetic networks with a computationally lightweight training process.
Additional results regarding efficiency comparisons, along with further details, are provided in \cref{sec:supp:eff}.


\section{Conclusion}\label{sec:discussion}

In this work, we address the challenge of synthesizing realistic networks at scale while preserving salient structural properties using Synthetic Network Generation via Latent Embedding Reconstruction (SyNGLER), a general and efficient framework that learns low-dimensional latent node embeddings from a single observed network and then trains a distribution-free generator in the learned latent space. By separating representation learning via a likelihood-based latent space approach from generative modeling, SyNGLER preserves structural information with latent space geometry where lightweight generators suffice, enabling fast training and sampling. Theoretical and empirical results both demonstrate the effectiveness of SyNGLER. \cameraedit{This two-stage design is intentional for scalability and interpretability, but fully end-to-end joint optimization remains an interesting future direction.} Future research directions include incorporating richer supervision for conditional generation (e.g., node/edge attributes and constraints), extending to directed, dynamic, \cameraedit{multiplex,} and multilayer networks, and developing rigorous privacy-preserving training and release mechanisms.

\section*{Impact Statement}
\cameraedit{This work develops methods for synthetic network generation. Potential benefits include releasing graph-like data when direct sharing is restricted and supporting scalable benchmarking for network analysis. Potential risks include privacy leakage or misuse if generated graphs are treated as privacy-preserving by default. SyNGLER does not provide a formal privacy guarantee; privacy-sensitive deployment should combine it with appropriate auditing or privacy-preserving training and release mechanisms.}

\bibliography{ref}
\bibliographystyle{icml2026} 
\newpage
\appendix
\onecolumn

\crefalias{section}{appendix}
\crefalias{subsection}{appendix}
\crefalias{subsubsection}{appendix}

\section*{Appendix}
We provide experimental details, additional numerical results, and proofs of the theoretical results in the appendix. 
\cref{app:proof} collects proofs of the theoretical results in \cref{sec:theory}. 
\cref{sec:connection} then discusses the connections between latent space models and several classical network models. 
\cref{app:simulation} contains complete experimental details. 
Specifically, \cref{app:algorithms} covers the details of the estimation algorithms.
\cref{app:datasets} specifies the setup for the simulated networks and provides details on the real-world networks, as well as their preprocessing pipelines.
\cref{sec:append:metrics} includes the detailed description of the evaluation metrics in our experiments. 
\cref{app:implementation} contains implementation details for both {SyNGLER} and the baselines, including the device environments on which they are implemented and the hyperparameter configurations used.
\cref{subsec:sim-full-result} and \cref{subsec:real-full-results} collect additional results for the simulated networks and the real-world networks, respectively. \cref{sec:supp:eff} provides additional numerical results on the efficiency analysis of different methods. 
We also provide a heuristic analysis on the sample complexities of different methods in \cref{sec:analysis}
The performance of the proposed method on large scale dataset is included in \cref{sec:large_graph}.  
In \cref{app:attribute}, we illustrate how to extend the SyNGLER to the attributed networks with demonstrations on the real-world datasets. 
In \cref{app:downstream}, we introduce the  pipeline of assessing ML utilities of the generated networks with experimental results. 
Finally, in \cref{app:visualization}, we present a comprehensive visualization comparison of the generated networks from \cref{sec:experiment}.

\section{Proofs in \texorpdfstring{\cref{sec:theory}}{Section 3}}\label{app:proof}

\subsection{Proof of \cref{lem:network_generation_error}}

\begin{proof}[Proof of \cref{lem:network_generation_error}]\label{proof:decomposition}
Note that the marginal distribution of the network $A$ can be expressed as the product of the conditional distribution and the marginal distribution of the latent positions. 
Because of the identifiability issue of $Z$ in model \cref{eq:model}, we consider any transform $\cT: \Phi=(Z,\alpha)\mapsto (ZU, \alpha)\in \RR^{n\times (r+1)}$, where $U$ is a $r$-dimensional rotation matrix. 
The transform $\cT$ applied on $\Phi$ does not change the conditional distribution of $A$. 
We  define $\PP_{\cT \Phi^*}$ as the distribution of $\cT\Phi^*$, then 
\begin{align} 
d_{\mathrm{KL}}(\PP_{A} \parallel \PP_{\tA}^A ) &= \EE_{\PP_{A\,|\, \Phi} \times \PP_0^{\otimes n}}\Big[\log \frac{\PP_{A|\cT \Phi^*}}{\PP_{\tA|\tPhi}}(A\,| \,\Phi) + \log  \frac{\PP_{\cT\Phi^*}}{\PP_{\tPhi}^A }(\Phi)\Big]  \\ 
&= \EE_{\PP_{A\,|\, \Phi} \times \PP_0^n }\Big[ 2\sum_{i<j} \log \frac {p(A_{ij} \,|\, z_i^\top z_j + \alpha_i +\alpha_j + \rho^*)}{p(\tA_{ij} \,|\, z_i^\top z_j + \alpha_i +\alpha_j + \hat \rho)}\Big]  + d_{\mathrm{KL}}(\PP_{\cT\Phi^*}\parallel \PP_{\tPhi}^A   ) \\  
&=  n^2  \, \EE_{\PP_{A_{12} \, |\, \phi_1,\phi_2}}\Big[ \log \frac{p(A_{12} \,|\, z_1^\top z_2 + \alpha_1 +\alpha_2 + \rho^*)}{p(A_{12} \,|\, z_1^\top z_2 + \alpha_1 +\alpha_2 + \hat \rho)}\Big]  + d_{\mathrm{KL}}(\PP_{\cT\Phi^*}\parallel \PP_{\tPhi}^A   ) .  
\end{align}
Here the last equality holds because of the exchangeability of the distributions of $(\phi_1,\phi_2,\ldots, \phi_n)$. 
We decompose the K-L divergence in the right-hand side as follows 
\begin{align}
d_{\text{KL}}(\PP_{\cT\Phi^*} \parallel \PP_{ \tilde\Phi}^A ) &= n \cdot \EE_{ \PP_0^\cT} \Big[ \log \frac{ \PP_0^\cT}{ \PP_{\tphi}^A }(\phi) \Big] \\ 
&= n\cdot \EE_{\PP_0^\cT} \Big[ \log \frac{ \PP_0^\cT}{\PP_{\hat\phi}}(\phi) + \log \frac{\PP_{\hat\phi}}{\PP_{\tphi}^A }(\phi) \Big] \\ 
&= n\cdot \Big(\EE_{\PP_0^\cT} \Big[ \log \frac{ \PP_0^\cT}{\PP_{\hat\phi}}(\phi)\Big] + \EE_{\PP_{0}^\cT}\Big[\log \frac{\PP_{\hat\phi}}{\PP_{\tphi}^A }(\phi) \Big]- \EE_{\PP_{\hat\phi}}\Big[\log \frac{\PP_{\hat\phi}}{\PP_{\tphi}^A }(\phi) \Big] \\ 
&\qquad  +  \EE_{ \PP_{\hat\phi}}\Big[\log \frac{\PP_{\hat\phi}}{\PP_{\tphi}^A }(\phi) \Big] \Big). 
\end{align}
Here $\PP_{\hat\phi}$ is the marginal distribution of $\hat\phi_i$ for each $i$, which is the same for all $i$ because of the exchangeability. 
Minimizing over all transform $\cT$ concludes the proof of the theorem.  
\end{proof}  

\subsection{Proof of \cref{lem:rho_estimation_error}}

\begin{proof}[Proof of \cref{lem:rho_estimation_error}]  \label{proof:rho_estimation_error}
    We denote $\theta_{ij} = z_i^\top z_j + \alpha_i + \alpha_j$. 
The Lipschitz continuity of $l_{ij}(\pi) = \log p(A_{ij} \,|\, \pi)$ in $\pi$ implies that 
\begin{align}
\big|\log p(A_{ij} \mid \theta_{ij} + \rho^*) - \log p(A_{ij} \mid \theta_{ij} + \hat\rho)\big| 
&\le M |\rho^* - \hat\rho|.
\end{align}
Now combining with \cref{lem:overall_est_error} implies the desired result.
\end{proof}

\subsection{Proof of \cref{thm:discretization}}

\begin{proof}[Proof of \cref{thm:discretization}]\label{proof:discretization}
Fix $\phi \in \RR^{r+1}$ and set $\psi := \mathrm{proj}_{\cG_{\gamma_n}}(\phi) \in \cG_{\gamma_n}$. 
By the definition of $q_{\gamma_n}$, for any grid point $\psi \in \cG_{\gamma_n}$ we can write
\begin{align}
    q_{\gamma_n}(\psi)
    = \int_{\RR^{r+1}} \prod_{i=1}^{r+1}
    \ind\{u_i \in [\psi_i - \gamma_n/2,\, \psi_i + \gamma_n/2]\}\, p_0(u)\,du
    = \int_{C(\psi)} p_0(u)\,du,
\end{align}
where $C(\psi) \coloneqq \prod_{i=1}^{r+1} [\psi_i - \gamma_n/2,\, \psi_i + \gamma_n/2]$
is the hypercube that is centered at $\psi$ and has side length $\gamma_n$ with volume $\gamma_n^{r+1}$. 
By the definition of $p_{\gamma_n}$, we have
\begin{align}
    p_{\gamma_n}(\phi)
    &= q_{\gamma_n}\bigl(\mathrm{proj}_{\cG_{\gamma_n}}(\phi)\bigr)\,\gamma_n^{-(r+1)} \\
    &= q_{\gamma_n}(\psi)\,\gamma_n^{-(r+1)} \\
    &= \frac{1}{\gamma_n^{r+1}} \int_{C(\psi)} p_0(u)\,du.
\end{align}
Thus $p_{\gamma_n}(\phi)$ is exactly the average of $p_0$ over the cube $C(\psi)$.

Next we bound the distance between $\phi$ and any point $u \in C(\psi)$.  
Using the triangle inequality, we have for any $u \in C(\psi)$ that
\begin{align}
\|u - \phi\|
    \le \|u - \psi\| + \|\psi_i - \phi_i\|\le\sqrt{r+1} \gamma_n. 
\end{align}
Since $p_{\gamma_n}(\phi)$ is a local average of $p_0$ over the cube $C(\psi)$, we have by Lipschitz continuity of $p_0$ that
\begin{align}
    \big|p_0(\phi) - p_{\gamma_n}(\phi)\big|
    &= \left|p_0(\phi) - \frac{1}{\gamma_n^{r+1}} \int_{C(\psi)} p_0(u)\,du\right| \\
    &= \frac{1}{\gamma_n^{r+1}}
       \left|\int_{C(\psi)} \bigl(p_0(\phi) - p_0(u)\bigr)\,du\right| \\
    &\le \frac{1}{\gamma_n^{r+1}} \int_{C(\psi)} |p_0(\phi) - p_0(u)|\,du \\
    &\le \frac{1}{\gamma_n^{r+1}} \int_{C(\psi)} L \gamma_n \sqrt{r+1}\,du \\
    &= L \gamma_n \sqrt{r+1} \cdot \frac{|(C(\psi))|}{\gamma_n^{r+1}} \\
    &= L \gamma_n \sqrt{r+1}.
\end{align}
Since $\phi$ was arbitrary, this completes the proof.
\end{proof}

\subsection{Proof of \cref{lem:concentration}}

\begin{proof}[Proof of \cref{lem:concentration}]\label{proof:concentration}
For fixed $\phi\in \cG$, using Hoeffding's inequality yields that  
\begin{align}
    \PP\Big( |q_{\gamma_n}(\phi) - \hat q_{\gamma_n}(\phi)| \ge t \Big) &\le 2\exp(-2nt^2). 
\end{align}
Using the union bound over all $(R/\gamma_n)^{r+1}$ points in $\cG_{\gamma_n}$ yields that 
\begin{align}
    \PP\Big( \max_{\phi \in \cG_{\gamma_n}} |q_{\gamma_n}(\phi) - \hat q_{\gamma_n}(\phi)| \ge t \Big) &\le 2(R/{\gamma_n})^{r+1}\exp(-2nt^2). 
\end{align}
Therefore, setting $t = \sqrt{(r+1)\log(R/\gamma_n) + \log(2/\delta)}/\sqrt{2n}$ yields the desired result.  
\end{proof} 

\subsection{Proof of \cref{thm:Phi_est_error}}

\begin{proof}[Proof of \cref{thm:Phi_est_error}] \label{proof:Phi_est_error}
Define the event $\cE_n = \{n^{-1}\sum_{i\le n}\norm{\hat\phi_i - \cT\phi_i^*}_2^2 \le {\gamma'_n}^2\}$, where $\gamma'_n = \Omega((w_n n)^{-1/2 +\eps/2})$ for some fixed $\eps > 0$. 
Then, we have that $\PP(\cE_n) \to 1$ as $n\to \infty$, as shown in \cref{lem:overall_est_error}.  
On the other hand, we have that
\begin{align}
    \max_{\phi \in \cG_{\gamma_n}}|\check q_{\gamma_n} - \hatq_{\gamma_n}|  &\le  \frac{1}{n} \sum_{i=1}^n \ind\{\mathrm{proj}_{\cG_{\gamma_n}}(\hat\phi_i) \neq \mathrm{proj}_{\cG_{\gamma_n}}(\cT \phi_i^*)\} \\   
    &\le \frac{1}{n} \sum_{i=1}^n \ind\{\norm{\hat\phi_i - \cT\phi_i^*}_2 > 2\gamma_n\} \\ 
    &\le \frac{1}{n} \sum_{i=1}^n \|\hat\phi_i - \cT\phi_i^*\|_2^2 / (4\gamma_n^2).
\end{align}
Here, the last inequality holds by Markov's inequality. 
Therefore, we have that 
\begin{align}
\PP\Big( \max_{\phi \in \cG_{\gamma_n}}|\check q_{\gamma_n} - \hatq_{\gamma_n}| > \eps \Big) &\le \PP \Big( \frac{1}{n} \sum_{i=1}^n \|\hat\phi_i - \cT\phi_i^*\|_2^2 > 4\eps \gamma_n^2 \Big) \\
&\le  \PP(\cE_n^c) + \PP\Big( \frac{1}{n} \sum_{i=1}^n \|\hat\phi_i - \cT\phi_i^*\|_2^2 > 4\eps \gamma_n^2 \mid \cE_n\Big)  \PP (\cE_n). 
\end{align}
For sufficiently large $n$ and fixed $\epsilon$, we have that $4\epsilon \gamma_n^2 > \gamma_n'^2$, therefore the second term vanishes for sufficiently large $n$. 
And the left-hand-side is upper bounded by $\PP(\cE_n^c)$, which goes to zero as $n\to \infty$. This concludes the proof of \cref{thm:Phi_est_error}.  
\end{proof}

\subsection{Supporting Lemmas and Proofs}

\begin{lemma}[Theorem 5 in \citet{chung2011spectra}]\label{lem:spectral_norm_upper_bound}
Let $X_1,\ldots,X_m$ be independent random $n\times n$ Hermitian matrices.
Assume $\|X_i-\EE X_i\|\le M$ for all $i$, and put
\[
\nu^2 \;=\; \Bigl\| \sum_{i=1}^m \Var(X_i) \Bigr\|.
\]
Let $X=\sum_{i=1}^m X_i$. Then for any $a>0$,
\[
\PP\!\left(\,\|X-\EE X\|>a\,\right)
\;\le\; 2n \exp\!\left(\!
-\frac{a^{2}}{\,2\nu^{2}+\frac{2}{3}Ma\,}\right).
\]
\end{lemma}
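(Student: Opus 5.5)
The statement to prove is the one immediately above, \cref{lem:spectral_norm_upper_bound}: the matrix Bernstein inequality of \citet{chung2011spectra}. I would follow the standard Laplace-transform (matrix Chernoff) route.

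\textbf{Step 1: reduce to the centered largest eigenvalue.} Set $Y_i = X_i - \EE X_i$. These are independent, Hermitian, mean zero, and satisfy $\|Y_i\|\le M$. Since $\|X-\EE X\| = \max\{\lambda_{\max}(\sum_i Y_i),\,\lambda_{\max}(-\sum_i Y_i)\}$, it suffices to bound $\PP(\lambda_{\max}(\sum_i Y_i) > a)$ by $n\exp(\cdot)$. The same bound then applies to $-Y_i$, and a union bound produces the factor $2n$.

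\textbf{Step 2: matrix Chernoff step.} For $\theta>0$,
\[
\PP\bigl(\lambda_{\max}(\textstyle\sum_i Y_i)>a\bigr)\le e^{-\theta a}\,\EE\, \mathrm{tr}\exp\bigl(\theta \textstyle\sum_i Y_i\bigr).
\]
This follows from Markov's inequality and $e^{\lambda_{\max}(B)}\le \mathrm{tr}\, e^{B}$. The key step is controlling the trace of the exponential of a sum of non-commuting matrices. I would use Lieb's concavity theorem in Tropp's form: for fixed Hermitian $H$, the map $B\mapsto \mathrm{tr}\exp(H+\log B)$ is concave on positive definite matrices. Applying it iteratively with Jensen, conditioning on one summand at a time, gives
\[
\EE\,\mathrm{tr}\exp\bigl(\theta\textstyle\sum_i Y_i\bigr)\le \mathrm{tr}\exp\bigl(\textstyle\sum_i\log \EE e^{\theta Y_i}\bigr).
\]
Alternatively, Ahlswede--Winter with Golden--Thompson gives the same result with $\nu^2$ replaced by $\sum_i\|\Var(X_i)\|$. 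That version is weaker, so I would rely on Lieb to obtain $\nu^2=\|\sum_i \Var(X_i)\|$ exactly.

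\textbf{Step 3: bound each matrix mgf.} Let $g(x) = (e^x-1-x)/x^2$, which is increasing. Using $\EE Y_i=0$ and $\|Y_i\|\le M$, the transfer rule (a scalar inequality $f\le h$ on the spectrum gives $f(Y)\preceq h(Y)$) yields
\[
\EE e^{\theta Y_i}\preceq I+g(\theta M)\theta^2\,\EE Y_i^2\preceq \exp\bigl(g(\theta M)\theta^2 \Var(X_i)\bigr).
\]
Since $\log$ is operator monotone, $\log \EE e^{\theta Y_i}\preceq g(\theta M)\theta^2\Var(X_i)$. Because $\mathrm{tr}\exp$ is monotone in the Loewner order, we get
\[
\mathrm{tr}\exp\bigl(\textstyle\sum_i\log \EE e^{\theta Y_i}\bigr)\le n\exp\bigl(g(\theta M)\theta^2\nu^2\bigr).
\]

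\textbf{Step 4: optimize over $\theta$.} Use the bound $g(x)\le \frac{1}{2(1-x/3)}$ for $0<x<3$ and choose $\theta = a/(\nu^2+Ma/3)$. Then the exponent $-\theta a + \frac{\theta^2\nu^2}{2(1-\theta M/3)}$ equals $-a^2/(2\nu^2+\tfrac23 Ma)$. Doubling for the two tails from Step 1 gives the stated bound.

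The main obstacle is Step 2: handling the matrix exponential of a sum of non-commuting matrices so that the resulting variance proxy is the norm of the summed variances. That is precisely what Lieb's theorem supplies, and I would cite it rather than reprove it.
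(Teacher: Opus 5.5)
Your proof is correct. It is the standard Lieb/Tropp matrix-Laplace-transform argument for matrix Bernstein, and the choice $\theta = a/(\nu^2 + Ma/3)$ reproduces exactly the stated exponent $-a^2/(2\nu^2 + \tfrac{2}{3}Ma)$. The paper itself gives no proof; it only cites \citet{chung2011spectra}, and your route is the standard one behind that result.

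The one case your $\theta$ does not cover is $\nu^2 = 0$, where $\theta M = 3$ lies outside the range $0<x<3$ on which $g(x) \le 1/(2(1-x/3))$ holds. That case is trivial, because then each $X_i - \EE X_i$ vanishes almost surely and the probability on the left is zero.
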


\begin{lemma}\label{lem:partial_l}
Under the model in \cref{eq:model} and \cref{ass:lsm}, let $\partial l^*\in \RR^{n\times n}$ be the matrix where each entry is $\partial l^*_{ij} = l'_{A_{ij}}(\pi_{ij}^*)$. Then we have that
\[
\|\partial l^{*}\| \;=\; O_p\!\bigl(w_n^{1/2}\,n^{1/2}\,(\log n)^{1/2}\bigr).
\]
\end{lemma}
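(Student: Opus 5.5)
We want $\|\partial l^*\| = O_p(\sqrt{w_n n\log n})$. The plan is to apply the matrix Bernstein inequality of \cref{lem:spectral_norm_upper_bound} conditionally on $\Phi^*$. Under the logistic link we have $\log p(a\mid\pi) = a\pi - \log(1+e^{\pi})$, so $l'_{A_{ij}}(\pi) = A_{ij} - \sigma(\pi)$. At the true parameter this gives
\[
\partial l^*_{ij} = A_{ij} - \sigma(\pi^*_{ij}),
\]
which has conditional mean zero given $\Phi^*$. Hence $\partial l^* = A - \EE[A\mid\Phi^*]$ off the diagonal, with diagonal entries equal to zero (or $-\sigma(\pi^*_{ii})$, a bounded-in-$w_n$ correction handled separately).

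\textbf{Decomposition.} Write $\partial l^* = \sum_{i<j} X_{ij}$, where
\[
X_{ij} = (A_{ij}-\sigma(\pi^*_{ij}))(e_ie_j^\top + e_je_i^\top).
\]
Conditionally on $\Phi^*$ these are independent, Hermitian and centered. Each satisfies $\|X_{ij}\|\le 1$, so we may take $M=1$.

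\textbf{Variance proxy.} We have $\Var(X_{ij}) = \sigma(\pi^*_{ij})(1-\sigma(\pi^*_{ij}))(e_ie_i^\top + e_je_j^\top)$. Summing, $\sum_{i<j}\Var(X_{ij})$ is diagonal with $i$-th entry $\sum_{j\ne i}\sigma(\pi^*_{ij})(1-\sigma(\pi^*_{ij}))$.

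By \cref{ass:lsm}, $|z_i^\top z_j+\alpha_i+\alpha_j|\le 3R^2$ up to constants, so $\pi^*_{ij} = \rho_n^* + O(1)$. Since $\sigma(x)\le e^x$, this gives $\sigma(\pi^*_{ij})\le C w_n$ with $C = e^{O(R^2)}$. Hence $\nu^2 \le C n w_n$ deterministically, for every realization of $\Phi^*$.

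\textbf{Tail bound.} Take $a = c\sqrt{w_n n\log n}$ in \cref{lem:spectral_norm_upper_bound}. The denominator is
\[
2\nu^2 + \tfrac23 a \le 2Cnw_n + \tfrac23 c\sqrt{w_n n\log n}.
\]
By \cref{ass:sparse}, $w_n n/\log n\to\infty$, so $\sqrt{w_n n\log n} = o(w_n n)$ and the $\nu^2$ term dominates. The exponent is therefore at least of order $c^2\log n/(2C)$, and the bound $2n\exp(-c^2\log n/(2C+o(1)))$ tends to $0$ for $c$ large. Because this holds uniformly over $\Phi^*$, integrating over $\PP_0^{\otimes n}$ gives the unconditional statement.

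Finally, the diagonal correction has norm at most $\max_i \sigma(\pi^*_{ii})\le Cw_n$, which is negligible. The only delicate point is the sparsity comparison ensuring Bernstein's variance term dominates the $Ma$ term. This is exactly where \cref{ass:sparse} enters; without it one would only get $O_p(\log n)$.
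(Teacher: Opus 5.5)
Your proposal is correct and follows the paper's proof essentially step for step. Both decompose the matrix edge by edge and apply the Chung--Radcliffe matrix Bernstein bound (\cref{lem:spectral_norm_upper_bound}) with $M=1$, $\nu^2\lesssim n w_n$, and threshold $a\asymp\sqrt{w_n n\log n}$.

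One point in your favor: you attribute the step where the variance term must dominate the $Ma$ term (i.e.\ $n w_n\gtrsim\log n$) to \cref{ass:sparse}, which is correct. The paper's proof cites \cref{ass:lsm} at that point, yet the lemma genuinely fails without such a lower bound on $n w_n$.
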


\begin{proof}[Proof of \cref{lem:partial_l}]
Let $E^{i,j}$ be the $n\times n$ matrix with $1$ in the $(i,j)$ and $(j,i)$
positions and $0$ elsewhere. 
Denote $p_{ij}^* = \EE_{A}[l'_{A_{ij}}(\pi_{ij}^*)]$. 
To use Lemma~\ref{lem:spectral_norm_upper_bound}, write
$\partial l^{*}$ as the sum of matrices $A^{i,j}$ defined as
\[
A^{i,j}=(A_{ij}-p^{*}_{ij})\,E^{i,j},\qquad 1\le i<j\le n,
\]
so that $\partial l^{*}=\sum_{i=1}^{n}\sum_{j=i+1}^{n}A^{i,j}$.
Note that $\|A^{i,j}\|\le 1$, $\EE[A^{i,j}]=0_{n\times n}$, and
\[
\EE\!\bigl[(A^{i,j})^{2}\bigr]
= \bigl(p^{*}_{ij}-(p^{*}_{ij})^{2}\bigr)\,(E^{i,i}+E^{j,j}) .
\]

Let
\[
\nu^{2}
=\left\|\sum_{i=1}^{n}\sum_{j=i+1}^{n}\EE\!\bigl[(A^{i,j})^{2}\bigr]\right\|
=\left\|\sum_{i=1}^{n}\sum_{j=i+1}^{n}\!\bigl(p^{*}_{ij}-(p^{*}_{ij})^{2}\bigr)
  (E^{i,i}+E^{j,j})\right\|.
\]
Then
\begin{align}
\nu^{2}
&=\left\|
\sum_{i=1}^{n}\Bigl(\sum_{j=i+1}^{n}\!\bigl(p^{*}_{ij}-(p^{*}_{ij})^{2}\bigr)\Bigr)E^{i,i}
+\sum_{j=2}^{n}\Bigl(\sum_{i=1}^{j-1}\!\bigl(p^{*}_{ij}-(p^{*}_{ij})^{2}\bigr)\Bigr)E^{j,j}
\right\| \\
&\le
2\max\!\left\{
\max_{i=1,\ldots,n}\sum_{j=1}^{n}\bigl(p^{*}_{ij}-(p^{*}_{ij})^{2}\bigr)
\right\}
\;\le\; 2n\max_{i,j}\bigl(p^{*}_{ij}-(p^{*}_{ij})^{2}\bigr)
\;\le\; 2n\max_{i,j}p^{*}_{ij}.
\end{align}

For $\epsilon'>0$, we set $a=\sqrt{\,5n\max_{i,j}p^{*}_{ij}\,\log\!\bigl(2n/\epsilon'\bigr)}$. 
By \cref{ass:lsm},  for sufficiently large $n$, it holds that 
\[
n\max_{i,j}p^{*}_{ij}
\;\ge\; 2\sqrt{\,\tfrac{5}{3}\,n\max_{i,j}p^{*}_{ij}\,
\log\!\bigl(2n/\epsilon'\bigr)}.
\]
Applying Lemma~\ref{lem:spectral_norm_upper_bound}, we obtain
\begin{align}
\PP\!\left(\|\partial l^{*}\|>a\right)&\le 2n\exp\!\left\{
-\frac{5n\max_{i,j}p^{*}_{ij}\,\log(2n/\epsilon')}
       {4n\max_{i,j}p^{*}_{ij}
        + 2\sqrt{\,\tfrac{5}{3}\,n\max_{i,j}p^{*}_{ij}\,\log(2n/\epsilon')\,}}
\right\} \\&\le 2n\exp\!\left\{-\log(2n/\epsilon')\right\}
= \epsilon'.
\end{align}
Noting that $\max_{i,j}p_{ij}^{*}\asymp w_n$ as $n\to\infty$, there exists
$M'>0$ such that, for any $\epsilon'>0$,
\[
\PP\!\left(\|\partial l^{*}\|\ge
M'\sqrt{\,n w_n \log(n/\epsilon')\,}\right)\le \epsilon' ,
\]
which concludes the proof.
\end{proof}


We need the following lemma to characterize the estimation error of the latent embedding from the graph. 
\begin{lemma}[Lemma 5.4 in  \citet{tu2016low}]\label{lem:davis_kahan}
Suppose that $Z_1,Z_2\in \RR^{n\times r}$ are two matrices with $\sigma_r(Z_2)>0$. 
Then we have 
\begin{align}
    \inf_{U\in \cO(r)} \|Z_1 - Z_2 U\|_{\text{F}}^2   \le \frac{1}{2(\sqrt{2} - 1)\sigma_r(Z_2)^2 } \|Z_1 Z_1^\top - Z_2 Z_2^\top\|_{\text{F}}^2. 
\end{align}
\end{lemma}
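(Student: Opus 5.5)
The proof follows the standard orthogonal Procrustes argument. We reduce to an aligned representative of $Z_2$ and then lower-bound $\|Z_1Z_1^\top - Z_2Z_2^\top\|_{\text{F}}^2$ by a multiple of the squared alignment error.

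\emph{Step 1 (alignment).} Take the SVD $Z_2^\top Z_1 = P\Sigma Q^\top$ and set $U = PQ^\top\in\cO(r)$ and $Y = Z_2U$. Then $Y^\top Z_1 = Q\Sigma Q^\top$ is symmetric positive semidefinite. The infimum on the left-hand side is at most $\|\Delta\|_{\text{F}}^2$ with $\Delta := Z_1 - Y$. Since $YY^\top = Z_2Z_2^\top$ and $\sigma_r(Y)=\sigma_r(Z_2)$, it suffices to prove
\begin{align}
\|Z_1Z_1^\top - YY^\top\|_{\text{F}}^2 \ge 2(\sqrt2-1)\,\sigma_r(Y)^2\,\|\Delta\|_{\text{F}}^2 .
\end{align}
Alignment also yields two facts used below. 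First, $S := Y^\top\Delta = Y^\top Z_1 - Y^\top Y$ is symmetric. Second, $S + Y^\top Y \succeq 0$.

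\emph{Step 2 (expansion).} Write $Z_1Z_1^\top - YY^\top = Y\Delta^\top + \Delta Y^\top + \Delta\Delta^\top$ and expand the squared Frobenius norm. Using the symmetry of $S$, so that $\operatorname{tr}(\Delta^\top Y\Delta^\top Y) = \|S\|_{\text{F}}^2$, I expect
\begin{align}
\|Z_1Z_1^\top - YY^\top\|_{\text{F}}^2 = 2\|Y\Delta^\top\|_{\text{F}}^2 + 2\|S\|_{\text{F}}^2 + 4\langle S, \Delta^\top\Delta\rangle + \|\Delta^\top\Delta\|_{\text{F}}^2 .
\end{align}
The first term satisfies $\|Y\Delta^\top\|_{\text{F}}^2 \ge \sigma_r(Y)^2\|\Delta\|_{\text{F}}^2$, which is the source of the $\sigma_r$ factor. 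The remaining task is to control the possibly negative cross term $4\langle S,\Delta^\top\Delta\rangle$.

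\emph{Step 3 (the cross term; main obstacle).} Minimizing over $S$ alone only gives the trivial bound $\ge -\|\Delta^\top\Delta\|_{\text{F}}^2$, which is not enough. The positive semidefinite constraint from Step 1 must therefore be used. My plan is a weighted split. For a parameter $c\in(0,1)$, I would use $\langle S,\Delta^\top\Delta\rangle \ge -\langle Y^\top Y,\Delta^\top\Delta\rangle$ on a $c$-fraction of the cross term; this follows from $\langle S+Y^\top Y,\Delta^\top\Delta\rangle\ge0$, since both matrices are positive semidefinite. Note that $\langle Y^\top Y,\Delta^\top\Delta\rangle = \|Y\Delta^\top\|_{\text{F}}^2$. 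On the remaining $(1-c)$-fraction I would complete the square, $2\|S\|^2 + 4(1-c)\langle S,D\rangle \ge -2(1-c)^2\|D\|^2$ with $D=\Delta^\top\Delta$. Balancing so that the $\|D\|_{\text{F}}^2$ terms stay nonnegative requires $2(1-c)^2\le 1$, that is, $c = 1-1/\sqrt2$. This leaves a coefficient $2-4c = 2\sqrt2-2 = 2(\sqrt2-1)$ in front of $\|Y\Delta^\top\|_{\text{F}}^2$, matching the constant in the statement. If the bookkeeping does not close exactly, the fallback is the case analysis of \citet{tu2016low}, which compares $\|\Delta\|$ with $\sigma_r(Y)$. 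Combining Steps 2 and 3 then gives the displayed inequality, and the lemma follows from Step 1.
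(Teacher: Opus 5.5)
Your proposal is correct. With $c = 1-1/\sqrt{2}$ the bookkeeping closes exactly, since $2\|S\|_{\text{F}}^2 + 2\sqrt{2}\langle S,\Delta^\top\Delta\rangle + \|\Delta^\top\Delta\|_{\text{F}}^2 = \|\sqrt{2}S+\Delta^\top\Delta\|_{\text{F}}^2 \ge 0$, which leaves $(2\sqrt{2}-2)\|Y\Delta^\top\|_{\text{F}}^2$, so no fallback is needed. The paper gives no proof of its own and only cites \citet{tu2016low}; their argument, as I recall it, is essentially yours (Procrustes alignment, completing the square, then bounding the leftover $(4-2\sqrt{2})\langle S,\Delta^\top\Delta\rangle$ using $S+Y^\top Y = Y^\top Z_1 \succeq 0$), not a case analysis.
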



\begin{lemma}\label{lem:overall_est_error}
Suppose that each $l'_{ij}(\pi_{ij}^*)$ is i.i.d. bounded with mean zero. 
Then for any $\epsilon > 0$, there exists a constant $M > 0$ such that with probability at least $1-\epsilon$, there exists a transform $\cT$ such that 
\begin{align}
    \frac{1}{n}\sum_{i=1}^n \norm{\hat\phi_i - \cT\phi_i^*}_2^2 = O_p(w_n^{-1} n^{-1} \log n),\quad \text{and}\quad |\hat\rho - \rho^*| = O_p(w_n^{-1/2} n^{-1/2}\log^{1/2} n).  
\end{align}
\end{lemma}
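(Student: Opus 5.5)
The plan is to analyze the constrained maximum likelihood estimator $(\hat Z,\hat\alpha,\hat\rho)$ of \cref{eq:lsm_estimate} on $\mathcal{C}_R$. We do this through the parameter matrices $\Theta = ZZ^\top + \alpha\mathbf{1}_n^\top + \mathbf{1}_n\alpha^\top$ and the intercept $\rho$, working in the logistic sparse regime of \cref{ass:sparse}. The argument proceeds in three steps.

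\textbf{Step 1: basic inequality.} Since $(\hat Z,\hat\alpha,\hat\rho)$ maximizes $L$ and the truth $(Z^*,\alpha^*,\rho^*)$ (after centering) is feasible, $L(\hat\psi)\ge L(\psi^*)$. A second-order Taylor expansion of $\pi\mapsto\log p(A_{ij}\mid\pi)$ around $\pi^*_{ij}$ gives
\begin{align}
\sum_{i<j}\sigma'(\bar\pi_{ij})(\hat\pi_{ij}-\pi^*_{ij})^2 \le 2\langle \partial l^*, \hat\Pi-\Pi^*\rangle,
\end{align}
where $\Pi$ collects the $\pi_{ij}$. All $\pi$ values equal $\rho_n+O(1)$, provided we first show $|\hat\rho-\rho^*|=O(1)$, for instance by comparing the total edge count with its expectation. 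Hence the curvature satisfies $\sigma'(\bar\pi_{ij})\asymp w_n$ uniformly. The left side is then at least $c\,w_n\|\hat\Pi-\Pi^*\|_F^2$.

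\textbf{Step 2: control of the noise term.} The difference $\hat\Pi-\Pi^*$ has rank at most $r+3$ (the $ZZ^\top$ part, the two rank-one $\alpha$ parts, and the constant $\rho\mathbf{1}\mathbf{1}^\top$). Therefore
\begin{align}
\langle\partial l^*,\hat\Pi-\Pi^*\rangle\le \|\partial l^*\|\,\sqrt{r+3}\,\|\hat\Pi-\Pi^*\|_F .
\end{align}
Applying \cref{lem:partial_l} then yields $\|\hat\Pi-\Pi^*\|_F = O_p\bigl((n\log n/w_n)^{1/2}\bigr)$, that is, $n^{-2}\|\hat\Pi-\Pi^*\|_F^2=O_p(\log n/(w_n n))$.

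\textbf{Step 3: transfer to the individual parameters.} The centering constraints $Z^\top\mathbf{1}=0$ and $\alpha^\top\mathbf{1}=0$ make the components of $\Pi$ mutually orthogonal in the Frobenius inner product. Concretely, $\rho\mathbf{1}\mathbf{1}^\top$ is the projection onto constants, $\alpha\mathbf{1}^\top+\mathbf{1}\alpha^\top$ is the projection onto row/column means, and $ZZ^\top$ is doubly centered. This gives three consequences:
\begin{itemize}
\item $n^2(\hat\rho-\rho^*)^2\le\|\hat\Pi-\Pi^*\|_F^2$, hence $|\hat\rho-\rho^*|=O_p\bigl((\log n/(w_nn))^{1/2}\bigr)$.
\item $2n\|\hat\alpha-\alpha^*\|^2\le\|\hat\Pi-\Pi^*\|_F^2$.
\item $\|\hat Z\hat Z^\top-Z^*Z^{*\top}\|_F\le\|\hat\Pi-\Pi^*\|_F$.
\end{itemize}
The true $Z^*$ is centered only approximately, since its empirical mean is $O_p(n^{-1/2})$. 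This centering error is negligible but must be tracked. Finally, \cref{lem:davis_kahan} gives a rotation $U$ with $\|\hat Z - Z^*U\|_F^2\lesssim \|\hat\Pi-\Pi^*\|_F^2/\sigma_r(Z^*)^2$. Here $\sigma_r(Z^*)^2\asymp n$ with high probability, because $n^{-1}Z^{*\top}Z^*\to\mathrm{Cov}_{\PP_0}(z)$, which we need to assume nondegenerate. Dividing by $n$ gives the average $\phi$-error $O_p(\log n/(w_nn))$, so the intermediate result plays the role of \cref{lem:overall_est_error}.

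\textbf{Concluding the theorem.} With the bound on $|\hat\rho-\rho^*|$ in hand, we use the Lipschitz property of the Bernoulli log-likelihood. For the logistic link, $\partial_\pi\log p(a\mid\pi)=a-\sigma(\pi)$, which is bounded by 1. Therefore
\begin{align}
E_\rho\le \EE\bigl|\log p(\cdot\mid\theta+\rho^*)-\log p(\cdot\mid\theta+\hat\rho)\bigr|\le|\hat\rho-\rho^*| = O_p\bigl((\log n/(w_nn))^{1/2}\bigr).
\end{align}
A sharper, KL-type bound of order $w_n(\hat\rho-\rho^*)^2$ is also available but is not needed.

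\textbf{Main obstacle.} The hardest point is the uniform curvature lower bound in Step 1. It requires an a priori localization $|\hat\rho-\rho^*|=O_p(1)$ before the quadratic argument can start. I would obtain this by a separate one-dimensional argument. The total edge count concentrates around $\sum_{i<j}\sigma(\pi^*_{ij})\asymp n^2w_n$, since $n^2w_n\to\infty$. Because of the constraints, $\hat\rho$ must roughly match this density: if $\hat\rho$ were off by a large constant, then by concavity of $L$ along the $\rho$-direction the likelihood could be increased by moving $\rho$ back.
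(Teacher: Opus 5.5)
Your proposal is correct and follows the paper's proof essentially step for step. The shared steps are the Taylor-expansion basic inequality, the bound $\langle\partial l^*,\hat\Pi-\Pi^*\rangle\le\sqrt{\mathrm{rank}(\hat\Pi-\Pi^*)}\,\|\partial l^*\|\,\|\hat\Pi-\Pi^*\|_{\mathrm{F}}$ combined with \cref{lem:partial_l}, the Frobenius-orthogonal decomposition under the centering constraints after re-centering the truth (the paper's $Z^\dagger,\alpha^\dagger,\rho^\dagger$), and \cref{lem:davis_kahan} for $Z$. You are more careful than the paper in two places: you flag the a priori localization $|\hat\rho-\rho^*|=O_p(1)$ needed for the curvature bound $\sigma'(\xi_{ij})\gtrsim w_n$ (your edge-count/first-order-condition argument supplies it), and you note that a nondegenerate $\mathrm{Cov}_{\PP_0}(z)$ is required. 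Your only slip is minor: $\hat Z\hat Z^\top-Z^*Z^{*\top}$ can have rank $2r$, so $\hat\Pi-\Pi^*$ has rank at most $2r+3$ (as in the paper), not $r+3$, which changes only a constant.
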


\begin{proof}[Proof of \cref{lem:overall_est_error}]
Let $\pi_{ij} = z_i^\top z_j + \alpha_i + \alpha_j + \rho$ and $\hat\pi$ be its estimated version.
Applying Taylor's expansion to each $l_{ij}$ at $\pi_{ij}^*$ yields that 
\begin{align}
\sum_{i<j}l_{ij}(\hat \pi_{ij}) &=  \sum_{ i<j}l_{ij}(\pi_{ij}^*) + \sum_{i<j} l'_{ij}(\pi_{ij}^*)(\hat \pi_{ij} - \pi_{ij}^*) + \frac{1}{2}\sum_{i<j} l''_{ij}(\xi_{ij})(\hat \pi_{ij} - \pi_{ij}^*)^2 ,
\end{align}
where each $\xi_{ij}\in [\min\{\hat\pi_{ij},\pi_{ij}^*\},\max\{\hat\pi_{ij},\pi_{ij}^*\} ]$. 
Using the optimality condition, it holds that $\sum_{i<j}l_{ij}(\hat \pi_{ij}) \ge \sum_{i<j}l_{ij}(\pi_{ij}^*)$. 
And therefore 
\begin{align}
\sum_{i<j} l'_{ij}(\pi_{ij}^*)(\hat \pi_{ij} - \pi_{ij}^*)  &\ge \sum_{i<j} -l''_{ij}(\xi_{ij})(\hat \pi_{ij} - \pi_{ij}^*)^2\\ 
 & \ge \frac{1}{4} w_n \cdot e^{-2R^2} \sum_{i<j}|\hat \pi_{ij} - \pi_{ij}^*|^2.     \label{eq:rho_lower_bound}
\end{align}
To facilitate the matrix inequalities, we define $\partial l^*, \Pi^*, \hat\Pi\in \RR^{n\times n}$ such that $\partial l^*_{ij}=l_{A_{ij}}'(\pi_{ij}^*)$, $\Pi_{ij }^* = \pi_{ij}^* = z_{i}^{*\top}z_{j}^* + \alpha_i^* +\alpha_j^* + \rho_n^*$, and $\hat\Pi_{ij } = \hat\pi_{ij} = \hatz_{i}^{\top}\hatz_{j} + \hatalpha_i +\hatalpha_j^* + \hatrho$. 
Then, we can upper bound the left-handed-side in \cref{eq:rho_lower_bound} as 
\begin{align}  
\sum_{i<j} l'_{ij}(\pi_{ij}^*)(\hat \pi_{ij} - \pi_{ij}^*) &\le  |\langle\partial l^*, \hat\Pi- \Pi^* \rangle| \\
&\le \sqrt{2r+3}   \cdot \norm{\partial l^*}  \cdot \norm{\hat\Pi - \Pi^*}_{\text{F}} . \label{eq:rho_upper_bound}
\end{align} 
Last inequality holds since $\hat\Pi - \Pi^*$ has rank at most $2r+3$.   
Invoking \cref{lem:partial_l}, we have that $\norm{\partial l^*} = O_p(w_n^{1/2} n^{1/2}\log^{1/2} n)$. 
Combining \cref{eq:rho_lower_bound,eq:rho_upper_bound} yields that $\|\hat\Pi - \Pi^*\|_{\text{F}} = O_p(w_n^{-1/2}n^{1/2}\log^{1/2} n )$. 

Before obtaining the estimation error of $\hat\rho$, we need to involve an identifiability transform over $Z^*, \alpha^*$, since their sample average is not necessarily zero. 
Define $\beta = \frac{1}{n} {Z^*}^\top \mathbf{1}_n$, $g=Z^* Z^{*\top} \mathbf{1}_n$, $a = \mathbf{1}_n^\top \alpha^*$ and $u=\mathbf{1}_n^\top Z^* Z^{*\top} \mathbf{1}_n$. 
Then we further define $Z^\dagger = Z^* - \mathbf{1}_n \beta^\top$, $\alpha^\dagger = \alpha^* + \frac{1}{2n} g - \frac{a}{n}\cdot \mathbf{1}_n - \frac{u}{n^2} \cdot \mathbf{1}_n$, and $\rho^\dagger = \rho^* + \frac{2a}{n}$ + $\frac{u}{n^2}$. 
Then, it is evident that
\begin{align}
{Z^\dagger}^\top Z^\dagger + \alpha^\dagger \mathbf{1}_n^\top + \mathbf{1}_n {\alpha^\dagger}^\top + \rho^\dagger \mathbf{1}_n \mathbf{1}_n^\top &= Z^* Z^{*\top} + \alpha^* \mathbf{1}_n^\top + \mathbf{1}_n {\alpha^*}^\top + \rho^* \mathbf{1}_n \mathbf{1}_n^\top. 
\end{align}
Additionally, we have that ${Z^\dagger}^\top \mathbf{1}_n = 0_n$ and $\mathbf{1}_n^\top \alpha^\dagger = 0$. 
It then follows that
\begin{align}
\norm{Z^\dagger - Z^*}_{\text{F}} &\le \sqrt{n} \cdot \norm{\beta}_2 = O_p(1), \label{eq:Zdag-Z}\\
\norm{\alpha^\dagger - \alpha^*}_2 &\le \frac{1}{2n} \norm{g}_2 + \frac{|a|}{n} \cdot \sqrt{n} + \frac{|u|}{n^2} \cdot \sqrt{n} = O_p(1), \label{eq:alphadag-alpha}\\
|\rho^\dagger - \rho^*| &\le \frac{2|a|}{n} + \frac{|u|}{n^2} = O_p(n^{-1/2}). \label{eq:rhodag-rho}
\end{align}
On the other hand, we can expand $\|\hat\Pi - \Pi^*\|_{\text{F}}^2$ as 
\begin{align}
\| \hat\Pi - \Pi^*\|_{\text{F}}^2 &=  \| \hatZ \hatZ^\top - Z^\dagger Z^{\dagger\top}\|_{\text{F}}^2 + \|(\hat\alpha- \alpha^\dagger)\mathbf{1}_n^\top + \mathbf{1}_n(\alpha^\dagger - \hat\alpha)^\top \|_{\text{F}}^2 + n^2 |\hat\rho - \rho^\dagger|^2  \\
& \qquad +   2\langle \hatZ \hatZ^\top - Z^\dagger Z^{\dagger\top}, (\hat\alpha - \alpha^\dagger)\mathbf{1}_n^\top + \mathbf{1}_n(\hat\alpha - \alpha^\dagger)^\top \rangle \\ 
& \qquad + 2\langle \hatZ \hatZ^\top  - Z^\dagger Z^{\dagger\top}, (\hat\rho - \rho^\dagger)\mathbf{1}_n\mathbf{1}_n^\top\rangle \\ 
& \qquad + 2\langle (\hat\alpha - \alpha^\dagger)\mathbf{1}_n^\top + \mathbf{1}_n(\hat\alpha - \alpha^\dagger)^\top, (\hat\rho - \rho^\dagger)\mathbf{1}_n\mathbf{1}_n^\top\rangle \\ 
&=  \| \hatZ \hatZ^\top - Z^\dagger Z^{\dagger\top}\|_{\text{F}}^2 + 2n \|\hat\alpha - \alpha^\dagger\|^2 + n^2 |\hat\rho - \rho^\dagger|^2 .
\end{align}
Here the second line holds because ${Z^\dagger}^\top \mathbf{1}_n = 0_n$ and $\mathbf{1}_n^\top \alpha^\dagger = 0$.  
Combining with the fact that $\|\hat\Pi - \Pi^*\|_{\text{F}}^2 = O_p(w_n^{-1} n \log n)$, we have that
\begin{align}
 \|\hat{Z} \hat{Z}^\top - Z^\dagger Z^{\dagger\top}\|_\text{F}^2  &= O_p(w_n^{-1} n \log n)\\ 
n^{-1}\|\hat\alpha - \alpha^\dagger\|_2^2 &= O_p(w_n^{-1} n^{-1} \log n)\\
|\hat\rho - \rho^\dagger|^2 & = O_p(w_n^{-1} n^{-1} \log n).
\end{align}
On the other hand, we have that $\lambda_{\min}({Z^\dagger}^\top Z^\dagger) = \Omega_p(n)$ because of Assumption~\ref{ass:lsm}. 
Using \cref{lem:davis_kahan}, we have that $ n^{-1}\|Z^\dagger - \hat Z U\|_{\text{F}}^2 = O_p(w_n^{-1} n^{-1} \log n)$ for some rotation matrix $U$. 
In this sense, the fluctuation in  \cref{eq:Zdag-Z,eq:alphadag-alpha,eq:rhodag-rho} is always of smaller magnitude. 
Thus, the desired result follows. 

\end{proof}

\subsection{Motif Consistency}

We first introduce a probability-matrix version of motif density that applies to both adjacency matrices and edge-probability matrices. Let $H$ be a fixed simple graph with vertex set $[v_H]$ and edge set $E(H)$. For any symmetric zero-diagonal matrix $B=(B_{ij})\in[0,1]^{n\times n}$, define the injective $H$-motif density
\[
\mu_H(B)
:=
\frac{1}{(n)_{v_H}}
\sum_{\iota:[v_H]\hookrightarrow[n]}
\prod_{(a,b)\in E(H)} B_{\iota(a)\iota(b)},
\]
where $(n)_m=n(n-1)\cdots(n-m+1)$ and $\iota:[v_H]\hookrightarrow[n]$ denotes an injective map. For induced motifs, define
\[
\mu_H^{\mathrm{ind}}(B)
:=
\frac{1}{(n)_{v_H}}
\sum_{\iota:[v_H]\hookrightarrow[n]}
\left[
\prod_{(a,b)\in E(H)} B_{\iota(a)\iota(b)}
\prod_{\substack{a<b\\ (a,b)\notin E(H)}}\bigl(1-B_{\iota(a)\iota(b)}\bigr)
\right].
\]
For notational convenience, write $\nu_H\in\{\mu_H,\mu_H^{\mathrm{ind}}\}$.

\begin{proposition}[Deterministic stability of fixed-order motif functionals]\label{prop:motif-lipschitz}
For any fixed simple graph $H$, there exists a constant $C_H>0$, depending only on $H$, such that for any two symmetric zero-diagonal matrices $B,C\in[0,1]^{n\times n}$,
\[
|\nu_H(B)-\nu_H(C)|
\le
\frac{C_H}{n}\,\|B-C\|_F.
\]
In particular, one may take $C_H=2|E(H)|$ for $\nu_H=\mu_H$ and $C_H=2\binom{v_H}{2}$ for $\nu_H=\mu_H^{\mathrm{ind}}$.
\end{proposition}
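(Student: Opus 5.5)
The plan is a telescoping (hybrid) argument over the factors in each product, followed by Cauchy--Schwarz to convert an $\ell_1$-type sum of entrywise differences into a Frobenius norm.

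\textbf{Telescoping.} Fix an injective map $\iota$ and list the $m$ factors of the product in some order as $f_1,\dots,f_m$ (evaluated at $B$) and $g_1,\dots,g_m$ (evaluated at $C$). For $\mu_H$ we have $m=|E(H)|$ and each factor is an entry $B_{\iota(a)\iota(b)}$. For $\mu_H^{\mathrm{ind}}$ we have $m=\binom{v_H}{2}$ and each factor is either $B_{\iota(a)\iota(b)}$ or $1-B_{\iota(a)\iota(b)}$. All factors lie in $[0,1]$, so the telescoping identity
\[
\prod_k f_k-\prod_k g_k=\sum_k f_1\cdots f_{k-1}(f_k-g_k)g_{k+1}\cdots g_m
\]
gives
\[
\Bigl|\prod_k f_k-\prod_k g_k\Bigr|\le\sum_k|f_k-g_k|=\sum_{(a,b)}|B_{\iota(a)\iota(b)}-C_{\iota(a)\iota(b)}|,
\]
where the last sum runs over the relevant pairs $(a,b)$. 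This uses $|(1-x)-(1-y)|=|x-y|$.

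\textbf{Averaging over injections.} Summing over $\iota$ and dividing by $(n)_{v_H}$, I exchange the order of summation. For a fixed pair $(a,b)$, the pair $(\iota(a),\iota(b))$ runs over all ordered pairs $(i,j)$ with $i\ne j$, and each such pair is hit exactly $(n-2)_{v_H-2}$ times. Since $(n-2)_{v_H-2}/(n)_{v_H}=1/(n(n-1))$, this yields
\[
|\nu_H(B)-\nu_H(C)|\le m\cdot\frac{1}{n(n-1)}\sum_{i\ne j}|B_{ij}-C_{ij}|.
\]
The argument needs $v_H\ge 2$; if $v_H\le1$ there are no edges and the difference is zero.

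\textbf{Cauchy--Schwarz.} Applying Cauchy--Schwarz over the $n(n-1)$ off-diagonal entries gives
\[
\sum_{i\ne j}|B_{ij}-C_{ij}|\le\sqrt{n(n-1)}\,\|B-C\|_F.
\]
Hence
\[
|\nu_H(B)-\nu_H(C)|\le m\,\frac{\|B-C\|_F}{\sqrt{n(n-1)}}\le\frac{\sqrt2\,m}{n}\|B-C\|_F,
\]
using $n-1\ge n/2$ for $n\ge2$. Since $\sqrt2\le2$, this gives the constant $C_H=2m$, i.e. $C_H=2|E(H)|$ for $\mu_H$ and $C_H=2\binom{v_H}{2}$ for $\mu_H^{\mathrm{ind}}$.

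\textbf{Main obstacle.} The only delicate point is the counting step: checking that each ordered pair $(i,j)$ is hit exactly $(n-2)_{v_H-2}$ times, and handling the edge cases $n<v_H$. When $n<v_H$ there are no injections, and both sides are trivially handled by adopting the convention $\nu_H\equiv0$. Everything else is routine.
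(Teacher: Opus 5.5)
Your proposal is correct and follows essentially the same route as the paper: the telescoping-product bound for each injection, averaging over injections to get $\frac{m}{n(n-1)}\sum_{i\neq j}|B_{ij}-C_{ij}|$, and then Cauchy--Schwarz. The differences are cosmetic. You keep $\sqrt{n(n-1)}$ and obtain the slightly sharper constant $\sqrt{2}\,m$, whereas the paper bounds $\sqrt{n(n-1)}\le n$ and implicitly uses $1/(n-1)\le 2/n$. You also make explicit the $(n-2)_{v_H-2}$ pair-counting and the degenerate cases $n<v_H$, which the paper leaves implicit.
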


\begin{proof}
We first treat $\nu_H=\mu_H$. For each injective map $\iota:[v_H]\hookrightarrow[n]$, let
\[
X_{\iota}(B):=\prod_{(a,b)\in E(H)} B_{\iota(a)\iota(b)}.
\]
Since every factor lies in $[0,1]$, the telescoping-product inequality gives
\[
|X_{\iota}(B)-X_{\iota}(C)|
\le
\sum_{(a,b)\in E(H)} |B_{\iota(a)\iota(b)}-C_{\iota(a)\iota(b)}|.
\]
Summing over all injective maps $\iota$ and dividing by $(n)_{v_H}$ yields
\[
|\mu_H(B)-\mu_H(C)|
\le
\frac{|E(H)|}{n(n-1)}\sum_{i\neq j}|B_{ij}-C_{ij}|.
\]
By Cauchy--Schwarz,
\[
\sum_{i\neq j}|B_{ij}-C_{ij}|
\le
\sqrt{n(n-1)}\,\|B-C\|_F
\le
n\,\|B-C\|_F,
\]
which proves the claim for injective motifs.

For $\nu_H=\mu_H^{\mathrm{ind}}$, write the integrand as a product over all unordered pairs $1\le a<b\le v_H$, where each factor is either $x\mapsto x$ or $x\mapsto 1-x$. Each such factor is $1$-Lipschitz on $[0,1]$, so the same telescoping argument gives
\[
|\mu_H^{\mathrm{ind}}(B)-\mu_H^{\mathrm{ind}}(C)|
\le
\frac{\binom{v_H}{2}}{n(n-1)}\sum_{i\neq j}|B_{ij}-C_{ij}|
\le
\frac{2\binom{v_H}{2}}{n}\,\|B-C\|_F.
\]
\end{proof}

\begin{proposition}[Concentration around the probability-matrix motif functional]\label{prop:motif-concentration}
Let $B\in[0,1]^{n\times n}$ be a symmetric zero-diagonal matrix, and let $A(B)$ be a random graph with conditionally independent edges
\[
A(B)_{ij}\mid B \sim \mathrm{Bernoulli}(B_{ij}),\qquad 1\le i<j\le n.
\]
Then, for any fixed simple graph $H$,
\[
\nu_H(A(B)) - \nu_H(B) = O_p(n^{-1}).
\]
More precisely, there exist constants $c_H,C_H>0$, depending only on $H$, such that conditionally on $B$,
\[
\Pr\!\left( |\nu_H(A(B)) - \nu_H(B)| > t \mid B \right)
\le
2\exp(-c_H n^2 t^2),\qquad 0<t<C_H.
\]
\end{proposition}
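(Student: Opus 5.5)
The plan is to apply McDiarmid's bounded-differences inequality to the function $F(A):=\nu_H(A)$, regarded as a function of the $N=\binom{n}{2}$ conditionally independent Bernoulli variables $\{A_{ij}\}_{i<j}$ given $B$. Before doing so, I will relate the mean $\EE[\nu_H(A(B))\mid B]$ to $\nu_H(B)$.

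The first step is to show that $\EE[\nu_H(A(B))\mid B]=\nu_H(B)$ exactly. For each injective $\iota$, the edge set of the copy $\iota(H)$ consists of distinct pairs, because $\iota$ is injective. In the induced case, the same holds for all pairs among $\iota([v_H])$, and each such pair appears in exactly one factor. Hence the factors in
\[
X_\iota(A)=\prod_{(a,b)\in E(H)} A_{\iota(a)\iota(b)}
\]
are independent given $B$. The same is true for the induced integrand, in which factors of the form $1-A_{\iota(a)\iota(b)}$ also appear. Conditional expectation therefore factorizes, and we get $\EE[X_\iota(A)\mid B]=X_\iota(B)$ together with $\EE[1-A_{ij}\mid B]=1-B_{ij}$. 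Averaging over $\iota$ gives the claimed identity, so there is no bias term.

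The second step is the bounded-difference constant. Changing a single entry $A_{ij}=A_{ji}$ only affects the summands whose image contains both $i$ and $j$. The number of injective maps with $\{i,j\}\subseteq\iota([v_H])$ is at most $v_H(v_H-1)(n-2)_{v_H-2}$. Each affected summand changes by at most $1$, since every factor lies in $[0,1]$. So the change in $F$ is at most
\[
c:=\frac{v_H(v_H-1)(n-2)_{v_H-2}}{(n)_{v_H}}=\frac{v_H(v_H-1)}{n(n-1)}\le\frac{2v_H^2}{n^2}.
\]

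The third step is to invoke McDiarmid. It gives
\[
\Pr\bigl(|F-\EE F|>t\mid B\bigr)\le 2\exp\bigl(-2t^2/(Nc^2)\bigr).
\]
Here $Nc^2\le \tfrac{n^2}{2}\cdot\tfrac{4v_H^4}{n^4}=2v_H^4/n^2$. This yields the bound $2\exp(-n^2t^2/v_H^4)$, so we may take $c_H=v_H^{-4}$. The restriction $t<C_H$ is harmless, since $|\nu_H|\le 1$ in any case. Setting $t=M/n$ makes the bound $2\exp(-c_HM^2)$, which is uniform in $B$. Integrating over $B$ then gives $\nu_H(A(B))-\nu_H(B)=O_p(n^{-1})$.

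The main subtlety is the first step, namely checking carefully that the factors inside each induced summand involve distinct, hence independent, entries, so that the mean is exactly $\nu_H(B)$. The bounded-difference count in the second step is routine by comparison.
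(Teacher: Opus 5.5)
Your proposal is correct and follows essentially the same route as the paper: conditional unbiasedness $\mathbb{E}[\nu_H(A(B))\mid B]=\nu_H(B)$ from independence of the distinct edge entries in each summand, then McDiarmid's inequality with a bounded-difference constant of order $n^{-2}$. The only difference is that you use the single count $v_H(v_H-1)(n-2)_{v_H-2}$ of maps hitting both endpoints for both functionals. This equals the paper's induced-case count, and in the injective case it is a slightly cruder but valid bound, where the paper uses $2|E(H)|(n-2)_{v_H-2}$.
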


\begin{proof}
Because the edges of $A(B)$ are conditionally independent given $B$, we have
\[
\mathbb E\{\nu_H(A(B))\mid B\}=\nu_H(B)
\]
for both injective and induced motif functionals. It therefore suffices to prove concentration around the conditional mean.

We first treat $\nu_H=\mu_H$. Changing a single undirected edge $A(B)_{uv}$ can only affect those embeddings in which one edge of $H$ is mapped to $(u,v)$ or $(v,u)$. The number of such embeddings is at most $2|E(H)|(n-2)_{v_H-2}$. Hence the bounded-difference sensitivity is at most
\[
\Delta_H^{\mathrm{inj}}
:=
\frac{2|E(H)|(n-2)_{v_H-2}}{(n)_{v_H}}
=
\frac{2|E(H)|}{n(n-1)}.
\]
McDiarmid's inequality then gives
\[
\Pr\!\left( |\mu_H(A(B)) - \mu_H(B)| > t \mid B \right)
\le
2\exp\!\left(
-\frac{2t^2}{\binom{n}{2}(\Delta_H^{\mathrm{inj}})^2}
\right)
\le
2\exp(-c_H n^2 t^2)
\]
for some constant $c_H>0$.

For $\nu_H=\mu_H^{\mathrm{ind}}$, changing a single undirected edge can affect only those embeddings in which one unordered pair among the $\binom{v_H}{2}$ vertex pairs of $H$ is mapped to $(u,v)$ or $(v,u)$. Thus the sensitivity is at most
\[
\Delta_H^{\mathrm{ind}}
:=
\frac{2\binom{v_H}{2}(n-2)_{v_H-2}}{(n)_{v_H}}
=
\frac{2\binom{v_H}{2}}{n(n-1)},
\]
and the same McDiarmid argument yields the stated tail bound. The $O_p(n^{-1})$ rate follows immediately.
\end{proof}

\begin{proposition}[Latent-parameter error implies probability-matrix error]\label{prop:probability-bridge}
Suppose there exists a rotation matrix $U\in\mathbb R^{r\times r}$ such that, writing
\[
T\Phi_i^* := \bigl((Uz_i^*)^\top,\alpha_i^*\bigr)^\top,
\qquad
r_{n,\Phi}^2 := \frac{1}{n}\sum_{i=1}^n \|\widehat\Phi_i - T\Phi_i^*\|_2^2,
\qquad
r_{n,\rho} := |\widehat\rho-\rho_n^*|,
\]
we have $r_{n,\Phi}=o_p(1)$ and $r_{n,\rho}=o_p(1)$. Then
\[
\frac{1}{n}\|\widehat P-P^*\|_F
\le
C\bigl(r_{n,\Phi}+r_{n,\rho}\bigr)
\]
with probability tending to one, for some constant $C>0$ depending only on the model bounds.
\end{proposition}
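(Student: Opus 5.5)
The plan is to reduce everything to a pointwise Lipschitz bound on the link, then sum. I take $P^*_{ij}=\sigma(\pi^*_{ij})$ and $\widehat P_{ij}=\sigma(\hat\pi_{ij})$ for $i\ne j$, with zero diagonal. Here $\pi^*_{ij}=(Uz_i^*)^\top(Uz_j^*)+\alpha_i^*+\alpha_j^*+\rho_n^*$; the rotation $U$ leaves $\pi^*_{ij}$ unchanged, so we may work with $T\Phi^*$ throughout. Since $\sigma'=\sigma(1-\sigma)\le 1/4$, the logistic link is $1/4$-Lipschitz. Hence
\[
|\widehat P_{ij}-P^*_{ij}|\le \tfrac14|\hat\pi_{ij}-\pi^*_{ij}|,
\qquad
\tfrac1n\|\widehat P-P^*\|_F\le \tfrac1{4n}\|\widehat\Pi-\Pi^*\|_F .
\]
It therefore suffices to bound $n^{-1}\|\widehat\Pi-\Pi^*\|_F$ by $C(r_{n,\Phi}+r_{n,\rho})$.

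Next I decompose the entrywise error. Write $\hat z_i=Uz_i^*+\Delta^z_i$ and $\hat\alpha_i=\alpha_i^*+\Delta^\alpha_i$. Then
\[
\hat\pi_{ij}-\pi^*_{ij}=\Delta_i^{z\top}Uz_j^*+(Uz_i^*)^\top\Delta^z_j+\Delta_i^{z\top}\Delta^z_j+\Delta^\alpha_i+\Delta^\alpha_j+(\hat\rho-\rho_n^*).
\]
Both $\|z_j^*\|_2\le R$ (by \cref{ass:lsm}) and $\|\hat z_i\|_2\le R$ (by the constraint set $\mathcal C_R$) hold. So it is cleaner to group the terms as $\hat z_i^\top\hat z_j-(Uz_i^*)^\top(Uz_j^*)=\Delta_i^{z\top}\hat z_j+(Uz_i^*)^\top\Delta^z_j$. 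This gives
\[
|\hat\pi_{ij}-\pi^*_{ij}|\le R\|\Delta^z_i\|+R\|\Delta^z_j\|+|\Delta^\alpha_i|+|\Delta^\alpha_j|+r_{n,\rho}\le (R+1)(e_i+e_j)+r_{n,\rho},
\]
where $e_i=\|\widehat\Phi_i-T\Phi_i^*\|_2$. No quadratic remainder appears, so the bound is deterministic on the event where the stated rates hold. This is why the constant depends only on $R$.

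To sum, I use $(a+b+c)^2\le 3(a^2+b^2+c^2)$:
\[
\sum_{i\ne j}|\hat\pi_{ij}-\pi^*_{ij}|^2\le 3(R+1)^2\sum_{i\ne j}(e_i^2+e_j^2)+3n^2r_{n,\rho}^2\le 6(R+1)^2 n\sum_i e_i^2+3n^2 r_{n,\rho}^2 .
\]
The right-hand side equals $n^2\bigl(6(R+1)^2r_{n,\Phi}^2+3r_{n,\rho}^2\bigr)$. Taking square roots, dividing by $n$, and applying the Lipschitz step gives the claim with $C=\tfrac14\sqrt{6}(R+1)$, enlarged if needed so that it also covers $\sqrt3/4$. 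The bound is deterministic, and the inequality holds with probability tending to one on the event that $\widehat\Phi\in\mathcal C_R$ and that $U$ exists; the latter is supplied, for instance, by \cref{lem:overall_est_error}. The $o_p(1)$ hypotheses then only serve to make the right-hand side vanish.

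The one point needing care is the boundedness of $\hat z_j$, which is what allows the grouping that avoids a cross term $\Delta_i^\top\Delta_j$. If one preferred not to rely on the constraint, the cross term could be bounded by $e_ie_j$. Summing gives $\sum_{i,j}e_i^2e_j^2=n^2r_{n,\Phi}^4$, and this is dominated by $n^2 r_{n,\Phi}^2$ because $r_{n,\Phi}=o_p(1)$. That alternative route is where the probability-tending-to-one qualifier is genuinely needed.
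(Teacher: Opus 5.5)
Your proof is correct and follows essentially the same route as the paper: the global $1/4$-Lipschitz property of $\sigma$, then an entrywise bound $|\hat\pi_{ij}-\pi^*_{ij}|\le (R+1)(e_i+e_j)+r_{n,\rho}$ from the boundedness of $\hat z_j$ (via $\mathcal{C}_R$) and of $z_i^*$ (via \cref{ass:lsm}), then squaring and summing over pairs. You make the constant explicit as $C=\sqrt{6}(R+1)/4$, and you rightly note that the inequality is deterministic once $\widehat\Phi\in\mathcal{C}_R$, so the $o_p(1)$ hypotheses only serve to make the right-hand side vanish.
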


\begin{proof}
Since the logistic map $\sigma$ is globally $1/4$-Lipschitz,
\[
|\widehat p_{ij}-p_{ij}^*|
\le
\frac14|\widehat\pi_{ij}-\pi_{ij}^*|.
\]
By the estimator constraint in Eq.~(2), both the fitted and true latent positions are uniformly bounded. Hence, on an event of probability tending to one,
\[
\begin{aligned}
|\widehat\pi_{ij}-\pi_{ij}^*|
&\le
\bigl|\widehat z_i^\top \widehat z_j - (Uz_i^*)^\top(Uz_j^*)\bigr|
+ |\widehat\alpha_i-\alpha_i^*| + |\widehat\alpha_j-\alpha_j^*| + |\widehat\rho-\rho_n^*| \\
&\le
C\Bigl(\|\widehat\Phi_i-T\Phi_i^*\|_2 + \|\widehat\Phi_j-T\Phi_j^*\|_2\Bigr) + r_{n,\rho}
\end{aligned}
\]
for some constant $C>0$. Squaring both sides, summing over $i<j$, and using
\[
\sum_{i<j}\Bigl(a_i+a_j\Bigr)^2 \le 2(n-1)\sum_{i=1}^n a_i^2,
\]
we obtain
\[
\|\widehat P-P^*\|_F^2
\le
C n\sum_{i=1}^n \|\widehat\Phi_i-T\Phi_i^*\|_2^2 + C n^2 r_{n,\rho}^2.
\]
Dividing by $n^2$ and taking square roots yields the desired bound.
\end{proof}

\begin{theorem}[Fixed-order motif consistency for SyNGLER]\label{thm:motif-consistency}
Suppose \cref{ass:lsm} and \cref{ass:sparse} hold, and let $H$ be any fixed simple graph. Assume furthermore that
\[
r_{n,\mathrm{gen}} := \frac{1}{n}\|\widetilde P-\widehat P\|_F = o_p(1).
\]
Then, for either $\nu_H=\mu_H$ or $\nu_H=\mu_H^{\mathrm{ind}}$,
\[
|\nu_H(\widetilde A)-\nu_H(A)|
=
O_p\!\left(
w_n^{-1/2}n^{-1/2}(\log n)^{1/2} + r_{n,\mathrm{gen}} + n^{-1}
\right).
\]
In particular,
\[
\nu_H(\widetilde A)-\nu_H(A) \xrightarrow{p} 0.
\]
\end{theorem}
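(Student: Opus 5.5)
The approach is a triangle-inequality decomposition through the probability matrices $P^*$, $\widehat P$ and $\widetilde P$. Here $P^*_{ij}=\sigma(\pi^*_{ij})$ is the true edge-probability matrix. $\widehat P$ is the fitted matrix from $(\widehat\Phi,\widehat\rho)$, and $\widetilde P$ is the matrix from the generated embeddings $\widetilde\Phi$ together with $\widehat\rho$; all three have zero diagonal. We write
\[
|\nu_H(\widetilde A)-\nu_H(A)|
\le |\nu_H(\widetilde A)-\nu_H(\widetilde P)|
+|\nu_H(\widetilde P)-\nu_H(\widehat P)|
+|\nu_H(\widehat P)-\nu_H(P^*)|
+|\nu_H(P^*)-\nu_H(A)|
\]
and bound each of the four terms with one of the preceding propositions.

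For the two outer (sampling) terms we use \cref{prop:motif-concentration}. Conditional on $\Phi^*$, $A$ has independent Bernoulli$(P^*_{ij})$ edges, so $|\nu_H(A)-\nu_H(P^*)|=O_p(n^{-1})$. Conditional on $(A,\widetilde\Phi)$, Algorithm~\ref{alg:network_generation} draws $\widetilde A$ with independent Bernoulli$(\widetilde P_{ij})$ edges, so $|\nu_H(\widetilde A)-\nu_H(\widetilde P)|=O_p(n^{-1})$. The tail bound there is uniform in $B$, so it survives integrating out the conditioning. This step is routine.

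For the two middle terms we use \cref{prop:motif-lipschitz}, which gives $|\nu_H(\widetilde P)-\nu_H(\widehat P)|\le C_H\,r_{n,\mathrm{gen}}$ directly from the assumption on $r_{n,\mathrm{gen}}$. It also gives $|\nu_H(\widehat P)-\nu_H(P^*)|\le C_H n^{-1}\|\widehat P-P^*\|_F$. To control $\|\widehat P-P^*\|_F/n$ we invoke \cref{prop:probability-bridge}, fed by \cref{lem:overall_est_error}: that lemma gives $r_{n,\Phi}^2=O_p(w_n^{-1}n^{-1}\log n)$ and $r_{n,\rho}=O_p(w_n^{-1/2}n^{-1/2}\log^{1/2} n)$, both $o_p(1)$ under \cref{ass:sparse}. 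Hence $n^{-1}\|\widehat P-P^*\|_F=O_p(w_n^{-1/2}n^{-1/2}(\log n)^{1/2})$. A more direct route is the bound $\|\widehat\Pi-\Pi^*\|_F=O_p(w_n^{-1/2}n^{1/2}\log^{1/2}n)$ from the proof of \cref{lem:overall_est_error}, combined with the $1/4$-Lipschitz property of $\sigma$; this yields the same rate up to constants. Adding the four bounds gives the stated rate, and \cref{ass:sparse} makes every term vanish, which proves convergence in probability.

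The main obstacle is the consistency of the rotation conventions. $\widehat P$ depends only on the inner products $\widehat z_i^\top\widehat z_j$, and $P^*$ is invariant under $z\mapsto Uz$, so the transform $\cT$ causes no trouble there. The subtler issue is that \cref{lem:overall_est_error} is stated relative to the recentred parameters $(Z^\dagger,\alpha^\dagger,\rho^\dagger)$, not $(Z^*,\alpha^*,\rho^*)$. Since $Z^\dagger Z^{\dagger\top}+\alpha^\dagger\mathbf 1_n^\top+\mathbf 1_n\alpha^{\dagger\top}+\rho^\dagger\mathbf 1_n\mathbf 1_n^\top$ reproduces $\Pi^*$ exactly, that recentring leaves $P^*$ unchanged. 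I would therefore bound $\|\widehat\Pi-\Pi^*\|_F$ directly, which sidesteps the issue. One further caveat: in the sparse regime $\nu_H$ itself is of order $w_n^{|E(H)|}$, so the theorem is a statement about absolute rather than relative error. This is consistent with the statement as given.
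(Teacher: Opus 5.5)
Your proposal is correct and matches the paper's proof. Both route the comparison through $P^*$, $\widehat P$ and $\widetilde P$; both bound the two sampling terms by \cref{prop:motif-concentration} and the probability-matrix gap via \cref{prop:motif-lipschitz}, \cref{prop:probability-bridge} and \cref{lem:overall_est_error}. The only cosmetic difference is that the paper first applies the triangle inequality to $\frac1n\|\widetilde P-P^*\|_F$ and then invokes the Lipschitz bound once, giving a three-term split, whereas you apply the Lipschitz bound separately to the two middle terms.
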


\begin{proof}
By the triangle inequality,
\[
\frac{1}{n}\|\widetilde P-P^*\|_F
\le
\frac{1}{n}\|\widetilde P-\widehat P\|_F + \frac{1}{n}\|\widehat P-P^*\|_F.
\]
The first term is $r_{n,\mathrm{gen}}$ by assumption. For the second term, Proposition~\ref{prop:probability-bridge} and \cref{lem:overall_est_error} imply
\[
\frac{1}{n}\|\widehat P-P^*\|_F
=
O_p\!\left(w_n^{-1/2}n^{-1/2}(\log n)^{1/2}\right).
\]
Therefore,
\[
\frac{1}{n}\|\widetilde P-P^*\|_F
=
O_p\!\left(w_n^{-1/2}n^{-1/2}(\log n)^{1/2} + r_{n,\mathrm{gen}}\right).
\]
Applying Proposition~\ref{prop:motif-lipschitz}, we obtain
\[
|\nu_H(\widetilde P)-\nu_H(P^*)|
=
O_p\!\left(w_n^{-1/2}n^{-1/2}(\log n)^{1/2} + r_{n,\mathrm{gen}}\right).
\]
Finally, decompose
\[
\nu_H(\widetilde A)-\nu_H(A)
=
\bigl[\nu_H(\widetilde A)-\nu_H(\widetilde P)\bigr]
+
\bigl[\nu_H(\widetilde P)-\nu_H(P^*)\bigr]
+
\bigl[\nu_H(P^*)-\nu_H(A)\bigr].
\]
The first and third terms are both $O_p(n^{-1})$ by Proposition~\ref{prop:motif-concentration}, while the middle term has already been bounded above. Combining the three bounds proves the theorem.
\end{proof}

\begin{corollary}[Triangle density consistency]\label{cor:triangle-consistency}
Let $H=K_3$. Then the triangle density used in Appendix~C.3 satisfies
\[
\mathrm{TD}(\widetilde A)-\mathrm{TD}(A) \xrightarrow{p} 0.
\]
\end{corollary}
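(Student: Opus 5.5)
The corollary is the special case $H=K_3$ of \cref{thm:motif-consistency}, so the plan is to check that the triangle density of Appendix~C.3 fits the framework of $\nu_H$, and then invoke the theorem.

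\textbf{Step 1: identify the triangle density with $\mu_{K_3}$.} Take $H=K_3$, with $v_H=3$ and $|E(H)|=3$. The injective density is
\[
\mu_{K_3}(A)=\frac{1}{n(n-1)(n-2)}\sum_{\iota:[3]\hookrightarrow[n]}A_{\iota(1)\iota(2)}A_{\iota(2)\iota(3)}A_{\iota(1)\iota(3)}.
\]
Each triangle is counted $3!$ times, so
\[
\mu_{K_3}(A)=\frac{\#\{\text{triangles in }A\}}{\binom{n}{3}}.
\]
This is the standard triangle density. If Appendix~C.3 instead normalizes by $n^3$ or by $\binom{n}{3}$ with a different constant, then $\mathrm{TD}=c_n\,\mu_{K_3}$ with $c_n\to c\in(0,\infty)$ deterministic. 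Convergence in probability of the difference is preserved under multiplication by such a $c_n$.

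\textbf{Step 2: apply \cref{thm:motif-consistency}.} Under \cref{ass:lsm} and \cref{ass:sparse}, and with the theorem's standing condition $r_{n,\mathrm{gen}}=o_p(1)$, the theorem gives
\[
|\mu_{K_3}(\widetilde A)-\mu_{K_3}(A)|=O_p\bigl(w_n^{-1/2}n^{-1/2}(\log n)^{1/2}+r_{n,\mathrm{gen}}+n^{-1}\bigr).
\]
By \cref{ass:sparse}, $w_n n/\log n\to\infty$, so the first term tends to zero. The second term is $o_p(1)$ by assumption, and the third is deterministic and vanishing. Hence the difference converges to $0$ in probability.

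\textbf{Main point of care.} The only substantive issue is that the corollary implicitly inherits the hypothesis $r_{n,\mathrm{gen}}=o_p(1)$ from \cref{thm:motif-consistency}, and the argument must state this explicitly. A secondary check is that the paper's definition of $\mathrm{TD}$ matches $\mu_{K_3}$ up to a deterministic bounded factor.

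If $\mathrm{TD}$ were instead defined through a ratio, as in a clustering coefficient, that would require extra control of the denominator. It is not the case here, since triangle density is a pure count normalized by a deterministic factor.
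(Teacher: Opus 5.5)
Your proposal is correct and follows the paper's proof: identify $\mathrm{TD}$ with $\mu_{K_3}$ and invoke \cref{thm:motif-consistency}. As you point out, the corollary silently carries over that theorem's hypothesis $r_{n,\mathrm{gen}}=o_p(1)$. The identification is in fact exact: Appendix~C.3 sets $\mathsf{TD}(A)=\tfrac16\mathrm{tr}(A^3)/\binom{n}{3}$ and $(n)_3=6\binom{n}{3}$, so your fallback about a rescaling constant $c_n$ is not needed.
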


\begin{proof}
For $H=K_3$, we have $\mu_{K_3}(A)=\mathrm{TD}(A)$ and $\mu_{K_3}(\widetilde A)=\mathrm{TD}(\widetilde A)$. The conclusion therefore follows immediately from Theorem~\ref{thm:motif-consistency}.
\end{proof}

\begin{remark}
Theorem~\ref{thm:motif-consistency} is stated for fixed-order motif densities themselves. A separate corollary for normalized 4-node graphlet frequencies can be added once one imposes an additional nondegeneracy condition on the connected 4-node denominator. In the current sparse regime, this step requires extra care and is best handled separately.
\end{remark}

\section{Connections Between Classical Network Generative Models and the LSM}\label{sec:connection}

We illustrate the connection between these models and the latent space model below. 
First of all, the general latent space network model assumes that each $A_{ij}\sim p(\cdot \mid \pi_{ij})$ where $\pi_{ij} = z_i^\top z_j + \alpha_i + \alpha_j$ and $p(\cdot \mid \pi)$ is a link function that can be chosen flexibly. 
Below, we explain its connection to several classical network models with details. 
We remark that some classical node-embedding models already belong to the latent space model. 
For example:


\begin{itemize}
    \item The Chung-Lu graph model \citep{chung2002average} assumes that each node $i$ is equipped with a degree parameter $w_i > 0$, and lets $W := \sum_{k=1}^n w_k$ be the total "weight". 
    This model assumes independent edges with $\PP(A_{ij} = 1) = {w_i w_j}/{W}$. 
    We can then construct the latent embedding for the $i$-th node as $z_i = \frac{w_i}{\sqrt{W}} \in \RR$,
    and collect them into the embedding matrix $Z = (z_1,\ldots,z_n)^\top \in \RR^{n\times 1}$. 
    With this parametrization, we have that
    $\EE[A] = Z Z^\top$. 
    Therefore, the Chung--Lu model is a latent space model with a linear link function $p(\cdot\mid \pi) =\mathrm{Bernoulli}(\pi)$.

    \item The random dot product graph (RDPG) model \citep{athreya2018statistical} assumes that each node $i$ has a latent position $z_i \in \RR^r$ such that $z_i^\top z_j \in [0,1]$ for all $i,j$.  
    With the embedding matrix  $Z = (z_1,\ldots,z_n)^\top \in \RR^{n\times r}$, RDPG assumes that $A \sim \mathrm{Bernoulli}(Z Z^\top)$, 
    which is exactly a latent space model with the linear link function $p(\cdot\mid \pi) =\mathrm{Bernoulli}(\pi)$.

\end{itemize}

Besides, many block-structured graph models also fall into the scope of latent space network models. 
For example: 

\begin{itemize}

\item  The (degree corrected) block model (DCBM/SBM) \citep{holland1983stochastic, karrer2011stochastic}, similar to the BTER model, assumes that each node is equipped with a cluster label $g_i \in [K]$ and a degree parameter $\theta_i$. Then it assumes that $\mathbb{E}[A]_{ij} = \theta_i \theta_j B_{g(i)g(j)}$ where $B\in [0,1]^{K\times K}$ is symmetric and positive semi-definite, and $\theta_i\in [0,1]$ is the degree parameter for node $i$. Let $B = U U^\top$ be the symmetric decomposition of $B$, where $U_k$ is the $k$-th row of $U$.
Then, we can construct the latent embedding for the $i$-th node as $z_i = ( \theta_i U_{g_i}^\top)\in \mathbb{R}^{K}$. 
Using the linear link function $p(\cdot\mid \pi) =\mathrm{Bernoulli}(\pi)$, it can be formulated as the latent space model $A\sim p(\cdot \mid ZZ^\top)$

\item Mixed-Member ship  block models \citep{airoldi2008mixed}. Beyond classical block models, this model assumes that each node is associated with multiple blocks. 
Specifically, this model assumes that $\EE[A]_{ij} = \pi_i B \pi_j$, where each $\pi_i$ belongs to the probability simplex $\Delta_K = \{\pi: \pi \in [0,1]^K, \norm{\pi}_1= 1\}$.
Whenever $B$ is positive definite with symmetric decomposition $B = UU^\top$ with $U\in \mathbb{R}^{K\times r}$, we can construct $Z = (z_1, \ldots, z_n)^\top$ such that its $i$-th row vector $z_i= U^\top \pi_i \in \mathbb{R}^K$. 
With the linear link function $p(\cdot\mid \pi) =\mathrm{Bernoulli}(\pi)$, we have that $A\sim \mathrm{Bernoulli}(ZZ^\top )$, which is an instance of latent space model. 
 
\end{itemize}

In general, we see that these classical methods can generally be approximated by the latent space model with a suitable choice of the latent embedding and the link function. Therefore, we believe that the latent space model is sufficiently general to cover classical network models. 

\section{Supplemental Materials for Experiments}\label{app:simulation}
\subsection{Deferred Algorithms} \label{app:algorithms}

\paragraph{Estimation in the latent space model.}
Suppose that we observe a network $A$, and we want to fit a latent space network model on $A$ with proper conditional model $p(\cdot\mid \cdot)$ and candidate latent dimension $r$.
We use the following \cref{alg:pgd} to solve \cref{eq:lsm_estimate}.
\begin{algorithm}[H]
\caption{Projected Gradient Descent}
\label{alg:pgd}
\begin{algorithmic}[1]
\REQUIRE  Network observation $A\in\mathbb{R}^{n\times n}$, model $p(\cdot |  \cdot)$,
stepsizes $\eta_Z,\eta_\alpha>0$, number of iterations $N\in\mathbb{N}$;
\FOR{$i=0,1,\ldots, N -1$}
  \STATE $\Pi  \gets  Z Z^\top  +  \alpha \mathbf{1}^\top  +  \mathbf{1} \alpha^\top$;
  \STATE $Z  \gets  Z  +  2 \eta_Z  \; \partial_\pi\,  p(A  |  \Pi )   Z$;
  \STATE $\alpha  \gets  \alpha  +  2 \eta_\alpha \;    \partial_\pi\,  p(A  |  \Pi )\mathbf{1}_n$;
  \STATE $Z \gets \big(Z - n^{-1 }\mathbf{1}_n\mathbf{1}_n^\top Z\big)R$, where $R \in \RR^{r\times r}$ is the orthonormal matrix such that $n^{-1}(Z - n^{-1 }\mathbf{1}_n\mathbf{1}_n^\top Z)^\top (Z - n^{-1 }\mathbf{1}_n\mathbf{1}_n^\top Z)$.
\ENDFOR
 \OUTPUT{$\hat Z = Z, \hat\alpha = \alpha - \alpha^\top \mathbf{1}_n/n, \hat\rho = \alpha^\top \mathbf{1}_n $.}
\end{algorithmic}
\end{algorithm}
The convergence of \cref{alg:pgd} in the well-specified setting can be found in \citet{ma2020universal}.
In practice, we need to use a proper initialization for $Z$ and $\alpha$.
And we use the output of universal singular value thresholding (USVT) \citep{chatterjee2015matrix} as the initialization of $Z$ and $\alpha$.
The detail of this initialization algorithm can be found in  \citet{ma2020universal}.



\subsection{Dataset Details}\label{app:datasets}

\paragraph{Simulated Datasets.}

In the simulated datasets evaluation, we consider $(n,r) \in \{500,1000,1500\} \times\{2,3,4\}$.
For each $(n,r)$ pair and each replicate $t=1,\dots,200$, we generate an undirected sparse simple graph $A\in\{0,1\}^{n\times n}$ as follows.

We first draw the degree parameters $\alpha_i \;\overset{\text{i.i.d.}}{\sim}\; \mathrm{Unif}([-1/2,1/2])$ for $i=1,\dots,n$ and set $\alpha =(\alpha_1,\dots,\alpha_n)$ .
Let $\widetilde z_i\in\mathbb{R}^r$ be i.i.d. realizations of $\mathrm{proj}_{[-2/\sqrt{r},2/\sqrt{r}]^r}\# \mathcal{N}_r(0,I_r/r)$ (i.e., a scaled Gaussian distribution truncated to $[-2/\sqrt{r},2/\sqrt{r}]^r$).
We then independently draw two centers $v^{(1)},v^{(2)}\in\mathbb{R}^r$ from $\mathrm{Unif}([-1,1]^r)$.
For each node, we independently sample a label $L_i$ with $\PP(L_i=1)=\PP(L_i=2)=1/2$ for $i=1,\dots,n$.
Finally, we set $z_i' = \widetilde z_i +  v^{(L_i)}$ and $z_i = z_i' \cdot {(n^{-1}\|\sum_{i} z_i' z_i'^\top \|_{\mathrm{F}})^{-1/2}}$.

Given the latent positions and the degree parameters, we generate the network edges.
We set the sparsity parameter $\rho_n^* = -0.4 \log n$.
For each pair of nodes $1 \le i < j \le n$, we calculate $p_{ij} = \sigma(\alpha_i + \alpha_j + z_i^\top z_j + \rho_n^*)$.
Then we independently sample $A_{ij} = A_{ij} \sim \mathrm{Bernoulli}(p_{ij})$ for $i<j$ and set $A_{ii} = 0$ for all $i\le n$.

\paragraph{Real-world datasets.}

\begin{table}[ht]
\centering
\caption{Dataset statistics for Yelp, YouTube, DBLP and PolBlogs.}
\resizebox{\textwidth}{!}{
\begin{tabular}{l|cc|cc|ccc}
\toprule
\multirow{2}{*}{Dataset} & \multicolumn{2}{c|}{Original Dataset} & \multicolumn{2}{c|}{Subgraph} & \multicolumn{3}{c}{Statistics} \\
\cmidrule(lr){2-3}\cmidrule(lr){4-5}\cmidrule(lr){6-8}
& Nodes & Edges & Nodes & Edges & Density & Clustering Coef. & Triangle Density \\
\midrule
Yelp     & 906,179   & 7,305,874 & 4,530 & 541,655 & 0.0527 & 0.1976 & 0.0010 \\
YouTube  & 1,134,890 & 2,987,624 & 1,991 & 51,756  & 0.0261 & 0.1891 & 0.0002 \\
DBLP     & 317,080   & 1,049,866 & 1,481 & 18,901  & 0.0172 & 0.9116 & 0.0008 \\
PolBlogs & 1,490     & 19,090    & 1,222 & 16,714  & 0.0224 & 0.2259 & 0.0003 \\
\bottomrule
\end{tabular}

}
\label{tab:dataset-stats}
\end{table}


\begin{figure}[htbp]
  \centering

\begin{subfigure}{0.48\linewidth}
    \includegraphics[width=\linewidth,keepaspectratio]{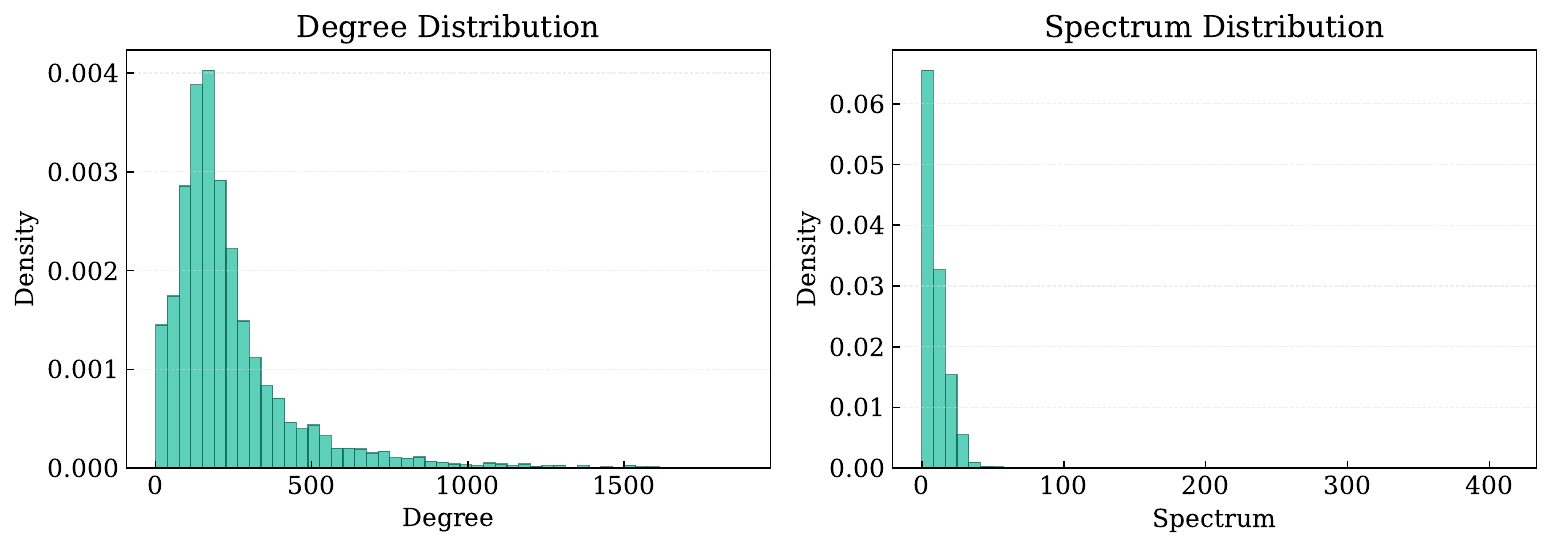}
    \caption{Yelp dataset.}
  \end{subfigure}
  \begin{subfigure}{0.48\linewidth}
    \includegraphics[width=\linewidth,keepaspectratio]{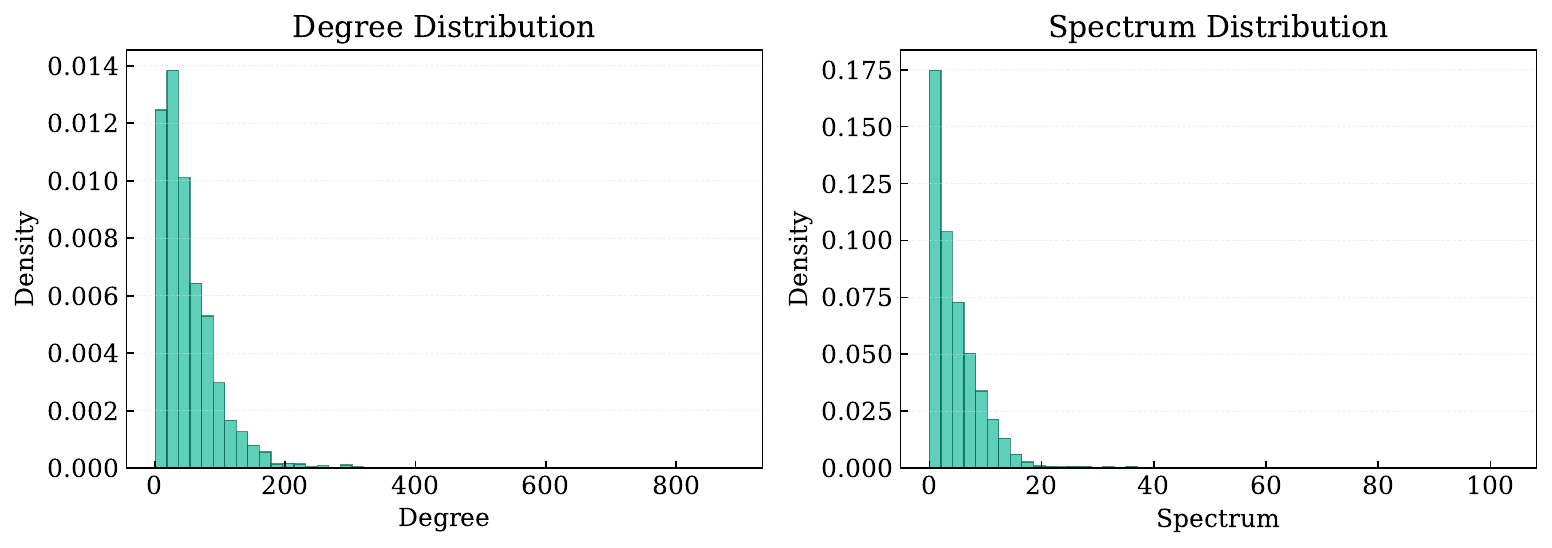}
    \caption{YouTube dataset.}
  \end{subfigure}\hfill

    \medskip

  \begin{subfigure}{0.48\linewidth}
    \includegraphics[width=\linewidth,keepaspectratio]{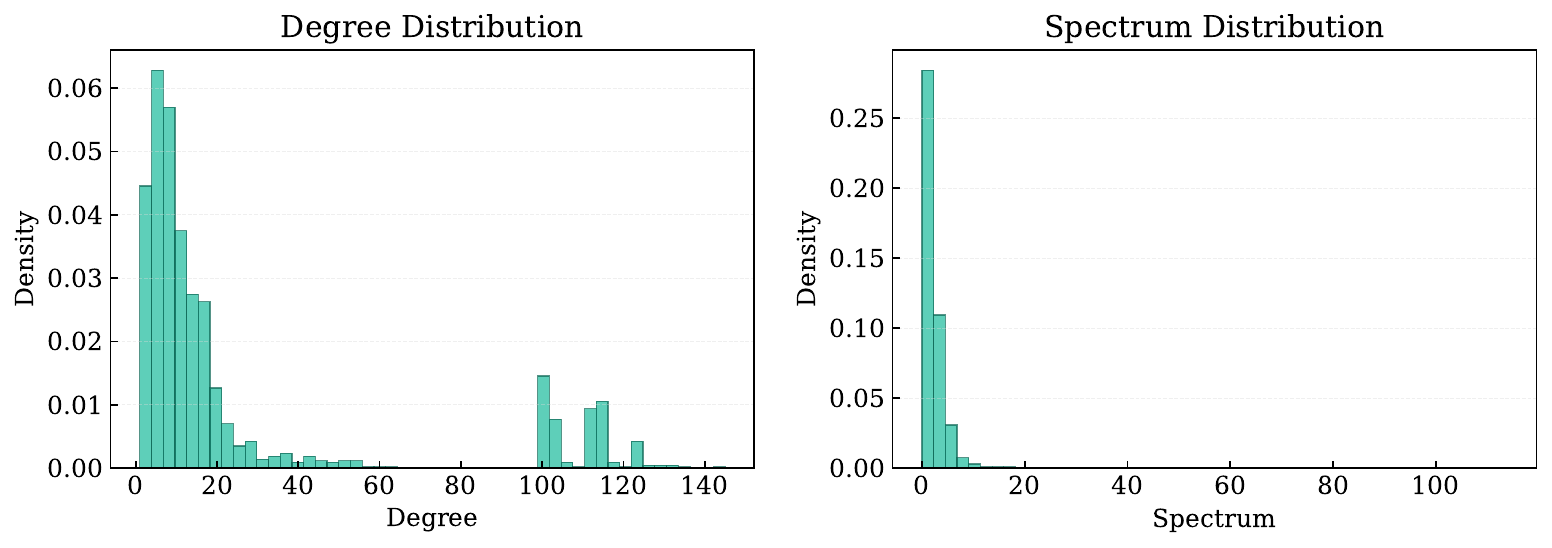}
    \caption{DBLP dataset.}
  \end{subfigure}\hfill
  \begin{subfigure}{0.48\linewidth}
    \includegraphics[width=\linewidth,keepaspectratio]{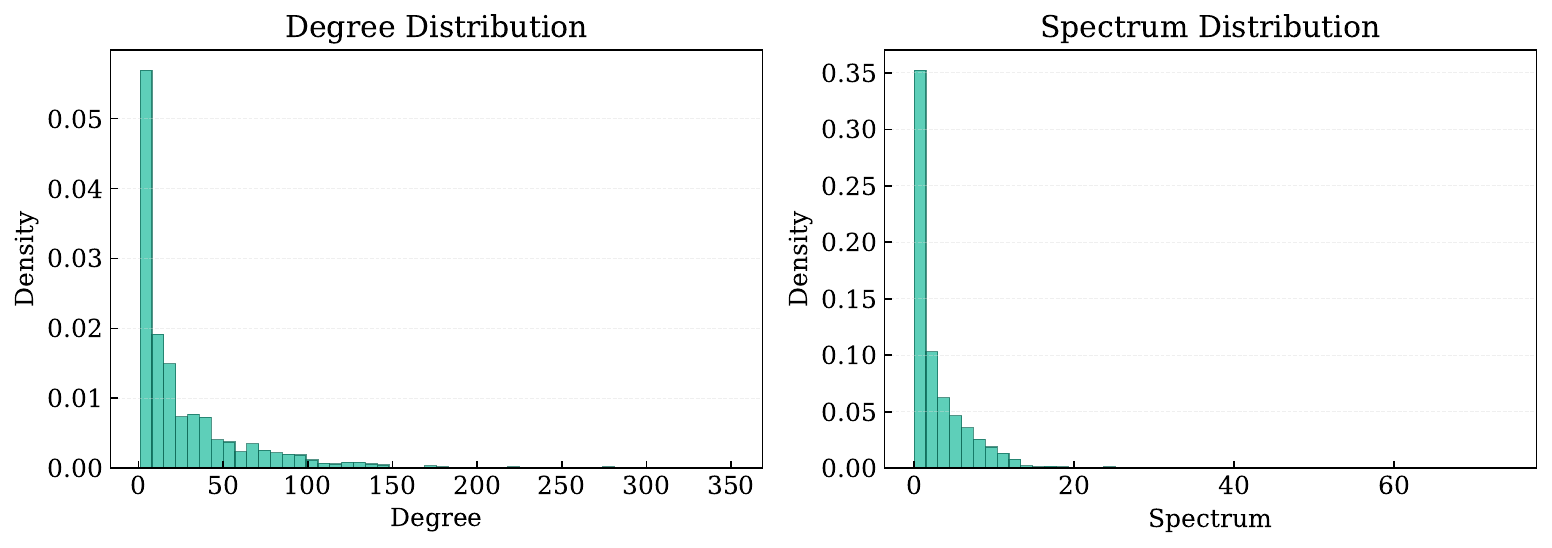}
    \caption{PolBlogs dataset..}
  \end{subfigure}

  \caption{Degree and eigenvalue distributions for four real-world datasets.}
  \label{fig:deg_spe grid}
\end{figure}

We evaluate on four networks spanning thousands to millions of nodes. For Yelp, YouTube, and DBLP, whose full graphs are extremely large and highly sparse, we construct tractable training sets by extracting high-degree nodes and then taking the largest connected component (LCC).

In the Yelp and YouTube datasets, nodes represent users and an undirected edge between users represents a social tie (friendship/subscription).
In the DBLP dataset, nodes represent authors, and an edge connects two authors if they have coauthored at least one paper.
In the PolBlogs dataset, nodes represent U.S.\ political blogs from the 2004 election blogosphere and two blogs are connected if one of them contains a link to the other.

Since the Yelp, YouTube, and DBLP datasets are very large and contain many low-degree nodes, we sample induced subgraphs for tractable evaluation.
Our general procedure is to rank nodes by degree, take the induced subgraph on the top-$k$ nodes (for $k$ between 1,000 and 5,000), and then extract the largest connected component (LCC).
For Yelp, we select the top $0.5\%$ of users by degree, yielding an LCC of 4,530 nodes and 541,655 edges.
For YouTube and DBLP, we take the top 2,000 and 1,500 nodes, resulting in LCCs of 1,991 and 1,481 nodes, respectively.
To avoid out-of-memory (OOM) issues for some baseline methods, we cap most subgraphs at $\le 2,000$ nodes.
The PolBlogs network is relatively smaller, so we use its full LCC of 1,222 nodes and 16,714 edges.
For all networks, we symmetrize edges and remove self-loops.
Key statistics for the original and the extracted graphs are in \cref{tab:dataset-stats}, with degree and eigenvalue(spectrum) distributions shown in \cref{fig:deg_spe grid}.

\subsection{Evaluation Details}\label{sec:append:metrics}

\paragraph{Metrics for similarity.}
We assess the quality of the generated networks by comparing some metrics that capture both numerical and structural aspects of a network.
For the \emph{numerical} characteristics, we use the triangle density and the global clustering coefficient. For the \emph{structural} characteristics, we consider the distribution of degree centralities and the eigenvalues of the adjacency matrix.
For any network adjacency $A$, we consider the following numerical characteristics:
\begin{itemize}
\item The triangle density: $\mathsf{TD}(A) = \mathsf{NT}(A) / \binom{n}{3}$ where $\mathsf{NT}(A) = \frac{1}{6}\,\mathrm{tr}(A^3)$ is the number of triangles in the graph;
\item The global clustering coefficient: $\mathsf{GC}(A) = {3\,\mathsf{NT}(A)} \big / {\sum_{i=1}^n \binom{d_i}{2}}$ where $d_i = \sum_{j\neq i} A_{ij}$ is the degree of node $i$.
\end{itemize}
For each input network and generative model, we generate $S=200$ independent networks $\tA_1,\dots,\tA_S$ and compute the empirical distribution of each numerical characteristic.
Specifically, for a numerical characteristic $f$ with $f\in \{\mathsf{TD},\mathsf{GC}\}$ and a collection of generated networks $\{\widetilde A^{(s)}\}_{s=1}^S$, we compute
\begin{align}
\mathrm{RMSE}_f &= \Big(\frac{1}{S}\sum_{s=1}^S\big(f(A)-f(\widetilde A^{(s)})\big)^2\Big)^{1/2},\\
\mathrm{MAE}_f  &= \frac{1}{S}\sum_{s=1}^S\big|f(A)-f(\widetilde A^{(s)})\big|,\\
\mathrm{Bias}_f &= \frac{1}{S}\sum_{s=1}^S\big(f(\widetilde A^{(s)})-f(A)\big).
\end{align}

For the structural characteristics, we consider the following:
\begin{itemize}
\item The degree centrality: $\mathsf{DC}(A) = (d_1,\dots,d_n)$, where $d_i = \sum_{j\neq i} A_{ij}$ is the degree of node $i$;
\item The eigenvalues: $\mathsf{EV}(A) = (\lambda_1,\dots,\lambda_n)$, where $\lambda_1\ge \lambda_2\ge \cdots \ge \lambda_n$ are the eigenvalues of $A$.
\end{itemize}
For two vectors, we consider the Wasserstein distance 1-distance, the Kolmogorov–Smirnov distance, energy distance and maximum mean discrepancy (MMD) as follows:
\begin{align}
W^1(u,v) &= \frac{1}{n} \sum_{i=1}^n |u_{(i)} - v_{(i)}|, \\
\mathrm{KS}(u,v) &= \sup_{x\in\mathbb{R}} \big|\frac{1}{n}\sum_{i=1}^n \mathbf{1}\{u_i\le x\} - \frac{1}{n}\sum_{i=1}^n \mathbf{1}\{v_i\le x\}\big|,\\
\mathrm{ED}(u,v) &= \frac{2}{n^2}\sum_{i,j=1}^n |u_i-v_j| - \frac{1}{n^2}\sum_{i,j=1}^n |u_i-u_j| - \frac{1}{n^2}\sum_{i,j=1}^n |v_i-v_j|,\\
\mathrm{MMD}(u,v) &= \frac{1}{n^2}\sum_{i,j=1}^n k(u_i,u_j) + \frac{1}{n^2}\sum_{i,j=1}^n k(v_i,v_j) - \frac{2}{n^2}\sum_{i,j=1}^n k(u_i,v_j),
\end{align}
where $u_{(1)}\le u_{(2)}\le \cdots \le u_{(n)}$ are the order statistics of $u$, and $k(x,y) = \exp(-|x-y|^2/2)$ is the standard Gaussian RBF kernel.
For a structural characteristic $f$ with $f\in \{\mathsf{DC},\mathsf{EV}\}$ and a discrepancy metric $d$ with $d\in \{W^1,\mathrm{KS},\mathrm{ED},\mathrm{MMD}\}$, we compute the average distance between the original network and the generated networks as $\bar{d}_f = \frac{1}{S}\sum_{s=1}^S d\big(f(A),f(\widetilde A^{(s)})\big)$.

We additionally consider the graphlet frequency as the structural distance and calculate the $L^1/L^2$ distances between the original graph and the generated graph.
The graphlet frequency is defined as follows.
Let $\mathcal{S}_4 := \bigl\{ S \subset [n] : |S| = 4 \bigr\}$ be the collection of all $4$-vertex subsets of $V$.
For any $\cS =(i_1, i_2, i_3,i_4)$ with $i_1<i_2<i_3<i_4$,
we define the local degrees for $r\le 4$ as
$$
d_r(S) := \sum_{\substack{s \in S \\ s \neq i_r}} A_{i_r s},\qquad r = 1,2,3,4.
$$
Then we consider its  order statistics $d_{(1)}(S) \le d_{(2)}(S) \le d_{(3)}(S) \le d_{(4)}(S)$ and the total edges$
e(S) := \frac{1}{2}\sum_{r=1}^4 d_r(S)$.

The six connected $4$-node graphlets are denoted as
\begin{align}
\mathcal{G}_4 := \{K_{1,3}, P_4, C_4, T, D, K_4\},
\end{align}
where $K_{1,3}$ is the $3$-star, $P_4$ is the $4$-path, $C_4$ is the $4$-cycle,
$T$ is the \emph{triangled tail} (a triangle with a pendant vertex),
$D$ is the \emph{diamond} ($K_4$ with one edge removed), and $K_4$ is the complete
graph on $4$ vertices.
Each graphlet $g \in \mathcal{G}_4$ is uniquely characterized
(by isomorphism type) by its edge count $e_g$ and its ordered degree sequence
$\delta_g = (\delta_{g,1},\delta_{g,2},\delta_{g,3},\delta_{g,4})$, namely
\begin{align}
g = K_{1,3}\quad \Longleftrightarrow\quad e_g = 3,  \delta_g = (1,1,1,3),\\
g = P_4      \quad \Longleftrightarrow\quad e_g = 3, \delta_g = (1,1,2,2),\\
g = C_4          \quad \Longleftrightarrow\quad e_g = 4, \delta_g = (2,2,2,2),\\
g = T            \quad \Longleftrightarrow\quad e_g = 4, \delta_g = (1,2,2,3),\\
g = D            \quad \Longleftrightarrow\quad e_g = 5,  \delta_g = (2,2,3,3),\\
g = K_4          \quad \Longleftrightarrow\quad e_g = 6,  \delta_g = (3,3,3,3).
\end{align}

For each $g \in \mathcal{G}_4$, define the indicator
\[
\mathbf{1}_g(S)
:= \mathbf{1}\Bigl\{ e(S) = e_g,\
\bigl(d_{(1)}(S),d_{(2)}(S),d_{(3)}(S),d_{(4)}(S)\bigr) = \delta_g \Bigr\},
\qquad S \in \mathcal{S}_4.
\]
We aggregate them to get the corresponding $4$-node graphlet count: $C_g(A) := \sum_{S \in \mathcal{S}_4} \mathbf{1}_g(S)$, and normalize to get the 4-graphlet frequency
$\mathrm{GF}_4(A)$ such that the $g$-th coordinate is $\mathrm{GF}_4(A)_g := \frac{C_g(A)}{\sum_{g' \in \cG_4 } C_{g'} (A)}$.
We use $L^1$ and $L^2$ to measure the distance between two graphs, namely
\begin{align}
    \mathrm{GFD}_{L^1}(A,A')&=\sum_{g\in \cG_4}  |\mathrm{GF}_4(A)_g  - \mathrm{GF}_4(A')_g |;\\
    \mathrm{GFD}_{L^2}(A,A')&=\Big(\sum_{g\in \cG_4}  |\mathrm{GF}_4(A)_g  - \mathrm{GF}_4(A')_g |^2\Big)^{1/2}.
\end{align}

\paragraph{Evaluation pipeline.}
For a single input network $A$, we generate $S=200$ independent networks and calculate the above metrics.
In the real-world dataset setting, we directly report the averaged metrics for each input network and its associated standard deviation.

\subsection{Implementation Details} \label{app:implementation}

\paragraph{Implementation of SyNGLER.}

For the SyNG-D, we use \texttt{ForestDiffusion}~\citep{jolicoeur2024generating} to construct the score approximation.
\cref{tab:dde-hparams} lists all the hyperparameters for \texttt{ForestDiffusion} throughout our experiments.
\begin{table}[ht]
\centering
\caption{\texttt{ForestDiffusion} Hyperparameters For Data Generation.}
\resizebox{\textwidth}{!}{
\begin{tabular}{c|cccc}
\toprule
\textbf{Category} & \textbf{Hyperparameter} & \textbf{Simulated Data} & \textbf{Real-world data}& \textbf{Description} \\
\midrule
\multirow{3}{*}{{ForestDiffusion}}
& \texttt{$n_t$} & \(50\) & \(100\) &Number of diffusion time steps. \\
& \texttt{duplicate K} & \(100\) &\(100\) & Sample duplication factor for training data. \\
& \texttt{diffusion type} & \texttt{vp} & \texttt{vp} & Use variance-preserving (VP) diffusion. \\
\midrule
\addlinespace
\multirow{7}{*}{XGBoost }
& \texttt{max depth} & \(7\) & \(7\) & Maximum tree depth. \\
& \texttt{number of estimators} & \(100\) & \(100\) & Number of boosting trees. \\
& \texttt{eta} & \(0.3\) &\(0.3\) & Learning rate. \\
& \texttt{tree method} & \texttt{hist} & \texttt{hist} & Histogram-based tree construction. \\
& \texttt{regression lambda} & \(0.0\) & \(0.0\) & L2 regularization parameter. \\
& \texttt{regression alpha} & \(0.0\) & \(0.0\) & L1 regularization parameter. \\
& \texttt{subsample} & \(1.0\) &  \(1.0\) & Row subsampling ratio per tree. \\
\bottomrule
\end{tabular}
}
\label{tab:dde-hparams}
\end{table}

\paragraph{Experimental environment.}

All experiments are conducted on
NVIDIA GeForce RTX 4090 (24\,GB) GPUs and 384 CPU cores.

The \texttt{ForestDiffusion} module is parallelized on the CPU and executes entirely on host cores.
We deploy VGAE and SyNG-D models on CPUs.
EDGE and GRAN are deployed on a single GPU, according to the default configuration in the original codebase.

\paragraph{Implementation for baselines.}
For the VGAE, EDGE and GRAN, we use the codebases hosted in \href{https://github.com/tkipf/gae}{gae}, \href{https://github.com/tufts-ml/graph-generation-EDGE}{graph-generation-EDGE}, \href{https://github.com/lrjconan/GRAN}{GRAN}, respectively.
These official implementations are used as the starting point; optimizer choices, learning-rate schedules, and sampling schedules are kept as released unless a swept configuration is stated below.
In the real data experiments, we report the best configuration for each method over the stated search set.
For SyNG-D and SyNG-R, we vary the latent dimension $r$ from 2 to 6.
For VGAE, we vary the embedding dimension consecutively from 2 to 6 and also include the released default dimension 16.
For GRAN, we choose the hidden-layer dimension from 128, 256, and 512.
EDGE, GraphMaker, E-R, mKPGM, BTER, and CELL use their released/default configurations.
OOM entries indicate that the method did not complete under the stated single-RTX-4090/CPU budget.





\subsection{Evaluation Results on Non-Sparse Simulated Networks}

\label{subsec:sim-full-result}




\paragraph{Structural characteristics.} The following \cref{tab:degcent-dists} and \cref{tab:eigenvalue-dists} summarize the discrepancies between the structural statistics of the generated networks and those of the input networks.

\begin{table}[htbp]
\centering
\caption{Averaged distance between the \emph{degree centralities} of the original network and the generated output. All values are reported as $\times 10^{-1}$.}
\resizebox{\textwidth}{!}{
\begin{tabular}{cc*{9}{c}}
\toprule
\multicolumn{2}{c}{} & \multicolumn{3}{c}{$n=500$} & \multicolumn{3}{c}{$n=1000$} & \multicolumn{3}{c}{$n=1500$} \\
\cmidrule(lr){3-5}\cmidrule(lr){6-8}\cmidrule(lr){9-11}
Metric & Method & r=2 & r=3 & r=4 & r=2 & r=3 & r=4 & r=2 & r=3 & r=4 \\
\midrule
\multirow{3}{*}{W1-dist}
& SyNG-D & $\mathbf{0.08}$ $\pm$ 0.02 & $\mathbf{0.08}$ $\pm$ 0.02 & $\mathbf{0.08}$ $\pm$ 0.02
         & $\mathbf{0.05}$ $\pm$ 0.01 & $\mathbf{0.05}$ $\pm$ 0.02 & $\mathbf{0.06}$ $\pm$ 0.02
         & $\mathbf{0.04}$ $\pm$ 0.01 & $\mathbf{0.04}$ $\pm$ 0.01 & $\mathbf{0.04}$ $\pm$ 0.01 \\
& SyNG-R & $\mathbf{0.08}$ $\pm$ 0.02 & $\mathbf{0.08}$ $\pm$ 0.02 & $\mathbf{0.08}$ $\pm$ 0.02
         & $\mathbf{0.05}$ $\pm$ 0.01 & 0.06 $\pm$ 0.02 & $\mathbf{0.06}$ $\pm$ 0.02
         & $\mathbf{0.04}$ $\pm$ 0.01 & $\mathbf{0.04}$ $\pm$ 0.01 & $\mathbf{0.04}$ $\pm$ 0.01 \\
& VGAE   & 0.49 $\pm$ 0.02 & 0.49 $\pm$ 0.03 & 0.50 $\pm$ 0.03
         & 0.49 $\pm$ 0.02 & 0.50 $\pm$ 0.03 & 0.50 $\pm$ 0.03
         & 0.50 $\pm$ 0.02 & 0.51 $\pm$ 0.02 & 0.51 $\pm$ 0.02 \\
\midrule

\multirow{3}{*}{KS-dist}
& SyNG-D & $\mathbf{0.69}$ $\pm$ 0.10 & $\mathbf{0.68}$ $\pm$ 0.10 & $\mathbf{0.70}$ $\pm$ 0.13
         & $\mathbf{0.47}$ $\pm$ 0.07 & $\mathbf{0.47}$ $\pm$ 0.09 & $\mathbf{0.48}$ $\pm$ 0.09
         & $\mathbf{0.38}$ $\pm$ 0.06 & $\mathbf{0.38}$ $\pm$ 0.07 & $\mathbf{0.38}$ $\pm$ 0.07 \\
& SyNG-R & 0.72 $\pm$ 0.09 & 0.72 $\pm$ 0.09 & 0.73 $\pm$ 0.12
         & 0.49 $\pm$ 0.06 & 0.49 $\pm$ 0.08 & 0.51 $\pm$ 0.08
         & 0.39 $\pm$ 0.05 & 0.40 $\pm$ 0.06 & 0.40 $\pm$ 0.06 \\
& VGAE   & 4.56 $\pm$ 0.23 & 4.53 $\pm$ 0.24 & 4.57 $\pm$ 0.28
         & 4.51 $\pm$ 0.19 & 4.51 $\pm$ 0.20 & 4.52 $\pm$ 0.21
         & 4.52 $\pm$ 0.16 & 4.53 $\pm$ 0.16 & 4.55 $\pm$ 0.17 \\
\midrule

\multirow{3}{*}{Energy-dist}
& SyNG-D & $\mathbf{0.27}$ $\pm$ 0.06 & $\mathbf{0.27}$ $\pm$ 0.06 & $\mathbf{0.28}$ $\pm$ 0.07
         & $\mathbf{0.19}$ $\pm$ 0.04 & $\mathbf{0.19}$ $\pm$ 0.05 & $\mathbf{0.19}$ $\pm$ 0.05
         & $\mathbf{0.15}$ $\pm$ 0.03 & $\mathbf{0.15}$ $\pm$ 0.04 & $\mathbf{0.15}$ $\pm$ 0.04 \\
& SyNG-R & $\mathbf{0.27}$ $\pm$ 0.05 & 0.28 $\pm$ 0.05 & $\mathbf{0.28}$ $\pm$ 0.07
         & $\mathbf{0.19}$ $\pm$ 0.04 & $\mathbf{0.19}$ $\pm$ 0.05 & $\mathbf{0.19}$ $\pm$ 0.05
         & $\mathbf{0.15}$ $\pm$ 0.03 & $\mathbf{0.15}$ $\pm$ 0.04 & $\mathbf{0.15}$ $\pm$ 0.04 \\
& VGAE   & 1.72 $\pm$ 0.06 & 1.72 $\pm$ 0.07 & 1.73 $\pm$ 0.07
         & 1.72 $\pm$ 0.05 & 1.73 $\pm$ 0.06 & 1.73 $\pm$ 0.06
         & 1.73 $\pm$ 0.05 & 1.75 $\pm$ 0.05 & 1.75 $\pm$ 0.05 \\
\bottomrule
\end{tabular}
}
\label{tab:degcent-dists}
\end{table}

\begin{table}[htbp]
  \centering
  \caption{Averaged distance between the \emph{eigenvalues} of the original network and the generated output. All values are reported as $\times 10^{-1}$.}
\resizebox{\textwidth}{!}{
\begin{tabular}{cc*{9}{c}}
\toprule
\multicolumn{2}{c}{} & \multicolumn{3}{c}{$n=500$} & \multicolumn{3}{c}{$n=1000$} & \multicolumn{3}{c}{$n=1500$} \\
\cmidrule(lr){3-5}\cmidrule(lr){6-8}\cmidrule(lr){9-11}
eigenvalues Distance & Method & r=2 & r=3 & r=4 & r=2 & r=3 & r=4 & r=2 & r=3 & r=4 \\
\midrule

\multirow{3}{*}{W1-dist}
 & SyNG-D & $\mathbf{0.37}$ $\pm$ 0.04 & $\mathbf{0.43}$ $\pm$ 0.09 & $\mathbf{0.66}$ $\pm$ 0.37
          & $\mathbf{0.24}$ $\pm$ 0.03 & $\mathbf{0.26}$ $\pm$ 0.04 & $\mathbf{0.36}$ $\pm$ 0.13
          & $\mathbf{0.19}$ $\pm$ 0.02 & $\mathbf{0.20}$ $\pm$ 0.03 & $\mathbf{0.25}$ $\pm$ 0.05 \\
 & SyNG-R & 0.40 $\pm$ 0.04 & 0.59 $\pm$ 0.09 & 0.98 $\pm$ 0.18
          & 0.25 $\pm$ 0.03 & 0.35 $\pm$ 0.05 & 0.56 $\pm$ 0.15
          & 0.20 $\pm$ 0.02 & 0.25 $\pm$ 0.03 & 0.38 $\pm$ 0.06 \\
 & VGAE   & 9.54 $\pm$ 0.51 & 9.70 $\pm$ 0.67 & 9.55 $\pm$ 0.78
          & 9.55 $\pm$ 0.46 & 9.80 $\pm$ 0.69 & 9.91 $\pm$ 0.76
          & 9.52 $\pm$ 0.50 & 9.80 $\pm$ 0.70 & 9.83 $\pm$ 0.82 \\
\midrule

\multirow{3}{*}{KS-dist}
 & SyNG-D & $\mathbf{0.47}$ $\pm$ 0.06 & $\mathbf{0.48}$ $\pm$ 0.11 & $\mathbf{0.62}$ $\pm$ 0.29
          & $\mathbf{0.32}$ $\pm$ 0.03 & $\mathbf{0.32}$ $\pm$ 0.06 & $\mathbf{0.38}$ $\pm$ 0.15
          & $\mathbf{0.27}$ $\pm$ 0.03 & $\mathbf{0.26}$ $\pm$ 0.04 & $\mathbf{0.28}$ $\pm$ 0.06 \\
 & SyNG-R & 0.52 $\pm$ 0.06 & 0.65 $\pm$ 0.12 & 0.96 $\pm$ 0.22
          & 0.35 $\pm$ 0.03 & 0.42 $\pm$ 0.08 & 0.59 $\pm$ 0.18
          & 0.28 $\pm$ 0.02 & 0.32 $\pm$ 0.05 & 0.42 $\pm$ 0.09 \\
 & VGAE   & 9.65 $\pm$ 0.01 & 9.61 $\pm$ 0.02 & 9.61 $\pm$ 0.04
          & 9.81 $\pm$ 0.01 & 9.82 $\pm$ 0.01 & 9.81 $\pm$ 0.02
          & 9.87 $\pm$ 0.01 & 9.88 $\pm$ 0.01 & 9.87 $\pm$ 0.01 \\
\midrule

\multirow{3}{*}{Energy-dist}
 & SyNG-D & $\mathbf{0.28}$ $\pm$ 0.04 & $\mathbf{0.29}$ $\pm$ 0.07 & $\mathbf{0.41}$ $\pm$ 0.21
          & $\mathbf{0.18}$ $\pm$ 0.02 & $\mathbf{0.18}$ $\pm$ 0.04 & $\mathbf{0.23}$ $\pm$ 0.11
          & $\mathbf{0.15}$ $\pm$ 0.02 & $\mathbf{0.14}$ $\pm$ 0.03 & $\mathbf{0.16}$ $\pm$ 0.04 \\
 & SyNG-R & 0.30 $\pm$ 0.04 & 0.39 $\pm$ 0.08 & 0.64 $\pm$ 0.16
          & 0.19 $\pm$ 0.02 & 0.23 $\pm$ 0.05 & 0.36 $\pm$ 0.12
          & $\mathbf{0.15}$ $\pm$ 0.01 & 0.17 $\pm$ 0.03 & 0.25 $\pm$ 0.06 \\
 & VGAE   & 10.15 $\pm$ 0.23 & 10.40 $\pm$ 0.31 & 10.48 $\pm$ 0.36
          & 10.55 $\pm$ 0.20 & 10.78 $\pm$ 0.30 & 10.92 $\pm$ 0.32
          & 10.70 $\pm$ 0.22 & 10.89 $\pm$ 0.28 & 10.99 $\pm$ 0.33 \\
\bottomrule
\end{tabular}
}
\label{tab:eigenvalue-dists}
\end{table}

\paragraph{Numerical characteristics.} The following \cref{tab:tri-density-eval} and \cref{tab:gcc-eval} summarize the distances between the numerical  characteristics of the generated networks and the input networks.

\begin{table}[htbp]
\centering
\caption{Similarity between the the \emph{triangle densities} of the original network and generated network. All values are reported as $\times 10^{-2}$.}
\resizebox{\textwidth}{!}{
\begin{tabular}{cc|*{3}{c}|*{3}{c}|*{3}{c}}
\toprule
\multirow{2}{*}{$n$} & \multirow{2}{*}{Method} & \multicolumn{3}{c|}{RMSE} & \multicolumn{3}{c|}{MAE ($\times 10^{-2}$)} & \multicolumn{3}{c}{Bias ($\times 10^{-2}$)} \\
\cmidrule(lr){3-5}\cmidrule(lr){6-8}\cmidrule(lr){9-11}
& & r=2 & r=3 & r=4 & r=2 & r=3 & r=4 & r=2 & r=3 & r=4 \\
\midrule
\multirow{3}{*}{500}
 & SyNG-D & 0.45 & \textbf{0.44} & 0.50
          & 0.37 $\pm$ 0.26 & \textbf{0.47} $\pm$ 0.34 & 0.50 $\pm$ 0.49
          & 0.18 $\pm$ 0.42 & \textbf{0.10} $\pm$ 0.43 & 0.22 $\pm$ 0.66 \\
 & SyNG-R & \textbf{0.43} & \textbf{0.44} & \textbf{0.49}
          & \textbf{0.35} $\pm$ 0.26 & 0.51 $\pm$ 0.37 & \textbf{0.39} $\pm$ 0.30
          & \textbf{0.17} $\pm$ 0.40 & 0.13 $\pm$ 0.42 & \textbf{0.18} $\pm$ 0.46 \\
 & VGAE   & 0.68 & 0.73 & 0.71
          & 0.58 $\pm$ 0.36 & 0.61 $\pm$ 0.39 & 0.58 $\pm$ 0.41
          & -0.54 $\pm$ 0.42 & -0.57 $\pm$ 0.45 & -0.52 $\pm$ 0.49 \\
\midrule
\multirow{3}{*}{1000}
 & SyNG-D & 0.31 & \textbf{0.32} & \textbf{0.33}
          & 0.32 $\pm$ 0.17 & \textbf{0.31} $\pm$ 0.25 & 0.35 $\pm$ 0.29
          & \textbf{0.09} $\pm$ 0.29 & \textbf{0.05} $\pm$ 0.32 & \textbf{0.06} $\pm$ 0.45 \\
 & SyNG-R & \textbf{0.30} & 0.43 & 0.34
          & \textbf{0.25} $\pm$ 0.17 & 0.34 $\pm$ 0.26 & \textbf{0.27} $\pm$ 0.20
          & \textbf{0.09} $\pm$ 0.28 & 0.19 $\pm$ 0.38 & 0.10 $\pm$ 0.32 \\
 & VGAE   & 0.66 & 0.67 & 0.68
          & 0.58 $\pm$ 0.32 & 0.59 $\pm$ 0.33 & 0.59 $\pm$ 0.33
          & -0.56 $\pm$ 0.35 & -0.56 $\pm$ 0.38 & -0.56 $\pm$ 0.38 \\
\midrule
\multirow{3}{*}{1500}
 & SyNG-D & 0.31 & \textbf{0.31} & \textbf{0.32}
          & 0.25 $\pm$ 0.14 & \textbf{0.24} $\pm$ 0.19 & \textbf{0.25} $\pm$ 0.21
          & 0.08 $\pm$ 0.30 & \textbf{0.03} $\pm$ 0.25 & \textbf{-0.03} $\pm$ 0.32 \\
 & SyNG-R & \textbf{0.24} & 0.32 & 0.34
          & \textbf{0.20} $\pm$ 0.13 & 0.25 $\pm$ 0.20 & 0.27 $\pm$ 0.20
          & \textbf{0.04} $\pm$ 0.23 & 0.13 $\pm$ 0.30 & 0.13 $\pm$ 0.31 \\
 & VGAE   & 0.71 & 0.65 & 0.65
          & 0.65 $\pm$ 0.29 & 0.59 $\pm$ 0.28 & 0.58 $\pm$ 0.31
          & -0.65 $\pm$ 0.29 & -0.57 $\pm$ 0.32 & -0.57 $\pm$ 0.33 \\
\bottomrule
\end{tabular}
}

\label{tab:tri-density-eval}
\end{table}

\begin{table}[H]
\centering
\caption{Similarity between the the \emph{global clustering coefficients} of the original network and generated network. }
\resizebox{\textwidth}{!}{
\begin{tabular}{cc|*{3}{c}|*{3}{c}|*{3}{c}}
\toprule
\multirow{2}{*}{$n$} & \multirow{2}{*}{Method} & \multicolumn{3}{c|}{RMSE} & \multicolumn{3}{c|}{MAE ($\times 10^{-2}$ )} & \multicolumn{3}{c}{Bias ($\times 10^{-2}$)} \\
\cmidrule(lr){3-5}\cmidrule(lr){6-8}\cmidrule(lr){9-11}
& & r=2 & r=3 & r=4 & r=2 & r=3 & r=4 & r=2 & r=3 & r=4 \\
\midrule
\multirow{3}{*}{500}
 & SyNG-D & 0.62 & \textbf{0.58} & \textbf{0.70}
          & \textbf{0.51} $\pm$ 0.34 & \textbf{0.47} $\pm$ 0.34 & \textbf{0.50} $\pm$ 0.49
          & \textbf{0.34} $\pm$ 0.51 & \textbf{0.21} $\pm$ 0.54 & \textbf{0.22} $\pm$ 0.66 \\
 & SyNG-R & \textbf{0.61} & 0.63 & \textbf{0.70}
          & \textbf{0.51} $\pm$ 0.34 & 0.51 $\pm$ 0.37 & 0.57 $\pm$ 0.41
          & 0.37 $\pm$ 0.49 & 0.36 $\pm$ 0.52 & 0.43 $\pm$ 0.56 \\
 & VGAE   & 2.59 & 2.60 & 2.58
          & 2.49 $\pm$ 0.72 & 2.45 $\pm$ 0.86 & 2.40 $\pm$ 0.93
          & -2.49 $\pm$ 0.72 & -2.45 $\pm$ 0.86 & -2.40 $\pm$ 0.94 \\
\midrule
\multirow{3}{*}{1000}
 & SyNG-D & \textbf{0.40} & \textbf{0.39} & \textbf{0.45}
          & \textbf{0.32} $\pm$ 0.23 & \textbf{0.31} $\pm$ 0.25 & \textbf{0.35} $\pm$ 0.29
          & \textbf{0.17} $\pm$ 0.36 & \textbf{0.07} $\pm$ 0.39 & \textbf{0.06} $\pm$ 0.45 \\
 & SyNG-R & \textbf{0.40} & 0.43 & 0.52
          & \textbf{0.32} $\pm$ 0.24 & 0.34 $\pm$ 0.26 & 0.41 $\pm$ 0.32
          & 0.20 $\pm$ 0.35 & 0.19 $\pm$ 0.38 & 0.26 $\pm$ 0.45 \\
 & VGAE   & 2.59 & 2.57 & 2.61
          & 2.50 $\pm$ 0.70 & 2.43 $\pm$ 0.85 & 2.47 $\pm$ 0.85
          & -2.50 $\pm$ 0.70 & -2.43 $\pm$ 0.85 & -2.47 $\pm$ 0.85 \\
\midrule
\multirow{3}{*}{1500}
 & SyNG-D & 0.31 & \textbf{0.31} & \textbf{0.32}
          & 0.25 $\pm$ 0.17 & \textbf{0.24} $\pm$ 0.19 & \textbf{0.25} $\pm$ 0.21
          & \textbf{0.08} $\pm$ 0.30 & \textbf{0.03} $\pm$ 0.31 & \textbf{-0.03} $\pm$ 0.32 \\
 & SyNG-R & \textbf{0.30} & 0.32 & 0.34
          & \textbf{0.24} $\pm$ 0.17 & 0.25 $\pm$ 0.20 & 0.27 $\pm$ 0.20
          & 0.10 $\pm$ 0.28 & 0.13 $\pm$ 0.30 & 0.13 $\pm$ 0.31 \\
 & VGAE   & 2.71 & 2.46 & 2.53
          & 2.63 $\pm$ 0.64 & 2.35 $\pm$ 0.74 & 2.40 $\pm$ 0.82
          & -2.63 $\pm$ 0.64 & -2.35 $\pm$ 0.74 & -2.40 $\pm$ 0.82 \\
\bottomrule
\end{tabular}
}

\label{tab:gcc-eval}
\end{table}

\subsection{Evaluation Results on Sparse Simulated Networks}
\paragraph{Structural characteristics.}
Below, we present the quality of the synthetic networks on the sparse simulated networks.
The following \cref{tab:degcent-dists-sparse} and \cref{tab:eigenvalue-dists-sparse} summarize the discrepancies between the structural statistics of the generated networks and those of the input networks. 

\paragraph{Additional modern baselines.}
To check whether the sparse-simulation conclusion is specific to VGAE, we additionally compare with the deep graph generative models GRAN and EDGE on the sparse setting with $n=500$ and $r\in\{2,3,4\}$.
Because each Monte Carlo replicate requires retraining these graph-neural baselines, this comparison uses 20 Monte Carlo replications; the SyNG-D, SyNG-R, and VGAE entries are reproduced from \cref{tab:tri-density-eval-sparse,tab:gcc-eval-sparse}.
\cref{tab:sparse-modern-baselines} shows that SyNGLER remains competitive with these baselines: EDGE attains the smallest triangle-density error across all three settings, while SyNG-D and SyNG-R remain the most accurate on the clustering coefficient at $r=2$ and $r=3$. At $r=4$, where score estimation in the higher-dimensional latent space is more difficult for SyNG-D, EDGE and the resampling variant SyNG-R give the most accurate results.

\begin{table}[htbp]
\centering
\caption{Additional sparse-simulation comparison with GRAN and EDGE for $n=500$, $r\in\{2,3,4\}$. Tri. values are reported as $\times 10^{-4}$ and Clus. values as $\times 10^{-3}$; lower is better. SyNG-D, SyNG-R, and VGAE use 200 Monte Carlo replications; GRAN and EDGE use 20. Bold marks the best value in each column.}
\resizebox{\textwidth}{!}{
\begin{tabular}{l*{12}{r}}
\toprule
& \multicolumn{4}{c}{$r=2$} & \multicolumn{4}{c}{$r=3$} & \multicolumn{4}{c}{$r=4$} \\
\cmidrule(lr){2-5}\cmidrule(lr){6-9}\cmidrule(lr){10-13}
Method & Tri. RMSE & Tri. Bias & Clus. RMSE & Clus. Bias & Tri. RMSE & Tri. Bias & Clus. RMSE & Clus. Bias & Tri. RMSE & Tri. Bias & Clus. RMSE & Clus. Bias \\
\midrule
SyNG-D & 4.60 & 4.37 & \textbf{15.7} & \textbf{15.2} & 18.34 & 8.16 & 46.2 & 24.2 & 157.01 & 90.99 & 196.9 & 133.3 \\
SyNG-R & 5.37 & 5.11 & 18.1 & 17.7 & 7.31 & 6.94 & \textbf{23.7} & \textbf{23.3} & 17.22 & 14.73 & 42.2 & 39.9 \\
VGAE & 9.70 & -9.16 & 53.1 & -50.2 & 8.46 & -7.97 & 45.1 & -42.6 & 7.63 & -7.19 & 40.9 & -38.7 \\
GRAN & 6.02 & -5.88 & 38.7 & -38.5 & 3.71 & -3.21 & 31.5 & -31.3 & 4.31 & \textbf{-2.16} & 27.1 & -24.6 \\
EDGE & \textbf{2.80} & \textbf{-2.75} & 25.5 & -25.4 & \textbf{2.81} & \textbf{-2.76} & 24.5 & -24.4 & \textbf{2.30} & -2.22 & \textbf{18.8} & \textbf{-18.5} \\
\bottomrule
\end{tabular}
}
\label{tab:sparse-modern-baselines}
\end{table}

\begin{table}[htbp]
\centering
\caption{Averaged distance between the \emph{degree centralities} of the original network and the generated output, sparse network. All values reported as $\times 10^{-1}$.}
\resizebox{\textwidth}{!}{
\begin{tabular}{cc*{9}{c}}
\toprule
\multicolumn{2}{c}{} & \multicolumn{3}{c}{$n=500$} & \multicolumn{3}{c}{$n=1000$} & \multicolumn{3}{c}{$n=1500$} \\
\cmidrule(lr){3-5}\cmidrule(lr){6-8}\cmidrule(lr){9-11}
Metric & Method & r=2 & r=3 & r=4 & r=2 & r=3 & r=4 & r=2 & r=3 & r=4 \\
\midrule
\multirow{3}{*}{W1-dist}
& SyNG-D & $\mathbf{0.07} \pm 0.02$ & $\mathbf{0.11} \pm 0.11$ & $0.51 \pm 0.55$
          & $\mathbf{0.04} \pm 0.01$ & $\mathbf{0.04} \pm 0.01$ & $\mathbf{0.05} \pm 0.09$
          & $\mathbf{0.03} \pm 0.01$ & $\mathbf{0.03} \pm 0.01$ & $\mathbf{0.03} \pm 0.01$ \\

& SyNG-R & $0.08 \pm 0.02$ & $0.12 \pm 0.03$ & $0.24 \pm 0.12$
          & $0.04 \pm 0.01$ & $0.05 \pm 0.01$ & $0.07 \pm 0.02$
          & $0.03 \pm 0.01$ & $0.04 \pm 0.01$ & $0.05 \pm 0.01$ \\

& VGAE   & $0.21 \pm 0.02$ & $0.20 \pm 0.02$ & $\mathbf{0.19} \pm 0.02$
          & $0.18 \pm 0.02$ & $0.17 \pm 0.02$ & $0.16 \pm 0.02$
          & $0.16 \pm 0.02$ & $0.15 \pm 0.02$ & $0.15 \pm 0.02$ \\
\midrule

\multirow{3}{*}{KS-dist}
& SyNG-D & $\mathbf{1.11} \pm 0.24$ & $\mathbf{1.44} \pm 0.88$ & $3.75 \pm 2.78$
          & $\mathbf{0.71} \pm 0.16$ & $\mathbf{0.75} \pm 0.20$ & $\mathbf{0.92} \pm 0.60$
          & $\mathbf{0.55} \pm 0.13$ & $\mathbf{0.58} \pm 0.15$ & $\mathbf{0.64} \pm 0.17$ \\

& SyNG-R & $1.25 \pm 0.25$ & $1.65 \pm 0.39$ & $\mathbf{2.93} \pm 1.13$
          & $0.79 \pm 0.16$ & $0.99 \pm 0.20$ & $1.29 \pm 0.27$
          & $0.60 \pm 0.13$ & $0.75 \pm 0.15$ & $0.96 \pm 0.17$ \\

& VGAE   & $4.45 \pm 0.83$ & $4.45 \pm 0.83$ & $4.50 \pm 0.79$
          & $4.53 \pm 0.83$ & $4.57 \pm 0.85$ & $4.62 \pm 0.85$
          & $4.62 \pm 0.89$ & $4.81 \pm 0.79$ & $4.82 \pm 0.71$ \\
\midrule

\multirow{3}{*}{Energy-dist}
& SyNG-D & $\mathbf{0.33} \pm 0.08$ & $\mathbf{0.46} \pm 0.37$ & $1.61 \pm 1.47$
          & $\mathbf{0.19} \pm 0.05$ & $\mathbf{0.20} \pm 0.06$ & $\mathbf{0.26} \pm 0.26$
          & $\mathbf{0.14} \pm 0.04$ & $\mathbf{0.14} \pm 0.04$ & $\mathbf{0.16} \pm 0.05$ \\

& SyNG-R & $0.37 \pm 0.08$ & $0.52 \pm 0.14$ & $\mathbf{1.01} \pm 0.45$
          & $0.21 \pm 0.05$ & $0.27 \pm 0.06$ & $0.37 \pm 0.09$
          & $0.15 \pm 0.04$ & $0.19 \pm 0.04$ & $0.25 \pm 0.05$ \\

& VGAE   & $1.07 \pm 0.11$ & $1.06 \pm 0.11$ & $1.05 \pm 0.11$
          & $0.98 \pm 0.11$ & $0.97 \pm 0.11$ & $0.96 \pm 0.11$
          & $0.94 \pm 0.11$ & $0.93 \pm 0.10$ & $0.93 \pm 0.08$ \\
\bottomrule
\end{tabular}
}
\label{tab:degcent-dists-sparse}
\end{table}

\begin{table}[htbp]
  \centering
  \caption{Averaged distance between the \emph{eigenvalues} of the original network and the generated output, sparse network. All values reported as $\times 10^{-1}$.}
\resizebox{\textwidth}{!}{
\begin{tabular}{cc*{9}{c}}
\toprule
\multicolumn{2}{c}{} & \multicolumn{3}{c}{$n=500$} & \multicolumn{3}{c}{$n=1000$} & \multicolumn{3}{c}{$n=1500$} \\
\cmidrule(lr){3-5}\cmidrule(lr){6-8}\cmidrule(lr){9-11}
eigenvalues Distance & Method & r=2 & r=3 & r=4 & r=2 & r=3 & r=4 & r=2 & r=3 & r=4 \\
\midrule

\multirow{3}{*}{W1-dist}
& SyNG-D 
& $\mathbf{0.66} \pm 0.12$ 
& $\mathbf{1.38} \pm 1.23$ 
& 6.34 $\pm$ 6.01
& $\mathbf{0.36} \pm 0.06$ 
& $\mathbf{0.56} \pm 0.12$ 
& $\mathbf{1.05} \pm 1.02$
& $\mathbf{0.26} \pm 0.04$
& $\mathbf{0.37} \pm 0.08$
& $\mathbf{0.63} \pm 0.16$
\\

& SyNG-R 
& $0.74 \pm 0.11$
& $1.46 \pm 0.26$
& $2.99 \pm 0.72$
& $0.40 \pm 0.06$
& $0.76 \pm 0.11$
& $1.38 \pm 0.22$
& $0.28 \pm 0.04$
& $0.51 \pm 0.07$
& $0.93 \pm 0.14$
\\

& VGAE 
& $1.72 \pm 0.49$
& $1.60 \pm 0.53$
& $\mathbf{1.42} \pm 0.47$
& $1.86 \pm 0.55$
& $1.73 \pm 0.61$
& $1.66 \pm 0.58$
& $1.86 \pm 0.57$
& $1.79 \pm 0.61$
& $1.63 \pm 0.59$
\\
\midrule

\multirow{3}{*}{KS-dist}
& SyNG-D 
& $\mathbf{0.91} \pm 0.23$
& $\mathbf{1.53} \pm 0.70$
& 3.56 $\pm$ 1.85
& $\mathbf{0.57} \pm 0.12$
& $\mathbf{0.91} \pm 0.28$
& $\mathbf{1.53} \pm 0.63$
& $\mathbf{0.46} \pm 0.10$
& $\mathbf{0.70} \pm 0.22$
& $\mathbf{1.21} \pm 0.41$
\\

& SyNG-R 
& $0.98 \pm 0.24$
& $1.65 \pm 0.46$
& $3.03 \pm 0.85$
& $0.62 \pm 0.12$
& $1.06 \pm 0.29$
& $1.81 \pm 0.52$
& $0.47 \pm 0.10$
& $0.80 \pm 0.23$
& $1.44 \pm 0.42$
\\

& VGAE 
& $1.57 \pm 0.27$
& $1.55 \pm 0.53$
& $\mathbf{1.54} \pm 0.58$
& $2.15 \pm 0.49$
& $2.40 \pm 0.83$
& $2.61 \pm 0.94$
& $2.58 \pm 0.64$
& $3.12 \pm 0.86$
& $3.25 \pm 0.94$
\\
\midrule

\multirow{3}{*}{Energy-dist}
& SyNG-D 
& $\mathbf{0.59} \pm 0.13$
& $\mathbf{1.15} \pm 0.71$
& 3.69 $\pm$ 2.62
& $\mathbf{0.33} \pm 0.06$
& $\mathbf{0.56} \pm 0.16$
& $\mathbf{1.03} \pm 0.58$
& $\mathbf{0.24} \pm 0.05$
& $\mathbf{0.39} \pm 0.11$
& $\mathbf{0.71} \pm 0.22$
\\

& SyNG-R 
& $0.64 \pm 0.13$
& $1.25 \pm 0.28$
& $2.49 \pm 0.60$
& $0.36 \pm 0.06$
& $0.71 \pm 0.15$
& $1.31 \pm 0.29$
& $0.26 \pm 0.04$
& $0.49 \pm 0.11$
& $0.94 \pm 0.20$
\\

& VGAE 
& $1.02 \pm 0.29$
& $1.03 \pm 0.44$
& $\mathbf{0.98} \pm 0.43$
& $1.38 \pm 0.46$
& $1.45 \pm 0.57$
& $1.53 \pm 0.61$
& $1.56 \pm 0.49$
& $1.74 \pm 0.56$
& $1.73 \pm 0.59$
\\
\bottomrule
\end{tabular}
}
\label{tab:eigenvalue-dists-sparse}
\end{table}

\paragraph{Numerical characteristics.}
The following \cref{tab:tri-density-eval-sparse} and \cref{tab:gcc-eval-sparse} summarize the distances between the numerical characteristics of the generated networks and the input networks. 

\begin{table}[htbp]
\centering
\caption{Similarity between the the \emph{triangle densities} of the original network and generated network, sparse network.}
\resizebox{\textwidth}{!}{
\begin{tabular}{cc|*{3}{c}|*{3}{c}|*{3}{c}}
\toprule
\multirow{2}{*}{$n$} & \multirow{2}{*}{Method} & \multicolumn{3}{c|}{RMSE ($\times 10^{-4}$)} & 
\multicolumn{3}{c|}{MAE ($\times 10^{-4}$)} & 
\multicolumn{3}{c}{Bias ($\times 10^{-4}$)} \\
\cmidrule(lr){3-5}\cmidrule(lr){6-8}\cmidrule(lr){9-11}
& & r=2 & r=3 & r=4 & r=2 & r=3 & r=4 & r=2 & r=3 & r=4 \\
\midrule
\multirow{3}{*}{500}
& SyNG-D & \textbf{4.60} & 18.34 & 157.01 & \textbf{4.37} $\pm$ 1.44 & 8.16 $\pm$ 16.47 & 90.99 $\pm$ 128.29 & \textbf{4.37} $\pm$ 1.44 & 8.16 $\pm$ 16.47 & 90.99 $\pm$ 128.29 \\
& SyNG-R & 5.37 & \textbf{7.31} & 17.22 & 5.11 $\pm$ 1.66 & \textbf{6.94} $\pm$ 2.32 & 14.73 $\pm$ 8.94 & 5.11 $\pm$ 1.66 & \textbf{6.94} $\pm$ 2.32 & {14.73} $\pm$ 8.94 \\
& VGAE   & 9.70 & 8.46 & \textbf{7.63} & 9.16 $\pm$ 3.20 & 7.97 $\pm$ 2.85 & \textbf{7.19} $\pm$ 2.56 & $-9.16$ $\pm$ 3.20 & $-7.97$ $\pm$ 2.85 & $-\textbf{7.19}$ $\pm$ 2.56 \\
\midrule
\multirow{3}{*}{1000}
& SyNG-D & \textbf{1.60} & \textbf{1.29} & 15.14 & \textbf{1.51} $\pm$ 0.53 & \textbf{1.21} $\pm$ 0.46 & 3.06 $\pm$ 14.87 & \textbf{1.51} $\pm$ 0.53 & \textbf{1.20} $\pm$ 0.46 & 3.06 $\pm$ 14.87 \\
& SyNG-R & 1.88 & 2.22 & \textbf{2.91} & {1.77} $\pm$ 0.62 & 2.08 $\pm$ 0.77 & \textbf{2.75} $\pm$ 0.94 & {1.77} $\pm$ 0.62 & 2.08 $\pm$ 0.77 & \textbf{2.75} $\pm$ 0.94 \\
& VGAE   & 5.95 & 4.92 & 4.52 & 5.55 $\pm$ 2.14 & 4.53 $\pm$ 1.93 & 4.19 $\pm$ 1.70 & $-5.55$ $\pm$ 2.14 & $-4.53$ $\pm$ 1.93 & $-4.19$ $\pm$ 1.70 \\
\midrule
\multirow{3}{*}{1500}
& SyNG-D & \textbf{0.87} & \textbf{0.62} & \textbf{0.63} & \textbf{0.80} $\pm$ 0.34 & \textbf{0.57} $\pm$ 0.22 & \textbf{0.58} $\pm$ 0.25 & \textbf{0.80} $\pm$ 0.34 & \textbf{0.57} $\pm$ 0.23 & \textbf{0.57} $\pm$ 0.27 \\
& SyNG-R & 0.99 & 1.07 & 1.35 & {0.92} $\pm$ 0.36 & 1.00 $\pm$ 0.37 & 1.26 $\pm$ 0.48 & {0.92} $\pm$ 0.36 & 1.00 $\pm$ 0.37 & 1.26 $\pm$ 0.48 \\
& VGAE   & 4.59 & 3.34 & 3.05 & 4.26 $\pm$ 1.73 & 3.05 $\pm$ 1.36 & 2.78 $\pm$ 1.25 & $-4.26$ $\pm$ 1.73 & $-3.05$ $\pm$ 1.36 & $-2.78$ $\pm$ 1.25 \\
\bottomrule
\end{tabular}
}
\label{tab:tri-density-eval-sparse}
\end{table}

\begin{table}[H]
\centering
\caption{Similarity between the the \emph{global clustering coefficients} of the original network and generated network, sparse network.}
\resizebox{\textwidth}{!}{
\begin{tabular}{cc|*{3}{c}|*{3}{c}|*{3}{c}}
\toprule
\multirow{2}{*}{$n$} & \multirow{2}{*}{Method} & \multicolumn{3}{c|}{RMSE ($\times 10^{-2}$)} & 
\multicolumn{3}{c|}{MAE ($\times 10^{-2}$)} & 
\multicolumn{3}{c}{Bias ($\times 10^{-2}$)} \\
\cmidrule(lr){3-5}\cmidrule(lr){6-8}\cmidrule(lr){9-11}
& & r=2 & r=3 & r=4 & r=2 & r=3 & r=4 & r=2 & r=3 & r=4 \\
\midrule
\multirow{3}{*}{500}
& SyNG-D & \textbf{1.57} & 4.62 & 19.69 & \textbf{1.52} $\pm$ 0.40 & 2.42 $\pm$ 3.95 & 13.33 $\pm$ 14.53 & \textbf{1.52} $\pm$ 0.40 & 2.42 $\pm$ 3.95 & 13.33 $\pm$ 14.53 \\
& SyNG-R & 1.81 & \textbf{2.37} & {4.22} & 1.77 $\pm$ 0.38 & \textbf{2.33} $\pm$ 0.47 & {3.99} $\pm$ 1.39 & 1.77 $\pm$ 0.38 & \textbf{2.33} $\pm$ 0.47 & {3.99} $\pm$ 1.39 \\
& VGAE   & 5.31 & 4.51 & \textbf{4.09} & 5.03 $\pm$ 1.71 & 4.28 $\pm$ 1.45 & \textbf{3.89} $\pm$ 1.25 & $-5.02$ $\pm$ 1.75 & $-4.26$ $\pm$ 1.48 & $\mathbf{-3.87}$ $\pm$ 1.32 \\
\midrule
\multirow{3}{*}{1000}
& SyNG-D & \textbf{0.84} & \textbf{0.69} & 4.05 & \textbf{0.82} $\pm$ 0.20 & \textbf{0.65} $\pm$ 0.24 & \textbf{1.16} $\pm$ 3.89 & \textbf{0.82} $\pm$ 0.20 & \textbf{0.65} $\pm$ 0.24 & \textbf{1.15} $\pm$ 3.89 \\
& SyNG-R & 0.98 & 1.18 & \textbf{1.52} & 0.96 $\pm$ 0.20 & 1.16 $\pm$ 0.21 & 1.50 $\pm$ 0.27 & 0.96 $\pm$ 0.20 & 1.16 $\pm$ 0.21 & 1.50 $\pm$ 0.27 \\
& VGAE   & 5.37 & 4.40 & 4.10 & 5.05 $\pm$ 1.81 & 4.10 $\pm$ 1.60 & 3.81 $\pm$ 1.52 & $-5.05$ $\pm$ 1.81 & $-4.08$ $\pm$ 1.65 & $-3.78$ $\pm$ 1.58 \\
\midrule
\multirow{3}{*}{1500}
& SyNG-D & \textbf{0.59} & \textbf{0.46} & \textbf{0.46} & \textbf{0.57} $\pm$ 0.16 & \textbf{0.43} $\pm$ 0.16 & \textbf{0.42} $\pm$ 0.19 & \textbf{0.57} $\pm$ 0.16 & \textbf{0.43} $\pm$ 0.17 & \textbf{0.41} $\pm$ 0.21 \\
& SyNG-R & 0.67 & 0.80 & 0.98 & 0.65 $\pm$ 0.14 & 0.79 $\pm$ 0.14 & 0.96 $\pm$ 0.17 & 0.65 $\pm$ 0.14 & 0.79 $\pm$ 0.14 & 0.96 $\pm$ 0.17 \\
& VGAE   & 5.43 & 4.06 & 3.74 & 5.09 $\pm$ 1.91 & 3.79 $\pm$ 1.45 & 3.50 $\pm$ 1.32 & $-5.09$ $\pm$ 1.91 & $-3.79$ $\pm$ 1.47 & $-3.50$ $\pm$ 1.32 \\
\bottomrule
\end{tabular}
}
\label{tab:gcc-eval-sparse}
\end{table}

\clearpage
\newpage

\subsection{Evaluation Results on Real-World Datasets}
\label{subsec:real-full-results}
In this subsection, we list all the experiment results on the real-world datasets. For each dataset, we present three tables detailing the generation quality. The first table evaluates the similarity of degree centrality distributions, the second assesses the similarity of eigenvalue distributions, and the third reports on numerical characteristics such as global clustering coefficient and triangle density.

\paragraph{Auxiliary scalable baselines.}
Reviewers suggested additional comparisons with scalable graph generators, including HiGen~\citep{karami2024higen} and iterative local expansion (ILE)~\citep{bergmeister2024efficient}.
\cref{tab:scalable-baselines-tri-clus,tab:scalable-baselines-eig-degc} summarize the auxiliary runs used for rebuttal positioning.
These results are not used for model selection in the main tables; they are included to clarify how SyNGLER compares with recent scalable baselines when measurements are available.

\begin{table}[htbp]
\centering
\small
\caption{Auxiliary scalable-baseline comparison on triangle density and clustering coefficient errors. Each entry reports Tri./Clus.; lower is better. A dash indicates an unavailable measurement, and OOM indicates out of memory under the stated hardware budget.}
\label{tab:scalable-baselines-tri-clus}
\begin{tabular}{lcccc}
\toprule
Method & PolBlogs & DBLP & YouTube & Yelp \\
\midrule
SyNG-D & 0.71/1.90 & 1.58/0.62 & 0.97/2.28 & 2.00/1.77 \\
SyNG-R & 0.83/2.45 & 1.43/0.38 & 1.11/2.23 & 0.78/0.76 \\
HiGen  & 40/3.8   & 1590/49   & 411/30    & -- \\
ILE    & 0.02/14  & 155/34    & OOM       & OOM \\
\bottomrule
\end{tabular}
\end{table}

\begin{table}[htbp]
\centering
\small
\caption{Auxiliary scalable-baseline comparison on eigenvalue and degree-centrality MMD. Each entry reports Eig./DegC.; lower is better. A dash indicates an unavailable measurement, and OOM indicates out of memory under the stated hardware budget.}
\label{tab:scalable-baselines-eig-degc}
\begin{tabular}{lcccc}
\toprule
Method & PolBlogs & DBLP & YouTube & Yelp \\
\midrule
SyNG-D & 2.58/0.97 & 0.88/0.93 & 2.61/2.13 & 2.50/6.72 \\
SyNG-R & 3.01/0.92 & 0.80/0.75 & 4.47/1.10 & 4.69/0.62 \\
HiGen  & 3.5/0.2   & 0.8/0.01  & 16/0.05   & -- \\
ILE    & 3.2/0.00  & 5.9/1.6   & OOM       & OOM \\
\bottomrule
\end{tabular}
\end{table}

HiGen shows reasonable spectral or degree-centrality performance in some cases, but its triangle-density and clustering errors can be large.
ILE performs competitively on a few small-graph metrics but does not complete on the larger YouTube and Yelp settings under the stated hardware budget; on small graphs, its expansion procedure can also terminate early and produce smaller generated graphs.

\paragraph{YouTube dataset.} \cref{tab:youtube-degree-dists,tab:youtube-eigenvalues-dists,tab:youtube-gcc-tri} summarize the generation quality on the YouTube dataset.

\begin{table}[ht]
    \centering
    \caption{Generation quality in degree centralities distribution similarity on YouTube dataset. All reported distance values are scaled by $10^{-2}$.}
\begin{tabular}{cc|cccc}
\toprule
Method & Config & W1 dist. & KS dist. & Energy dist. & MMD \\
\midrule
\multirow{5}{*}{SyNG\text{-}D}
 &  2 & \textbf{0.15 $\pm$ 0.07} & \textbf{3.76 $\pm$ 1.43} & \textbf{0.01 $\pm$ 0.01} & \textbf{2.13 $\pm$ 1.80} \\
 &  3 & 0.23 $\pm$ 0.09 & 5.61 $\pm$ 1.67 & 0.01 $\pm$ 0.01 & 4.46 $\pm$ 2.02 \\
 &  4 & 0.32 $\pm$ 0.10 & 7.42 $\pm$ 1.91 & 0.03 $\pm$ 0.02 & 6.71 $\pm$ 2.15 \\
 &  5 & 0.49 $\pm$ 0.09 & 11.79 $\pm$ 1.78 & 0.07 $\pm$ 0.02 & 11.49 $\pm$ 1.90 \\
 &  6 & 0.58 $\pm$ 0.09 & 13.92 $\pm$ 1.87 & 0.10 $\pm$ 0.03 & 13.80 $\pm$ 1.98 \\
\midrule
\multirow{5}{*}{{SyNG\text{-}D (MLP)}}
 & 2 & 0.19 $\pm$ 0.05 & 6.05 $\pm$ 1.42 & \textbf{0.01 $\pm$ 0.00} & 4.24 $\pm$ 1.50 \\
 & 3 & \textbf{0.18 $\pm$ 0.05} & 5.91 $\pm$ 1.29 & \textbf{0.01 $\pm$ 0.00} & 4.06 $\pm$ 1.43 \\
 & 4 & \textbf{0.18 $\pm$ 0.05} & \textbf{5.88 $\pm$ 1.51} & \textbf{0.01 $\pm$ 0.01} & \textbf{3.89 $\pm$ 1.68} \\
 & 5 & 0.22 $\pm$ 0.05 & 8.00 $\pm$ 1.57 & \textbf{0.01 $\pm$ 0.01} & 6.34 $\pm$ 1.48 \\
 & 6 & 0.29 $\pm$ 0.08 & 9.82 $\pm$ 1.64 & 0.03 $\pm$ 0.01 & 8.13 $\pm$ 1.78 \\
\midrule
\multirow{5}{*}{SyNG\text{-}R}
 &  2 & 0.13 $\pm$ 0.06 & \textbf{2.86 $\pm$ 1.13} & \textbf{0.00 $\pm$ 0.00} & \textbf{1.10 $\pm$ 1.43} \\
 &  3 & 0.13 $\pm$ 0.06 & 2.91 $\pm$ 1.16 & 0.00 $\pm$ 0.00 & 1.21 $\pm$ 1.47 \\
 &  4 & 0.13 $\pm$ 0.06 & 2.91 $\pm$ 1.11 & 0.00 $\pm$ 0.00 & 1.18 $\pm$ 1.50 \\
 &  5 & 0.13 $\pm$ 0.06 & 2.98 $\pm$ 1.16 & 0.00 $\pm$ 0.00 & 1.15 $\pm$ 1.47 \\
 &  6 & 0.13 $\pm$ 0.06 & 3.00 $\pm$ 1.21 & 0.00 $\pm$ 0.00 & 1.16 $\pm$ 1.52 \\
\midrule
\multirow{6}{*}{VGAE}
 &  2 & 0.89 $\pm$ 0.01 & 23.74 $\pm$ 0.44 & 0.20 $\pm$ 0.00 & 33.85 $\pm$ 0.46 \\
 &  3 & 0.89 $\pm$ 0.01 & 23.49 $\pm$ 0.40 & 0.20 $\pm$ 0.00 & 33.39 $\pm$ 0.55 \\
 &  4 & \textbf{0.80 $\pm$ 0.01} & \textbf{20.10 $\pm$ 0.43} & \textbf{0.15 $\pm$ 0.00} & \textbf{27.83 $\pm$ 0.41} \\
 &  5 & 0.91 $\pm$ 0.01 & 24.40 $\pm$ 0.39 & 0.21 $\pm$ 0.00 & 34.96 $\pm$ 0.36 \\
 &  6 & 0.82 $\pm$ 0.01 & 20.33 $\pm$ 0.43 & 0.16 $\pm$ 0.00 & 28.78 $\pm$ 0.36 \\
 &  16 & 0.85 $\pm$ 0.01 & 21.76 $\pm$ 0.43 & 0.18 $\pm$ 0.00 & 31.03 $\pm$ 0.40 \\
\midrule
\multirow{3}{*}{GRAN}
 & 128 & 0.78 $\pm$ 0.38 & 15.38 $\pm$ 5.47 & 0.12 $\pm$ 0.17 & 16.02 $\pm$ 4.24 \\
 & 256 & 5.71 $\pm$ 0.41 & 57.59 $\pm$ 3.00 & 3.87 $\pm$ 0.48 & 62.02 $\pm$ 3.20 \\
 & 512 & \textbf{0.76 $\pm$ 0.20} & \textbf{9.24 $\pm$ 2.00} & \textbf{0.08 $\pm$ 0.04} & \textbf{7.86 $\pm$ 1.72} \\
\midrule
EDGE & -- & \textbf{0.41 $\pm$ 0.05} & \textbf{7.82 $\pm$ 0.86} & \textbf{0.04 $\pm$ 0.01} & \textbf{7.84 $\pm$ 0.95} \\
\midrule
{GraphMaker} & -- & \textbf{1.35 $\pm$ 0.00} & \textbf{37.68 $\pm$ 0.47} & \textbf{0.55 $\pm$ 0.01} & \textbf{64.99 $\pm$ 0.47} \\
\midrule
E-R & -- & \textbf{1.41 $\pm$ 0.01} & \textbf{47.08 $\pm$ 0.44} & \textbf{0.67 $\pm$ 0.01} & \textbf{69.22 $\pm$ 0.40} \\
\midrule
BTER & -- & \textbf{0.04 ± 0.01} & \textbf{1.64 ± 0.32} & \textbf{0.00 ± 0.00} & \textbf{0.00 ± 0.00} \\
\midrule
mKPGM & -- & \textbf{1.06 ± 0.01} & \textbf{31.71 ± 0.39} & \textbf{0.33 ± 0.01} & \textbf{44.86 ± 0.45} \\
\bottomrule
\end{tabular}

\label{tab:youtube-degree-dists}
\end{table}

\begin{table}[ht]
\centering
\caption{Generation quality in eigenvalue distribution similarity on YouTube dataset.
W1 dist.\ is reported in its original scale, while KS dist., Energy dist., and MMD are scaled by $10^{-1}$.}
\begin{tabular}{cc|cccc}
\toprule
Method & Config & W1 dist. & KS dist. & Energy dist. & MMD \\
\midrule
\multirow{5}{*}{SyNG\text{-}D}
 & 2 & 0.31 ± 0.06 & 0.30 ± 0.07 & 0.11 ± 0.05 & 0.26 ± 0.09 \\
 & 3 & 0.23 ± 0.04 & 0.20 ± 0.06 & 0.05 ± 0.03 & 0.08 ± 0.10 \\
 & 4 & \textbf{0.19 ± 0.04} & \textbf{0.14 ± 0.04} & \textbf{0.03 ± 0.01} & \textbf{0.01 ± 0.03} \\
 & 5 & 0.30 ± 0.08 & 0.20 ± 0.06 & 0.07 ± 0.04 & 0.02 ± 0.05 \\
 & 6 & 0.40 ± 0.08 & 0.28 ± 0.07 & 0.14 ± 0.07 & 0.11 ± 0.11 \\
\midrule
\multirow{5}{*}{{SyNG\text{-}D (MLP)}}
 & 2 & 0.30 ± 0.06 & 0.28 ± 0.06 & 0.10 ± 0.05 & 0.24 ± 0.09 \\
 & 3 & 0.28 ± 0.06 & 0.27 ± 0.07 & 0.09 ± 0.05 & 0.20 ± 0.11 \\
 & 4 & 0.19 ± 0.05 & 0.18 ± 0.06 & 0.04 ± 0.03 & 0.04 ± 0.07 \\
 & 5 & \textbf{0.17 ± 0.04} & \textbf{0.14 ± 0.04} & \textbf{0.03 ± 0.02} & 0.01 ± 0.04 \\
 & 6 & 0.19 ± 0.06 & \textbf{0.14 ± 0.04} & \textbf{0.03 ± 0.02} & \textbf{0.00 ± 0.01} \\
\midrule
\multirow{5}{*}{SyNG\text{-}R}
 & 2 & 0.43 ± 0.07 & 0.44 ± 0.07 & 0.24 ± 0.09 & 0.45 ± 0.09 \\
 & 3 & 0.38 ± 0.06 & 0.39 ± 0.07 & 0.19 ± 0.07 & 0.39 ± 0.10 \\
 & 4 & 0.33 ± 0.06 & 0.36 ± 0.06 & 0.15 ± 0.06 & 0.34 ± 0.09 \\
 & 5 & 0.31 ± 0.06 & 0.36 ± 0.06 & 0.14 ± 0.06 & 0.32 ± 0.09 \\
 & 6 & \textbf{0.29 ± 0.05} & \textbf{0.35 ± 0.06} & \textbf{0.13 ± 0.05} & \textbf{0.31 ± 0.09} \\
\midrule
\multirow{6}{*}{VGAE}
& 2 & 1.27 ± 0.01 & 1.51 ± 0.01 & 2.35 ± 0.04 & 1.85 ± 0.01 \\
 & 3 & 1.26 ± 0.01 & 1.51 ± 0.01 & 2.33 ± 0.04 & 1.84 ± 0.01 \\
 & 4 & \textbf{1.15 ± 0.01} & \textbf{1.41 ± 0.01} & \textbf{1.94 ± 0.03} & \textbf{1.72 ± 0.01} \\
 & 5 & 1.29 ± 0.01 & 1.53 ± 0.01 & 2.41 ± 0.04 & 1.87 ± 0.01 \\
 & 6 & 1.17 ± 0.01 & 1.42 ± 0.01 & 2.02 ± 0.03 & 1.74 ± 0.01 \\
 & 16 & 1.21 ± 0.01 & 1.46 ± 0.01 & 2.14 ± 0.04 & 1.79 ± 0.01 \\
\midrule
\multirow{3}{*}{GRAN}
 & 128 & 0.97 ± 0.46 & \textbf{0.70 ± 0.29} & 1.14 ± 1.25 & \textbf{0.71 ± 0.32} \\
 & 256 & 4.00 ± 0.23 & 2.40 ± 0.12 & 14.20 ± 1.53 & 2.58 ± 0.14 \\
 & 512 & \textbf{0.94 ± 0.17} & 0.76 ± 0.10 & \textbf{0.99 ± 0.33} & 0.79 ± 0.08 \\
\midrule
EDGE & -- & \textbf{0.34 ± 0.04} & \textbf{0.39 ± 0.08} & \textbf{0.14 ± 0.05} & \textbf{0.44 ± 0.07} \\
\midrule
{GraphMaker} & -- & \textbf{1.62 ± 0.01} & \textbf{1.81 ± 0.01} & \textbf{3.68 ± 0.04} & \textbf{2.31 ± 0.01} \\
\midrule
E-R & -- & \textbf{1.95 ± 0.01} & \textbf{2.10 ± 0.01} & \textbf{5.54 ± 0.06} & \textbf{2.60 ± 0.01} \\
\midrule
BTER & -- & \textbf{0.76 ± 0.01} & \textbf{0.60 ± 0.01} & \textbf{0.57 ± 0.02} & \textbf{0.64 ± 0.01} \\
\midrule
mKPGM & -- & \textbf{1.72 ± 0.01} & \textbf{1.84 ± 0.01} & \textbf{4.19 ± 0.06} & \textbf{2.24 ± 0.01} \\
\bottomrule
\end{tabular}

\label{tab:youtube-eigenvalues-dists}
\end{table}

\begin{table}[ht]
\centering
\caption{Generation quality in numerical characteristics on YouTube dataset.}
\begin{tabular}{c c | r r r | r r r}
\toprule
\multirow{2}{*}{Method} & \multirow{2}{*}{Config}
& \multicolumn{3}{c|}{Clus ($\times 10^{-2}$)}
& \multicolumn{3}{c}{Tri ($\times 10^{-4}$)} \\
\cmidrule(lr){3-5}\cmidrule(lr){6-8}
& & RMSE & MAE & Bias & RMSE & MAE & Bias \\
\midrule
\multirow{5}{*}{SyNG\text{-}D}
 & 2 & 2.28 & 2.15 & -2.15 & 0.97 & 0.92 & 0.92 \\
 & 3 & 1.69 & 1.46 & -1.41 & 0.89 & 0.84 & 0.84 \\
 & 4 & \textbf{1.34} & \textbf{1.08} & \textbf{-0.95} & 0.83 & 0.78 & 0.78 \\
 & 5 & 1.50 & 1.22 & -1.09 & 0.58 & {0.52} & {0.52} \\
 & 6 & 1.42 & 1.19 & -1.01 & \textbf{0.41} & \textbf{0.35} & \textbf{0.35} \\
\midrule
\multirow{5}{*}{{SyNG\text{-}D (MLP)}}
 & 2 & \textbf{1.12} & \textbf{0.88} & \textbf{-0.46} & 1.35 & 1.30 & 1.30 \\
 & 3 & 1.87 & 1.63 & 1.54 & 1.47 & 1.43 & 1.43 \\
 & 4 & 1.58 & 1.34 & 1.15 & 1.62 & 1.58 & 1.58 \\
 & 5 & 2.98 & 2.83 & 2.80 & 1.40 & 1.36 & 1.36 \\
 & 6 & 3.13 & 2.93 & 2.88 & \textbf{1.18} & \textbf{1.14} & \textbf{1.14} \\
\midrule
\multirow{5}{*}{SyNG\text{-}R}
 & 2 & 2.23 & 2.08 & -2.07 & \textbf{1.15} & \textbf{1.11} & \textbf{1.11} \\
 & 3 & \textbf{1.07} & \textbf{0.87} & \textbf{-0.40} & 1.38 & 1.34 & 1.34 \\
 & 4 & 1.20 & 0.96 & 0.66 & 1.53 & 1.49 & 1.49 \\
 & 5 & 1.57 & 1.30 & 1.19 & 1.61 & 1.57 & 1.57 \\
 & 6 & 1.84 & 1.57 & 1.52 & 1.65 & 1.61 & 1.61 \\
\midrule
\multirow{6}{*}{VGAE}
 & 2  & 11.80 & 11.80 & -11.80 & 0.81 & 0.81 & -0.81 \\
 & 3  & 11.66 & 11.66 & -11.66 & 0.80 & 0.80 & -0.80 \\
 & 4  & \textbf{8.00} & \textbf{8.00} & \textbf{-8.00} & \textbf{0.56} & \textbf{0.56} & \textbf{-0.56} \\
 & 5  & 12.07 & 12.07 & -12.07 & 0.83 & 0.83 & -0.83 \\
 & 6  & 9.24 & 9.24 & -9.24 & 0.65 & 0.65 & -0.65 \\
 & 16 & 9.87 & 9.87 & -9.87 & 0.69 & 0.69 & -0.69 \\
\midrule
\multirow{3}{*}{GRAN}
 & 128 & 11.34 & 11.32 & -11.32 & \textbf{0.85} & \textbf{0.56} & \textbf{0.54} \\
 & 256 & \textbf{5.30} & \textbf{5.29} & \textbf{-5.29} & 15.05 & 14.95 & 14.95 \\
 & 512 & 10.11 & 10.10 & -10.10 & 1.13 & 1.08 & 1.08 \\
\midrule
EDGE & -- & \textbf{14.45} & \textbf{13.81} & \textbf{-13.80} & \textbf{0.78} & \textbf{0.75} & \textbf{-0.50} \\
\midrule
{GraphMaker} & -- & \textbf{16.71} & \textbf{16.71} & \textbf{-16.71} & \textbf{1.12} & \textbf{1.12} & \textbf{-1.12} \\
\midrule
E-R & -- & \textbf{16.30} & \textbf{16.30} & \textbf{-16.30} & \textbf{1.05} & \textbf{1.05} & \textbf{-1.05} \\
\midrule
BTER & -- & \textbf{9.73} & \textbf{9.73} & \textbf{-9.73} & \textbf{0.02} & \textbf{0.02} & \textbf{-0.01} \\
\midrule
mKPGM & -- & \textbf{16.00} & \textbf{16.00} & \textbf{-16.00} & \textbf{1.03} & \textbf{1.03} & \textbf{-1.03} \\
\bottomrule
\end{tabular}
\label{tab:youtube-gcc-tri}
\end{table}

\paragraph{DBLP dataset.} The following  \cref{tab:dblp-degree-dists,tab:dblp-eigen-dists,tab:dblp-gcc-tri} present the full experimental results for the DBLP dataset.

\begin{table}[ht]
\centering
\caption{Generation quality in degree centralities distribution similarity on DBLP dataset.
W1 dist.\ and Energy dist.\ are scaled by $10^{-2}$;
KS dist.\ and MMD are scaled by $10^{-1}$.}
\label{tab:dblp-degree-dists}
\begin{tabular}{cc|cccc}
\toprule
Method & Config & W1 dist. & KS dist. & Energy dist. & MMD \\
\midrule
\multirow{5}{*}{SyNG-D}
 & 2 & 0.19 $\pm$ 0.08 & \textbf{0.75 $\pm$ 0.17} & \textbf{0.02 $\pm$ 0.01} & \textbf{0.93 $\pm$ 0.23} \\
 & 3 & \textbf{0.18 $\pm$ 0.06} & 0.91 $\pm$ 0.16 & \textbf{0.02 $\pm$ 0.01} & 1.03 $\pm$ 0.19 \\
 & 4 & 0.22 $\pm$ 0.07 & 1.22 $\pm$ 0.14 & 0.03 $\pm$ 0.01 & 1.29 $\pm$ 0.17 \\
 & 5 & 0.33 $\pm$ 0.09 & 1.74 $\pm$ 0.18 & 0.05 $\pm$ 0.02 & 1.78 $\pm$ 0.20 \\
 & 6 & 0.44 $\pm$ 0.10 & 2.19 $\pm$ 0.18 & 0.09 $\pm$ 0.02 & 2.22 $\pm$ 0.20 \\
\midrule
\multirow{5}{*}{{SyNG-D (MLP)}}
 & 2 & \textbf{0.31 $\pm$ 0.07} & 1.81 $\pm$ 0.17 & \textbf{0.04 $\pm$ 0.01} & 1.50 $\pm$ 0.16 \\
 & 3 & 0.42 $\pm$ 0.06 & 1.33 $\pm$ 0.13 & 0.07 $\pm$ 0.02 & 1.67 $\pm$ 0.18 \\
 & 4 & 0.47 $\pm$ 0.15 & \textbf{1.19 $\pm$ 0.19} & 0.06 $\pm$ 0.03 & \textbf{1.40 $\pm$ 0.25} \\
 & 5 & 0.47 $\pm$ 0.10 & 1.36 $\pm$ 0.14 & 0.07 $\pm$ 0.02 & 1.72 $\pm$ 0.16 \\
 & 6 & 0.35 $\pm$ 0.06 & 1.53 $\pm$ 0.18 & 0.06 $\pm$ 0.01 & 1.73 $\pm$ 0.19 \\
\midrule
\multirow{5}{*}{SyNG-R}
 & 2 & \textbf{0.15 $\pm$ 0.07} & \textbf{0.73 $\pm$ 0.18} & \textbf{0.01 $\pm$ 0.01} & \textbf{0.75 $\pm$ 0.28} \\
 & 3 & \textbf{0.15 $\pm$ 0.07} & \textbf{0.73 $\pm$ 0.18} & \textbf{0.01 $\pm$ 0.01} & \textbf{0.75 $\pm$ 0.28} \\
 & 4 & \textbf{0.15 $\pm$ 0.07} & \textbf{0.73 $\pm$ 0.18} & \textbf{0.01 $\pm$ 0.01} & 0.77 $\pm$ 0.28 \\
 & 5 & 0.16 $\pm$ 0.07 & 0.74 $\pm$ 0.18 & \textbf{0.01 $\pm$ 0.01} & 0.78 $\pm$ 0.27 \\
 & 6 & 0.16 $\pm$ 0.07 & 0.75 $\pm$ 0.17 & \textbf{0.01 $\pm$ 0.01} & 0.80 $\pm$ 0.27 \\
\midrule
\multirow{6}{*}{VGAE}
 & 2 & 0.49 $\pm$ 0.01 & \textbf{2.33 $\pm$ 0.08} & 0.10 $\pm$ 0.00 & 2.84 $\pm$ 0.10 \\
 & 3 & \textbf{0.29 $\pm$ 0.00} & 2.65 $\pm$ 0.07 & \textbf{0.09 $\pm$ 0.00} & \textbf{2.80 $\pm$ 0.06} \\
 & 4 & 0.35 $\pm$ 0.01 & 3.20 $\pm$ 0.08 & 0.12 $\pm$ 0.00 & 3.28 $\pm$ 0.06 \\
 & 5 & 0.35 $\pm$ 0.01 & 3.16 $\pm$ 0.08 & 0.12 $\pm$ 0.00 & 3.23 $\pm$ 0.07 \\
 & 6 & 0.32 $\pm$ 0.00 & 2.96 $\pm$ 0.07 & 0.11 $\pm$ 0.00 & 3.02 $\pm$ 0.05 \\
 & 16 & 0.32 $\pm$ 0.00 & 2.96 $\pm$ 0.08 & 0.10 $\pm$ 0.00 & 2.96 $\pm$ 0.06 \\
\midrule
\multirow{3}{*}{GRAN}
 & 128 & 1.48 $\pm$ 0.25 & 3.79 $\pm$ 0.88 & 0.41 $\pm$ 0.14 & 5.13 $\pm$ 0.96 \\
 & 256 & 1.25 $\pm$ 0.03 & 4.61 $\pm$ 0.12 & 0.54 $\pm$ 0.02 & 6.12 $\pm$ 0.14 \\
 & 512 & \textbf{1.06 $\pm$ 0.01} & \textbf{2.21 $\pm$ 0.08} & \textbf{0.31 $\pm$ 0.01} & \textbf{2.97 $\pm$ 0.09} \\
\midrule
EDGE & -- & \textbf{0.23 $\pm$ 0.12} & \textbf{0.79 $\pm$ 0.11} & \textbf{0.02 $\pm$ 0.02} & \textbf{0.99 $\pm$ 0.23} \\
\midrule
{GraphMaker} & -- & \textbf{1.37 $\pm$ 0.01} & \textbf{5.56 $\pm$ 0.07} & \textbf{0.68 $\pm$ 0.01} & \textbf{7.51 $\pm$ 0.07} \\
\midrule
E-R & -- & \textbf{1.57 $\pm$ 0.01} & \textbf{6.63 $\pm$ 0.07} & \textbf{1.02 $\pm$ 0.02} & \textbf{8.72 $\pm$ 0.06} \\
\midrule
BTER & -- & \textbf{0.08 ± 0.01} & \textbf{0.56 ± 0.05} & \textbf{0.00 ± 0.00} & \textbf{0.46 ± 0.05} \\
\midrule
mKPGM & -- & \textbf{1.13 ± 0.01} & \textbf{3.25 ± 0.08} & \textbf{0.37 ± 0.00} & \textbf{4.57 ± 0.08} \\
\bottomrule
\end{tabular}
\end{table}

\begin{table}[ht]
\centering
\caption{Generation quality in eigenvalue distribution similarity on DBLP dataset.
W1, KS, and MMD are scaled by $10^{-1}$; Energy dist.\ is scaled by $10^{-2}$.}
\label{tab:dblp-eigen-dists}
\begin{tabular}{cc|cccc}
\toprule
Method & Config & W1 dist. & KS dist. & Energy dist. & MMD \\
\midrule
\multirow{5}{*}{SyNG-D}
& 2 & 3.02 ± 0.32 & \textbf{0.81 ± 0.06} & 2.35 ± 0.43 & 0.88 ± 0.06 \\
 & 3 & 2.13 ± 0.28 & 0.91 ± 0.07 & \textbf{1.57 ± 0.21} & \textbf{0.77 ± 0.05} \\
 & 4 & \textbf{1.73 ± 0.17} & 1.06 ± 0.07 & 1.65 ± 0.20 & \textbf{0.77 ± 0.05} \\
 & 5 & 2.56 ± 0.43 & 1.32 ± 0.08 & 2.99 ± 0.57 & 0.96 ± 0.08 \\
 & 6 & 3.80 ± 0.46 & 1.63 ± 0.08 & 5.62 ± 0.89 & 1.28 ± 0.08 \\
\midrule
\multirow{5}{*}{{SyNG-D (MLP)}}
& 2 & \textbf{2.85 ± 0.34} & 1.10 ± 0.07 & \textbf{2.37 ± 0.31} & \textbf{0.93 ± 0.05} \\
 & 3 & 6.09 ± 0.47 & 1.10 ± 0.08 & 8.94 ± 1.27 & 1.34 ± 0.07 \\
 & 4 & 4.46 ± 0.41 & \textbf{0.83 ± 0.05} & 5.05 ± 0.87 & 1.12 ± 0.07 \\
 & 5 & 5.75 ± 0.46 & 0.97 ± 0.07 & 7.42 ± 1.06 & 1.32 ± 0.06 \\
 & 6 & 5.92 ± 0.45 & 1.03 ± 0.08 & 8.06 ± 1.13 & 1.31 ± 0.06 \\
\midrule
\multirow{5}{*}{SyNG-R}
& 2 & 2.89 ± 0.33 & 0.63 ± 0.05 & 2.02 ± 0.50 & 0.80 ± 0.07 \\
 & 3 & 2.25 ± 0.33 & 0.58 ± 0.05 & 1.35 ± 0.39 & 0.65 ± 0.07 \\
 & 4 & 1.70 ± 0.29 & 0.50 ± 0.06 & 0.83 ± 0.25 & 0.49 ± 0.07 \\
 & 5 & 1.31 ± 0.30 & 0.46 ± 0.05 & 0.52 ± 0.18 & 0.37 ± 0.07 \\
 & 6 & \textbf{1.19 ± 0.32} & \textbf{0.41 ± 0.05} & \textbf{0.40 ± 0.15} & \textbf{0.27 ± 0.07} \\
\midrule
\multirow{6}{*}{VGAE}
 & 2 & 12.38 ± 0.16 & 2.35 ± 0.03 & 37.67 ± 1.01 & 2.62 ± 0.03 \\
 & 3 & 4.04 ± 0.06 & 0.98 ± 0.03 & 3.47 ± 0.17 & 1.20 ± 0.04 \\
 & 4 & 3.83 ± 0.10 & 0.67 ± 0.03 & 2.81 ± 0.18 & 0.68 ± 0.03 \\
 & 5 & 3.76 ± 0.12 & 0.65 ± 0.03 & 2.69 ± 0.21 & 0.65 ± 0.03 \\
 & 6 & \textbf{3.51 ± 0.08} & \textbf{0.61 ± 0.02} & \textbf{2.30 ± 0.13} & \textbf{0.64 ± 0.03} \\
 & 16 & 3.57 ± 0.09 & 0.63 ± 0.03 & 2.48 ± 0.16 & 0.65 ± 0.03 \\
\midrule
\multirow{3}{*}{GRAN}
& 128 & \textbf{6.40 ± 1.61} & 1.61 ± 0.20 & 14.92 ± 6.20 & 1.87 ± 0.22 \\
 & 256 & 17.55 ± 0.48 & 3.06 ± 0.05 & 75.24 ± 3.29 & 3.54 ± 0.06 \\
 & 512 & 6.95 ± 0.24 & \textbf{1.49 ± 0.05} & \textbf{11.81 ± 1.00} & \textbf{1.76 ± 0.07} \\
\midrule
EDGE & -- & \textbf{5.50 ± 1.34} & \textbf{1.22 ± 0.25} & \textbf{9.72 ± 4.23} & \textbf{1.39 ± 0.30} \\
\midrule
{GraphMaker} & -- & \textbf{18.39 ± 0.14} & \textbf{3.32 ± 0.02} & \textbf{85.40 ± 1.14} & \textbf{3.93 ± 0.02} \\
\midrule
E-R & -- & \textbf{22.55 ± 0.14} & \textbf{3.77 ± 0.02} & \textbf{121.43 ± 1.26} & \textbf{4.41 ± 0.02} \\
\midrule
BTER & -- & \textbf{9.44 ± 0.11} & \textbf{1.14 ± 0.02} & \textbf{14.67 ± 0.35} & \textbf{1.48 ± 0.03} \\
\midrule
mKPGM & -- & \textbf{13.13 ± 0.13} & \textbf{2.52 ± 0.02} & \textbf{43.78 ± 0.90} & \textbf{3.00 ± 0.03} \\
\bottomrule
\end{tabular}
\end{table}

\begin{table}[htbp]
\centering
\caption{Generation quality in numerical characteristics on DBLP dataset.
}
\label{tab:dblp-gcc-tri}

\begin{tabular}{c c | c c c | c c c}
\toprule
\multirow{2}{*}{Method} & \multirow{2}{*}{Config}
& \multicolumn{3}{c|}{Clus ($\times 10^{-2}$)}
& \multicolumn{3}{c}{Tri ($\times 10^{-4}$)} \\
\cmidrule(lr){3-5}\cmidrule(lr){6-8}
& & RMSE & MAE & Bias & RMSE & MAE & Bias \\
\midrule
\multirow{5}{*}{SyNG-D}
 & 2 & 6.22 & 5.87 & -5.87 & \textbf{1.58} & \textbf{1.26} & \textbf{0.46} \\
 & 3 & 5.34 & \textbf{4.94} & \textbf{-4.94} & 1.61 & 1.37 & -0.87 \\
 & 4 & 5.51 & 5.23 & -5.23 & 1.81 & 1.57 & -1.39 \\
 & 5 & \textbf{5.27} & 4.95 & -4.95 & 2.34 & 2.13 & -2.07 \\
 & 6 & 5.71 & 5.37 & -5.37 & 3.01 & 2.86 & -2.86 \\
\midrule
\multirow{5}{*}{{SyNG-D (MLP)}}
 & 2 & 14.00 & 13.61 & -13.61 & 2.91 & 2.67 & -2.62 \\
 & 3 & 24.21 & 23.91 & -23.91 & 3.74 & 3.64 & -3.64 \\
 & 4 & \textbf{10.12} & \textbf{9.92} & \textbf{-9.92} & 3.63 & 3.20 & 3.13 \\
 & 5 & 14.38 & 14.14 & -14.14 & 2.29 & 1.81 & 1.47 \\
 & 6 & 17.15 & 16.94 & -16.94 & \textbf{1.80} & \textbf{1.54} & \textbf{-1.37} \\
\midrule
\multirow{5}{*}{SyNG-R}
 & 2 & 3.78 & 3.28 & -3.25 & \textbf{1.43} & \textbf{1.12} & \textbf{0.11} \\
 & 3 & 3.22 & 2.69 & -2.64 & 1.44 & \textbf{1.12} & 0.19 \\
 & 4 & 2.63 & 2.10 & -1.96 & 1.45 & \textbf{1.12} & 0.25 \\
 & 5 & 2.27 & 1.76 & -1.51 & 1.46 & 1.13 & 0.28 \\
 & 6 & \textbf{2.14} & \textbf{1.66} & \textbf{-1.32} & 1.46 & 1.13 & 0.31 \\
\midrule
\multirow{6}{*}{VGAE}
 & 2 & 63.28 & 63.28 & -63.28 & 6.65 & 6.65 & -6.65 \\
 & 3 & \textbf{1.36} & \textbf{1.34} & \textbf{-1.34} & 0.58 & 0.58 & -0.58 \\
 & 4 & 2.94 & 2.93 & 2.93 & 0.28 & 0.28 & 0.28 \\
 & 5 & 3.01 & 3.00 & 3.00 & 0.26 & 0.26 & 0.26 \\
 & 6 & 3.32 & 3.31 & 3.31 & 0.11 & 0.10 & 0.10 \\
 & 16 & 3.10 & 3.10 & 3.10 & \textbf{0.05} & \textbf{0.05} & \textbf{0.05} \\
\midrule
\multirow{3}{*}{GRAN}
 & 128 & \textbf{87.63} & \textbf{87.63} & \textbf{-87.63} & \textbf{6.78} & \textbf{6.73} & \textbf{-6.73} \\
 & 256 & 88.93 & 88.93 & -88.93 & 7.93 & 7.93 & -7.93 \\
 & 512 & 89.21 & 89.21 & -89.21 & 7.99 & 7.99 & -7.99 \\
\midrule
EDGE & -- & \textbf{13.27} & \textbf{10.86} & \textbf{-10.86} & \textbf{2.20} & \textbf{1.77} & \textbf{-1.77} \\
\midrule
{GraphMaker} & -- & \textbf{89.78} & \textbf{89.78} & \textbf{-89.78} & \textbf{7.98} & \textbf{7.98} & \textbf{-7.98} \\
\midrule
E-R & -- & \textbf{89.43} & \textbf{89.43} & \textbf{-89.43} & \textbf{7.96} & \textbf{7.96} & \textbf{-7.96} \\
\midrule
BTER & -- & \textbf{67.75} & \textbf{67.75} & \textbf{-67.75} & \textbf{6.09} & \textbf{6.09} & \textbf{-6.09} \\
\midrule
mKPGM & -- & \textbf{89.97} & \textbf{89.97} & \textbf{-89.97} & \textbf{8.00} & \textbf{8.00} & \textbf{-8.00} \\
\bottomrule
\end{tabular}

\end{table}

\paragraph{Yelp dataset.} Here we provide the detailed evaluation for the Yelp dataset. The results for degree centrality, eigenvalue distribution, and other numerical characteristics are shown in the \cref{tab:yelp-degree-dists,tab:yelp-eigen-dists,tab:dblp-gcc-tri-new} respectively.

\begin{table}[htbp]
\centering
\caption{Generation quality in degree centralities distribution similarity on Yelp dataset. All metrics scaled by $10^{-2}$.}
\label{tab:yelp-degree-dists}

\begin{tabular}{cc|cccc}
\toprule
Method & Config & W1 dist. & KS dist. & Energy dist. & MMD \\
\midrule
\multirow{5}{*}{SyNG-D}
 & 2 & \textbf{0.15 $\pm$ 0.05} & \textbf{2.49 $\pm$ 0.74} & \textbf{0.54 $\pm$ 0.19} & \textbf{1.28 $\pm$ 0.87} \\
 & 3 & 0.21 $\pm$ 0.10 & 3.55 $\pm$ 1.26 & 0.77 $\pm$ 0.34 & 2.36 $\pm$ 1.43 \\
 & 4 & 0.24 $\pm$ 0.10 & 4.54 $\pm$ 1.12 & 0.98 $\pm$ 0.33 & 3.46 $\pm$ 1.09 \\
 & 5 & 0.36 $\pm$ 0.11 & 6.02 $\pm$ 1.16 & 1.42 $\pm$ 0.37 & 5.03 $\pm$ 1.15 \\
 & 6 & 0.44 $\pm$ 0.13 & 7.72 $\pm$ 1.42 & 1.81 $\pm$ 0.44 & 6.72 $\pm$ 1.36 \\
\midrule
\multirow{5}{*}{{SyNG-D (MLP)}}
 & 2 & 0.32 $\pm$ 0.12 & \textbf{4.68 $\pm$ 1.36} & 1.18 $\pm$ 0.43 & \textbf{3.26 $\pm$ 1.42} \\
 & 3 & 0.39 $\pm$ 0.12 & 5.40 $\pm$ 1.30 & 1.38 $\pm$ 0.40 & 4.32 $\pm$ 1.18 \\
 & 4 & 0.69 $\pm$ 0.15 & 8.56 $\pm$ 1.33 & 2.49 $\pm$ 0.46 & 7.82 $\pm$ 1.23 \\
 & 5 & \textbf{0.23 $\pm$ 0.08} & 5.22 $\pm$ 1.03 & \textbf{1.01 $\pm$ 0.26} & 4.17 $\pm$ 0.79 \\
 & 6 & 0.30 $\pm$ 0.09 & 4.80 $\pm$ 1.06 & 1.19 $\pm$ 0.33 & 4.51 $\pm$ 0.96 \\
\midrule
\multirow{5}{*}{SyNG-R}
 & 2 & \textbf{0.13 $\pm$ 0.06} & 2.02 $\pm$ 0.80 & \textbf{0.47 $\pm$ 0.22} & 0.65 $\pm$ 0.94 \\
 & 3 & \textbf{0.13 $\pm$ 0.06} & \textbf{1.98 $\pm$ 0.78} & \textbf{0.47 $\pm$ 0.22} & \textbf{0.60 $\pm$ 0.90} \\
 & 4 & \textbf{0.13 $\pm$ 0.06} & 1.99 $\pm$ 0.82 & \textbf{0.47 $\pm$ 0.23} & 0.62 $\pm$ 0.92 \\
 & 5 & 0.14 $\pm$ 0.06 & 2.05 $\pm$ 0.80 & 0.48 $\pm$ 0.22 & 0.67 $\pm$ 0.94 \\
 & 6 & \textbf{0.13 $\pm$ 0.06} & \textbf{1.98 $\pm$ 0.78} & \textbf{0.47 $\pm$ 0.22} & 0.62 $\pm$ 0.91 \\
\midrule
\multirow{6}{*}{VGAE}
 & 2 & \textbf{1.65 $\pm$ 0.00} & \textbf{22.12 $\pm$ 0.23} & \textbf{5.79 $\pm$ 0.02} & \textbf{32.23 $\pm$ 0.16} \\
 & 3 & 1.70 $\pm$ 0.00 & 23.14 $\pm$ 0.22 & 5.98 $\pm$ 0.02 & 33.32 $\pm$ 0.19 \\
 & 4 & 1.71 $\pm$ 0.00 & 23.50 $\pm$ 0.21 & 6.03 $\pm$ 0.02 & 33.67 $\pm$ 0.17 \\
 & 5 & 1.69 $\pm$ 0.00 & 22.81 $\pm$ 0.22 & 5.97 $\pm$ 0.02 & 33.38 $\pm$ 0.16 \\
 & 6 & 1.79 $\pm$ 0.00 & 24.47 $\pm$ 0.20 & 6.34 $\pm$ 0.02 & 35.49 $\pm$ 0.14 \\
 & 16 & 1.73 $\pm$ 0.00 & 23.37 $\pm$ 0.21 & 6.09 $\pm$ 0.02 & 33.74 $\pm$ 0.18 \\
\midrule
{GraphMaker} & -- & \textbf{2.59 $\pm$ 0.00} & \textbf{41.44 $\pm$ 0.20} & \textbf{10.33 $\pm$ 0.01} & \textbf{69.26 $\pm$ 0.15} \\
\midrule
E-R & -- & \textbf{2.81 $\pm$ 0.00} & \textbf{57.17 $\pm$ 0.18} & \textbf{12.09 $\pm$ 0.03} & \textbf{77.36 $\pm$ 0.13} \\
\midrule
\multirow{1}{*}{BTER} & -- & \textbf{0.04 ± 0.00} & \textbf{0.97 ± 0.17} & \textbf{0.14 ± 0.01} & \textbf{0.00 ± 0.00} \\
\midrule
\multirow{1}{*}{mKPGM} & -- & \textbf{3.10 ± 0.00} & \textbf{49.19 ± 0.18} & \textbf{13.07 ± 0.02} & \textbf{49.97 ± 0.14} \\
\bottomrule
\end{tabular}
\end{table}

\begin{table}[htbp]
\centering
\caption{Generation quality in eigenvalue distribution similarity on Yelp dataset.
W1 unchanged; KS, Energy, and MMD scaled by $10^{-1}$.}
\label{tab:yelp-eigen-dists}

\begin{tabular}{cc|cccc}
\toprule
Method & Config & W1 dist. & KS dist. & Energy dist. & MMD \\
\midrule
\multirow{5}{*}{SyNG-D}
 & 2 & 1.29 $\pm$ 0.09 & 0.53 $\pm$ 0.04 & 2.93 $\pm$ 0.24 & 0.61 $\pm$ 0.05 \\
 & 3 & 1.09 $\pm$ 0.10 & 0.44 $\pm$ 0.05 & 2.43 $\pm$ 0.27 & 0.51 $\pm$ 0.05 \\
 & 4 & 0.92 $\pm$ 0.08 & 0.38 $\pm$ 0.04 & 2.03 $\pm$ 0.23 & 0.43 $\pm$ 0.05 \\
 & 5 & 0.77 $\pm$ 0.08 & 0.30 $\pm$ 0.04 & 1.62 $\pm$ 0.22 & 0.34 $\pm$ 0.05 \\
 & 6 & \textbf{0.64 $\pm$ 0.09} & \textbf{0.23 $\pm$ 0.05} & \textbf{1.28 $\pm$ 0.25} & \textbf{0.25 $\pm$ 0.06} \\
\midrule
\multirow{5}{*}{{SyNG-D (MLP)}}
 & 2 & 2.07 $\pm$ 0.10 & 0.84 $\pm$ 0.04 & 4.88 $\pm$ 0.25 & 0.96 $\pm$ 0.05 \\
 & 3 & 1.88 $\pm$ 0.10 & 0.75 $\pm$ 0.04 & 4.43 $\pm$ 0.25 & 0.87 $\pm$ 0.04 \\
 & 4 & 2.00 $\pm$ 0.11 & 0.82 $\pm$ 0.04 & 4.79 $\pm$ 0.26 & 0.94 $\pm$ 0.05 \\
 & 5 & \textbf{1.21 $\pm$ 0.10} & \textbf{0.51 $\pm$ 0.04} & \textbf{2.83 $\pm$ 0.25} & \textbf{0.57 $\pm$ 0.05} \\
 & 6 & 1.55 $\pm$ 0.09 & 0.66 $\pm$ 0.04 & 3.73 $\pm$ 0.23 & 0.75 $\pm$ 0.04 \\
\midrule
\multirow{5}{*}{SyNG-R}
 & 2 & 1.32 $\pm$ 0.09 & 0.55 $\pm$ 0.04 & 3.04 $\pm$ 0.25 & 0.64 $\pm$ 0.05 \\
 & 3 & 1.22 $\pm$ 0.09 & 0.51 $\pm$ 0.04 & 2.82 $\pm$ 0.25 & 0.59 $\pm$ 0.05 \\
 & 4 & 1.10 $\pm$ 0.09 & 0.47 $\pm$ 0.04 & 2.55 $\pm$ 0.24 & 0.54 $\pm$ 0.05 \\
 & 5 & 1.00 $\pm$ 0.09 & 0.44 $\pm$ 0.04 & 2.33 $\pm$ 0.24 & 0.49 $\pm$ 0.05 \\
 & 6 & \textbf{0.94 $\pm$ 0.09} & \textbf{0.42 $\pm$ 0.04} & \textbf{2.21 $\pm$ 0.23} & \textbf{0.47 $\pm$ 0.05} \\
\midrule
\multirow{6}{*}{VGAE}
 & 2 & 2.44 $\pm$ 0.01 & \textbf{1.40 $\pm$ 0.00} & 6.51 $\pm$ 0.02 & \textbf{1.73 $\pm$ 0.00} \\
 & 3 & 2.46 $\pm$ 0.01 & 1.41 $\pm$ 0.00 & 6.56 $\pm$ 0.02 & 1.76 $\pm$ 0.00 \\
 & 4 & 2.47 $\pm$ 0.01 & 1.41 $\pm$ 0.00 & 6.58 $\pm$ 0.02 & 1.76 $\pm$ 0.00 \\
 & 5 & 2.48 $\pm$ 0.01 & 1.42 $\pm$ 0.00 & 6.60 $\pm$ 0.02 & 1.75 $\pm$ 0.00 \\
 & 6 & 2.51 $\pm$ 0.01 & 1.44 $\pm$ 0.00 & 6.71 $\pm$ 0.02 & 1.79 $\pm$ 0.00 \\
 & 16 & \textbf{2.42 $\pm$ 0.01} & \textbf{1.40 $\pm$ 0.00} & \textbf{6.49 $\pm$ 0.02} & 1.76 $\pm$ 0.00 \\
\midrule
{GraphMaker} & -- & \textbf{2.94 $\pm$ 0.01} & \textbf{1.60 $\pm$ 0.00} & \textbf{7.63 $\pm$ 0.02} & \textbf{2.07 $\pm$ 0.00} \\
\midrule
E-R & -- & \textbf{3.83 $\pm$ 0.01} & \textbf{2.00 $\pm$ 0.00} & \textbf{10.22 $\pm$ 0.02} & \textbf{2.45 $\pm$ 0.00} \\
\midrule
\multirow{1}{*}{BTER} & -- & \textbf{1.75 ± 0.01} & \textbf{0.70 ± 0.00} & \textbf{3.96 ± 0.02} & \textbf{0.83 ± 0.00} \\
\midrule
\multirow{1}{*}{mKPGM} & -- & \textbf{2.06 ± 0.01} & \textbf{0.91 ± 0.00} & \textbf{4.77 ± 0.02} & \textbf{0.93 ± 0.00} \\
\bottomrule
\end{tabular}

\end{table}

\begin{table}[htbp]
\centering
\caption{Generation quality in numerical characteristics on Yelp dataset.
}
\label{tab:dblp-gcc-tri-new}

\begin{tabular}{c c | c c c | c c c}
\toprule
\multirow{2}{*}{Method} & \multirow{2}{*}{Config}
& \multicolumn{3}{c|}{Clus ($\times 10^{-2}$)}
& \multicolumn{3}{c}{Tri ($\times 10^{-4}$)} \\
\cmidrule(lr){3-5}\cmidrule(lr){6-8}
& & RMSE & MAE & Bias & RMSE & MAE & Bias \\
\midrule
\multirow{5}{*}{SyNG-D}
 & 2 & 2.56 & 2.52 & -2.52 & \textbf{1.24} & \textbf{1.08} & \textbf{-1.05} \\
 & 3 & 2.65 & 2.61 & -2.61 & 1.71 & 1.57 & -1.56 \\
 & 4 & 2.01 & 1.96 & -1.96 & 1.59 & 1.47 & -1.46 \\
 & 5 & 1.83 & 1.77 & -1.77 & 1.82 & 1.72 & -1.71 \\
 & 6 & \textbf{1.77} & \textbf{1.70} & \textbf{-1.70} & 2.00 & 1.89 & -1.88 \\
\midrule
\multirow{5}{*}{{SyNG-D (MLP)}}
 & 2 & 2.35 & 2.31 & -2.31 & \textbf{0.74} & \textbf{0.61} & \textbf{-0.22} \\
 & 3 & 1.05 & 0.96 & -0.96 & 1.40 & 1.18 & 1.15 \\
 & 4 & 0.75 & 0.63 & 0.58 & 3.33 & 3.18 & 3.18 \\
 & 5 & 0.81 & 0.71 & -0.68 & 0.98 & 0.84 & -0.72 \\
 & 6 & \textbf{0.65} & \textbf{0.53} & \textbf{0.47} & 1.44 & 1.26 & 1.23 \\
\midrule
\multirow{5}{*}{SyNG-R}
 & 2 & 2.79 & 2.76 & -2.76 & 1.49 & 1.35 & -1.34 \\
 & 3 & 2.40 & 2.36 & -2.36 & 1.33 & 1.17 & -1.15 \\
 & 4 & 1.56 & 1.50 & -1.50 & 1.01 & 0.85 & -0.73 \\
 & 5 & 0.99 & 0.90 & -0.89 & 0.83 & 0.69 & -0.43 \\
 & 6 & \textbf{0.76} & \textbf{0.65} & \textbf{-0.61} & \textbf{0.78} & \textbf{0.64} & \textbf{-0.30} \\
\midrule
\multirow{6}{*}{VGAE}
 & 2  & 10.36 & 10.36 & -10.36 & \textbf{7.36} & \textbf{7.36} & \textbf{-7.36} \\
 & 3  & 10.55 & 10.55 & -10.55 & 7.45 & 7.45 & -7.45 \\
 & 4  & 10.56 & 10.56 & -10.56 & 7.46 & 7.46 & -7.46 \\
 & 5  & 10.77 & 10.77 & -10.77 & 7.49 & 7.49 & -7.49 \\
 & 6  & 11.26 & 11.26 & -11.26 & 7.67 & 7.67 & -7.67 \\
 & 16 & \textbf{10.25} & \textbf{10.25} & \textbf{-10.25} & 7.41 & 7.41 & -7.41 \\
\midrule
{GraphMaker} & -- & \textbf{15.52} & \textbf{15.52} & \textbf{-15.52} & \textbf{8.82} & \textbf{8.82} & \textbf{-8.82} \\
\midrule
E-R & -- & \textbf{14.48} & \textbf{14.48} & \textbf{-14.48} & \textbf{8.12} & \textbf{8.12} & \textbf{-8.12} \\
\midrule
BTER & -- & \textbf{4.45} & \textbf{4.45} & \textbf{-4.45} & \textbf{2.27} & \textbf{2.27} & \textbf{-2.27} \\
\midrule
mKPGM & -- & \textbf{15.98} & \textbf{15.98} & \textbf{-15.98} & \textbf{9.35} & \textbf{9.35} & \textbf{-9.35} \\
\bottomrule
\end{tabular}
\end{table}

\paragraph{PolBlogs dataset.} Finally, we present the comprehensive results for the PolBlogs dataset. The subsequent \cref{tab:polblogs-deg-dists,tab:polblogs-eigen-dists,tab:polblogs-gcc-tri} detail the performance of each method in capturing the structural and numerical properties of the original network.

\begin{table}[htbp]
\centering
\caption{Generation quality in degree centralities distribution similarity on PolBlogs dataset. All metrics scaled by $10^{-2}$.}
\label{tab:polblogs-deg-dists}

\begin{tabular}{cc|cccc}
\toprule
Method & Config & W1 dist. & KS dist. & Energy dist. & MMD \\
\midrule
\multirow{5}{*}{SyNG-D}
 & 2 & \textbf{0.18 $\pm$ 0.08} & \textbf{4.53 $\pm$ 0.97} & \textbf{0.01 $\pm$ 0.01} & \textbf{0.97 $\pm$ 1.44} \\
 & 3 & 0.22 $\pm$ 0.11 & 5.02 $\pm$ 1.34 & \textbf{0.01 $\pm$ 0.01} & 1.95 $\pm$ 2.15 \\
 & 4 & 0.28 $\pm$ 0.13 & 5.74 $\pm$ 1.64 & 0.02 $\pm$ 0.02 & 3.18 $\pm$ 2.50 \\
 & 5 & 0.44 $\pm$ 0.14 & 8.29 $\pm$ 2.04 & 0.04 $\pm$ 0.02 & 6.49 $\pm$ 2.44 \\
 & 6 & 0.52 $\pm$ 0.15 & 9.98 $\pm$ 2.11 & 0.06 $\pm$ 0.03 & 8.31 $\pm$ 2.50 \\
\midrule
\multirow{5}{*}{{SyNG-D (MLP)}}
 & 2 & 0.20 $\pm$ 0.07 & 9.59 $\pm$ 1.21 & \textbf{0.01 $\pm$ 0.01} & 1.47 $\pm$ 1.71 \\
 & 3 & 0.24 $\pm$ 0.10 & 9.81 $\pm$ 1.09 & \textbf{0.01 $\pm$ 0.01} & 3.05 $\pm$ 2.25 \\
 & 4 & 0.33 $\pm$ 0.15 & 10.06 $\pm$ 1.49 & 0.02 $\pm$ 0.02 & 3.20 $\pm$ 2.26 \\
 & 5 & 0.23 $\pm$ 0.07 & \textbf{8.99 $\pm$ 0.93} & \textbf{0.01 $\pm$ 0.01} & 2.79 $\pm$ 1.77 \\
 & 6 & \textbf{0.19 $\pm$ 0.08} & 9.16 $\pm$ 1.35 & \textbf{0.01 $\pm$ 0.01} & \textbf{1.41 $\pm$ 1.73} \\
\midrule
\multirow{5}{*}{SyNG-R}
 & 2 & \textbf{0.18 $\pm$ 0.10} & \textbf{4.47 $\pm$ 0.92} & \textbf{0.01 $\pm$ 0.01} & \textbf{0.92 $\pm$ 1.45} \\
 & 3 & \textbf{0.18 $\pm$ 0.10} & 4.49 $\pm$ 1.11 & \textbf{0.01 $\pm$ 0.01} & 0.93 $\pm$ 1.50 \\
 & 4 & \textbf{0.18 $\pm$ 0.10} & 4.57 $\pm$ 1.23 & \textbf{0.01 $\pm$ 0.01} & 0.94 $\pm$ 1.50 \\
 & 5 & \textbf{0.18 $\pm$ 0.10} & 4.68 $\pm$ 1.30 & \textbf{0.01 $\pm$ 0.01} & 1.06 $\pm$ 1.62 \\
 & 6 & \textbf{0.18 $\pm$ 0.10} & 4.88 $\pm$ 1.49 & \textbf{0.01 $\pm$ 0.01} & 1.12 $\pm$ 1.67 \\
\midrule
\multirow{6}{*}{VGAE}
 & 2 & 0.93 $\pm$ 0.01 & 35.38 $\pm$ 0.49 & 0.24 $\pm$ 0.01 & 35.42 $\pm$ 0.61 \\
 & 3 & 0.97 $\pm$ 0.01 & 36.24 $\pm$ 0.53 & 0.26 $\pm$ 0.01 & 37.16 $\pm$ 0.51 \\
 & 4 & 0.94 $\pm$ 0.01 & 35.65 $\pm$ 0.52 & 0.24 $\pm$ 0.01 & 36.02 $\pm$ 0.60 \\
 & 5 & \textbf{0.92 $\pm$ 0.01} & \textbf{35.31 $\pm$ 0.51} & \textbf{0.23 $\pm$ 0.01} & \textbf{35.33 $\pm$ 0.64} \\
 & 6 & 0.97 $\pm$ 0.01 & 36.35 $\pm$ 0.53 & 0.26 $\pm$ 0.01 & 37.42 $\pm$ 0.57 \\
 & 16 & 0.98 $\pm$ 0.01 & 36.56 $\pm$ 0.49 & 0.26 $\pm$ 0.01 & 37.82 $\pm$ 0.56 \\
\midrule
\multirow{3}{*}{GRAN}
 & 128 & \textbf{0.56 $\pm$ 0.19} & \textbf{15.88 $\pm$ 2.70} & \textbf{0.08 $\pm$ 0.05} & \textbf{13.86 $\pm$ 2.01} \\
 & 256 & 2.25 $\pm$ 0.31 & 35.58 $\pm$ 3.15 & 0.64 $\pm$ 0.15 & 33.78 $\pm$ 3.61 \\
 & 512 & 9.30 $\pm$ 0.67 & 63.37 $\pm$ 2.38 & 6.36 $\pm$ 0.72 & 67.41 $\pm$ 2.69 \\
\midrule
EDGE & -- & \textbf{0.06 $\pm$ 0.00} & \textbf{4.91 $\pm$ 0.44} & \textbf{0.00 $\pm$ 0.00} & \textbf{0.00 $\pm$ 0.00} \\
\midrule
{GraphMaker} & -- & \textbf{1.72 $\pm$ 0.01} & \textbf{49.48 $\pm$ 0.60} & \textbf{0.82 $\pm$ 0.01} & \textbf{74.15 $\pm$ 0.75} \\
\midrule
E-R & -- & \textbf{1.83 $\pm$ 0.01} & \textbf{56.70 $\pm$ 0.56} & \textbf{1.04 $\pm$ 0.01} & \textbf{79.24 $\pm$ 0.59} \\
\midrule
\multirow{1}{*}{BTER} & -- & \textbf{0.06 ± 0.01} & \textbf{5.69 ± 0.62} & \textbf{0.00 ± 0.00} & \textbf{0.00 ± 0.00} \\
\midrule
\multirow{1}{*}{mKPGM} & -- & \textbf{1.57 ± 0.01} & \textbf{33.09 ± 0.37} & \textbf{0.60 ± 0.01} & \textbf{50.67 ± 0.90} \\
\bottomrule
\end{tabular}
\end{table}

\begin{table}[htbp]
\centering
\caption{Generation quality in eigenvalue distribution similarity on PolBlogs dataset.
W1 is unscaled; KS, Energy, and MMD are scaled by $10^{-2}$.}
\label{tab:polblogs-eigen-dists}
\begin{tabular}{cc|cccc}
\toprule
Method & Config & W1 dist. & KS dist. & Energy dist. & MMD \\
\midrule
\multirow{5}{*}{SyNG-D}
 & 2 & 0.21 $\pm$ 0.06 & 4.82 $\pm$ 0.94 & 0.92 $\pm$ 0.51 & 2.58 $\pm$ 1.37 \\
 & 3 & \textbf{0.20 $\pm$ 0.04} & 5.02 $\pm$ 1.03 & 0.90 $\pm$ 0.38 & 3.15 $\pm$ 1.16 \\
 & 4 & \textbf{0.20 $\pm$ 0.05} & 4.41 $\pm$ 0.96 & \textbf{0.74 $\pm$ 0.32} & 2.52 $\pm$ 1.22 \\
 & 5 & 0.25 $\pm$ 0.07 & \textbf{3.22 $\pm$ 0.77} & 0.80 $\pm$ 0.46 & 1.51 $\pm$ 1.18 \\
 & 6 & 0.30 $\pm$ 0.09 & \textbf{3.32 $\pm$ 0.89} & 1.19 $\pm$ 0.72 & \textbf{1.43 $\pm$ 1.36} \\
\midrule
\multirow{5}{*}{{SyNG-D (MLP)}}
 & 2 & 0.46 $\pm$ 0.10 & 7.00 $\pm$ 0.89 & 3.36 $\pm$ 1.34 & 5.54 $\pm$ 1.05 \\
 & 3 & \textbf{0.37 $\pm$ 0.09} & 7.01 $\pm$ 0.76 & \textbf{2.51 $\pm$ 1.04} & \textbf{5.27 $\pm$ 0.90} \\
 & 4 & 0.65 $\pm$ 0.11 & 8.90 $\pm$ 0.90 & 6.88 $\pm$ 1.98 & 8.17 $\pm$ 1.11 \\
 & 5 & 0.38 $\pm$ 0.10 & 7.00 $\pm$ 0.89 & 2.81 $\pm$ 1.18 & 5.57 $\pm$ 1.08 \\
 & 6 & 0.39 $\pm$ 0.10 & \textbf{6.74 $\pm$ 0.92} & 2.80 $\pm$ 1.18 & 5.35 $\pm$ 1.12 \\
\midrule
\multirow{5}{*}{SyNG-R}
 & 2 & 0.24 $\pm$ 0.08 & \textbf{5.04 $\pm$ 1.10} & \textbf{1.24 $\pm$ 0.82} & \textbf{3.01 $\pm$ 1.59} \\
 & 3 & 0.23 $\pm$ 0.07 & 5.45 $\pm$ 1.02 & 1.33 $\pm$ 0.76 & 3.64 $\pm$ 1.34 \\
 & 4 & 0.23 $\pm$ 0.07 & 5.58 $\pm$ 1.06 & 1.35 $\pm$ 0.79 & 3.86 $\pm$ 1.32 \\
 & 5 & \textbf{0.22 $\pm$ 0.07} & 5.68 $\pm$ 0.99 & 1.34 $\pm$ 0.77 & 3.98 $\pm$ 1.20 \\
 & 6 & \textbf{0.22 $\pm$ 0.06} & 5.69 $\pm$ 0.98 & 1.36 $\pm$ 0.73 & 4.15 $\pm$ 1.17 \\
\midrule
\multirow{6}{*}{VGAE}
 & 2 & \textbf{1.23 $\pm$ 0.01} & 27.33 $\pm$ 0.18 & 42.20 $\pm$ 0.72 & 31.11 $\pm$ 0.20 \\
 & 3 & 1.27 $\pm$ 0.01 & 27.67 $\pm$ 0.18 & 44.05 $\pm$ 0.82 & 31.57 $\pm$ 0.21 \\
 & 4 & 1.24 $\pm$ 0.01 & 27.41 $\pm$ 0.18 & 42.65 $\pm$ 0.74 & 31.23 $\pm$ 0.20 \\
 & 5 & \textbf{1.23 $\pm$ 0.01} & \textbf{27.27 $\pm$ 0.18} & \textbf{41.88 $\pm$ 0.75} & \textbf{31.10 $\pm$ 0.22} \\
 & 6 & 1.27 $\pm$ 0.01 & 27.72 $\pm$ 0.18 & 44.16 $\pm$ 0.74 & 31.62 $\pm$ 0.19 \\
 & 16 & 1.28 $\pm$ 0.01 & 27.81 $\pm$ 0.18 & 44.70 $\pm$ 0.77 & 31.80 $\pm$ 0.20 \\
\midrule
\multirow{3}{*}{GRAN}
 & 128 & \textbf{0.48 $\pm$ 0.18} & 10.91 $\pm$ 1.15 & \textbf{4.63 $\pm$ 2.88} & \textbf{9.04 $\pm$ 0.98} \\
 & 256 & 1.23 $\pm$ 0.16 & \textbf{10.89 $\pm$ 1.18} & 17.03 $\pm$ 4.31 & 9.39 $\pm$ 1.26 \\
 & 512 & 3.52 $\pm$ 0.19 & 24.55 $\pm$ 1.24 & 122.93 $\pm$ 12.78 & 26.84 $\pm$ 1.41 \\
\midrule
EDGE & -- & \textbf{0.28 $\pm$ 0.04} & \textbf{8.21 $\pm$ 1.17} & \textbf{1.90 $\pm$ 0.59} & \textbf{4.69 $\pm$ 1.26} \\
\midrule
{GraphMaker} & -- & \textbf{1.78 $\pm$ 0.01} & \textbf{32.20 $\pm$ 0.15} & \textbf{72.49 $\pm$ 0.85} & \textbf{38.28 $\pm$ 0.13} \\
\midrule
E-R & -- & \textbf{2.03 $\pm$ 0.01} & \textbf{34.42 $\pm$ 0.12} & \textbf{93.20 $\pm$ 0.92} & \textbf{40.10 $\pm$ 0.11} \\
\midrule
\multirow{1}{*}{BTER} & -- & \textbf{0.41 ± 0.02} & \textbf{4.53 ± 0.40} & \textbf{1.87 ± 0.17} & \textbf{3.04 ± 0.34} \\
\midrule
\multirow{1}{*}{mKPGM} & -- & \textbf{1.31 ± 0.01} & \textbf{23.36 ± 0.22} & \textbf{31.95 ± 0.46} & \textbf{31.15 ± 0.22} \\
\bottomrule
\end{tabular}
\end{table}

\begin{table}[htbp]
\centering
\caption{Generation quality in numerical characteristics on PolBlogs dataset.
}
\label{tab:polblogs-gcc-tri}
\begin{tabular}{c c | c c c | c c c}
\toprule
\multirow{2}{*}{Method} & \multirow{2}{*}{Config}
& \multicolumn{3}{c|}{Clus ($\times 10^{-2}$)}
& \multicolumn{3}{c}{Tri ($\times 10^{-4}$)} \\
\cmidrule(lr){3-5}\cmidrule(lr){6-8}
& & RMSE & MAE & Bias & RMSE & MAE & Bias \\
\midrule
\multirow{5}{*}{SyNG\text{-}D}
 & 2 & 1.90 & 1.51 & \phantom{-}1.15 & 0.71 & \textbf{0.55} & \phantom{-}\textbf{0.13} \\
 & 3 & \textbf{1.56} & \textbf{1.23} & \phantom{-}0.54 & \textbf{0.68} & 0.56 & -0.33 \\
 & 4 & 1.85 & 1.45 & \phantom{-}0.90 & 0.78 & 0.66 & -0.43 \\
 & 5 & 1.66 & 1.33 & \phantom{-}\textbf{0.49} & 1.09 & 1.00 & -0.98 \\
 & 6 & 1.91 & 1.49 & \phantom{-}0.93 & 1.19 & 1.09 & -1.07 \\
\midrule
\multirow{5}{*}{{SyNG\text{-}D (MLP)}}
 & 2 & 3.58 & 3.26 & \phantom{-}3.23 & 0.91 & 0.69 & \phantom{-}0.56 \\
 & 3 & \textbf{2.23} & \textbf{1.88} & \textbf{\phantom{-}1.73} & \textbf{0.59} & \textbf{0.48} & \textbf{-0.16} \\
 & 4 & 5.69 & 5.45 & \phantom{-}5.45 & 1.80 & 1.61 & \phantom{-}1.60 \\
 & 5 & 5.18 & 4.77 & \phantom{-}4.74 & 1.15 & 0.90 & \phantom{-}0.77 \\
 & 6 & 3.15 & 2.80 & \phantom{-}2.72 & 0.84 & 0.65 & \phantom{-}0.50 \\
\midrule
\multirow{5}{*}{SyNG\text{-}B}
 & 2 & \textbf{2.45} & \textbf{2.00} & \phantom{-}\textbf{1.83} & \textbf{0.83} & \textbf{0.62} & \phantom{-}\textbf{0.39} \\
 & 3 & 2.68 & 2.23 & \phantom{-}2.12 & 0.85 & 0.64 & \phantom{-}0.44 \\
 & 4 & 2.88 & 2.44 & \phantom{-}2.37 & 0.87 & 0.66 & \phantom{-}0.49 \\
 & 5 & 2.91 & 2.50 & \phantom{-}2.44 & 0.89 & 0.68 & \phantom{-}0.51 \\
 & 6 & 3.13 & 2.74 & \phantom{-}2.69 & 0.92 & 0.71 & \phantom{-}0.55 \\
\midrule
\multirow{6}{*}{VGAE}
 & 2  & 3.72 & 3.71 & -3.71 & 2.10 & 2.10 & \textbf{-2.07} \\
 & 3  & 4.49 & 4.49 & -4.49 & 2.20 & 2.20 & -2.20 \\
 & 4  & 4.03 & 4.02 & -4.02 & 2.14 & 2.14 & -2.14 \\
 & 5  & \textbf{3.26} & \textbf{3.25} & \textbf{-3.25} & \textbf{2.07} & \textbf{2.07} & \textbf{-2.07} \\
 & 6  & 4.87 & 4.87 & -4.87 & 2.22 & 2.22 & -2.22 \\
 & 16 & 4.54 & 4.54 & -4.54 & 2.22 & 2.22 & -2.22 \\
\midrule
\multirow{3}{*}{GRAN}
 & 128 & 11.42 & 11.38 & -11.38 & \textbf{1.57} & \textbf{1.49} & \textbf{-1.41} \\
 & 256 & 10.82 & 10.78 & -10.78 & 4.42 & 4.22 & \phantom{-}4.22 \\
 & 512 & \textbf{0.76} & \textbf{0.60} & \phantom{-}\textbf{0.03} & 64.94 & 64.45 & \phantom{-}64.45 \\
\midrule
EDGE & -- & \textbf{5.69} & \textbf{5.43} & \textbf{-5.43} & \textbf{0.79} & \textbf{0.75} & \textbf{-0.75} \\
\midrule
{GraphMaker} & -- & \textbf{20.75} & \textbf{20.75} & \textbf{-20.75} & \textbf{3.27} & \textbf{3.27} & \textbf{-3.27} \\
\midrule
E-R & -- & \textbf{20.36} & \textbf{20.36} & \textbf{-20.36} & \textbf{3.22} & \textbf{3.22} & \textbf{-3.22} \\
\midrule
BTER & -- & \textbf{5.27} & \textbf{5.27} & \textbf{-5.27} & \textbf{0.88} & \textbf{0.88} & \textbf{-0.88} \\
\midrule
mKPGM & -- & \textbf{21.43} & \textbf{21.43} & \textbf{-21.43} & \textbf{3.32} & \textbf{3.32} & \textbf{-3.32} \\
\bottomrule
\end{tabular}
\end{table}

\begin{table}[htbp]
    \centering
    \caption{4-node graphlet frequency distance (GFD) results on the YouTube, DBLP, PolBlogs, and Yelp datasets. We report both L1 and L2 distances as measures of similarity, and highlight the best result for each method in bold. All entries are scaled by $10^{-1}.$}
    \label{tab:gfd-all}
\resizebox{\textwidth}{!}{
\begin{tabular}{l|c|cccc|cccc}
\toprule
Method & Config & \multicolumn{4}{c}{$\mathrm{GFD}_{L1}\downarrow$} & \multicolumn{4}{c}{$\mathrm{GFD}_{L2}\downarrow$} \\
\midrule
 &  & YouTube & DBLP & PolBlogs & Yelp & YouTube & DBLP & PolBlogs & Yelp \\
\midrule
\multirow{5}{*}{SyNG-D} & 2 & 1.42 ± 0.91 & 3.80 ± 0.96 & 0.76 ± 0.39 & 0.38 ± 0.09 & 0.70 ± 0.48 & 1.97 ± 0.51 & 0.33 ± 0.19 & 0.16 ± 0.05 \\
 & 3 & \textbf{1.39 ± 0.97} & \textbf{3.03 ± 0.98} & \textbf{0.72 ± 0.42} & 0.42 ± 0.11 & \textbf{0.68 ± 0.51} & \textbf{1.58 ± 0.52} & \textbf{0.32 ± 0.21} & 0.17 ± 0.05 \\
 & 4 & 1.45 ± 0.98 & 3.35 ± 0.86 & 0.86 ± 0.46 & \textbf{0.30 ± 0.11} & 0.71 ± 0.50 & 1.75 ± 0.45 & 0.38 ± 0.23 & \textbf{0.13 ± 0.05} \\
 & 5 & 1.45 ± 0.97 & 3.17 ± 0.90 & 0.89 ± 0.52 & 0.32 ± 0.13 & 0.71 ± 0.50 & 1.67 ± 0.47 & 0.40 ± 0.26 & 0.14 ± 0.06 \\
 & 6 & 1.69 ± 1.05 & 3.46 ± 0.94 & 0.97 ± 0.51 & 0.31 ± 0.13 & 0.83 ± 0.54 & 1.83 ± 0.49 & 0.43 ± 0.25 & 0.14 ± 0.06 \\
\midrule
\multirow{5}{*}{SyNG-D(MLP)} & 2 & 2.08 ± 0.98 & 6.92 ± 1.11 & \textbf{1.19 ± 0.52} & 0.35 ± 0.11 & 1.04 ± 0.50 & 3.60 ± 0.59 & \textbf{0.51 ± 0.26} & 0.14 ± 0.05 \\
 & 3 & 2.55 ± 1.03 & 9.51 ± 0.86 & 1.23 ± 0.52 & 0.37 ± 0.19 & 1.24 ± 0.50 & 5.04 ± 0.48 & 0.55 ± 0.25 & 0.17 ± 0.10 \\
 & 4 & \textbf{1.71 ± 0.86} & \textbf{6.17 ± 0.74} & 1.47 ± 0.38 & 0.34 ± 0.13 & \textbf{0.82 ± 0.43} & \textbf{3.19 ± 0.39} & 0.61 ± 0.18 & 0.13 ± 0.05 \\
 & 5 & 3.02 ± 0.96 & 7.22 ± 0.86 & 1.41 ± 0.55 & \textbf{0.23 ± 0.14} & 1.45 ± 0.46 & 3.73 ± 0.45 & 0.56 ± 0.26 & \textbf{0.11 ± 0.07} \\
 & 6 & 2.68 ± 1.14 & 7.86 ± 0.73 & 1.37 ± 0.74 & 0.27 ± 0.11 & 1.28 ± 0.55 & 4.08 ± 0.39 & 0.60 ± 0.38 & 0.11 ± 0.05 \\
\midrule
\multirow{5}{*}{SyNG-R} & 2 & 1.38 ± 1.01 & 1.91 ± 0.91 & \textbf{0.81 ± 0.39} & 0.39 ± 0.11 & 0.67 ± 0.53 & 0.99 ± 0.51 & \textbf{0.35 ± 0.19} & 0.15 ± 0.04 \\
 & 3 & \textbf{1.33 ± 1.05} & 1.60 ± 0.89 & 0.83 ± 0.38 & 0.33 ± 0.11 & 0.65 ± 0.53 & 0.83 ± 0.49 & 0.36 ± 0.18 & 0.13 ± 0.04 \\
 & 4 & 1.34 ± 1.07 & 1.31 ± 0.80 & 0.86 ± 0.38 & 0.23 ± 0.12 & \textbf{0.65 ± 0.53} & 0.67 ± 0.44 & 0.36 ± 0.18 & 0.09 ± 0.05 \\
 & 5 & 1.36 ± 1.07 & \textbf{1.13 ± 0.72} & 0.87 ± 0.39 & 0.18 ± 0.11 & 0.65 ± 0.52 & \textbf{0.58 ± 0.39} & 0.37 ± 0.18 & 0.08 ± 0.05 \\
 & 6 & 1.37 ± 1.08 & 1.14 ± 0.81 & 0.88 ± 0.37 & \textbf{0.18 ± 0.11} & 0.66 ± 0.53 & 0.58 ± 0.44 & 0.37 ± 0.18 & \textbf{0.08 ± 0.05} \\
\midrule
\multirow{6}{*}{VGAE}
 & 2 & 6.53 ± 0.04 & 13.69 ± 0.01 & \textbf{2.95 ± 0.09} & \textbf{3.66 ± 0.01} & 2.97 ± 0.01 & 7.39 ± 0.01 & \textbf{1.45 ± 0.04} & \textbf{1.53 ± 0.00} \\
 & 3 & 6.50 ± 0.04 & 2.54 ± 0.13 & 3.21 ± 0.08 & 3.85 ± 0.01 & 2.96 ± 0.01 & 1.14 ± 0.09 & 1.54 ± 0.04 & 1.61 ± 0.00 \\
 & 4 & \textbf{5.91 ± 0.03} & 1.57 ± 0.12 & 3.08 ± 0.08 & 3.86 ± 0.01 & \textbf{2.79 ± 0.01} & \textbf{0.68 ± 0.06} & 1.50 ± 0.04 & 1.61 ± 0.00 \\
 & 5 & 6.61 ± 0.03 & 1.57 ± 0.14 & 3.00 ± 0.08 & 3.80 ± 0.01 & 3.00 ± 0.01 & 0.69 ± 0.07 & 1.48 ± 0.04 & 1.58 ± 0.00 \\
 & 6 & 6.20 ± 0.03 & 1.69 ± 0.15 & 3.19 ± 0.09 & 4.09 ± 0.01 & 2.88 ± 0.01 & 0.85 ± 0.07 & 1.51 ± 0.04 & 1.70 ± 0.00 \\
 & 16 & 6.30 ± 0.03 & \textbf{1.51 ± 0.11} & 3.37 ± 0.08 & 3.93 ± 0.01 & 2.92 ± 0.01 & 0.80 ± 0.07 & 1.62 ± 0.04 & 1.64 ± 0.00 \\
\midrule
\multirow{3}{*}{GRAN} & 128 & 1.90 ± 0.30 & 18.35 ± 0.14 & \textbf{1.95 ± 0.33} & - & 0.77 ± 0.14 & 9.92 ± 0.04 & \textbf{0.79 ± 0.19} & - \\
 & 256 & 2.42 ± 0.18 & 17.26 ± 0.15 & 3.88 ± 0.76 & - & 1.07 ± 0.10 & 8.49 ± 0.53 & 1.73 ± 0.37 & - \\
 & 512 & \textbf{1.67 ± 0.38} & \textbf{17.17 ± 0.08} & 3.04 ± 0.30 & - & \textbf{0.67 ± 0.17} & \textbf{8.22 ± 0.31} & 1.40 ± 0.16 & - \\
\midrule
\multirow{1}{*}{EDGE} & - & \textbf{5.35 ± 0.49} & \textbf{5.27 ± 2.28} & \textbf{1.14 ± 0.29} & - & \textbf{2.12 ± 0.19} & \textbf{2.57 ± 1.35} & \textbf{0.45 ± 0.11} & - \\
\midrule
\multirow{1}{*}{GraphMaker} & - & \textbf{7.97 ± 0.00} & \textbf{17.45 ± 0.01} & \textbf{7.88 ± 0.01} & \textbf{5.17 ± 0.00} & \textbf{3.48 ± 0.00} & \textbf{8.32 ± 0.00} & \textbf{3.23 ± 0.00} & \textbf{2.14 ± 0.00} \\
\midrule
\multirow{1}{*}{ER} & - & \textbf{7.90 ± 0.00} & \textbf{17.39 ± 0.01} & \textbf{7.80 ± 0.01} & \textbf{5.02 ± 0.00} & \textbf{3.47 ± 0.00} & \textbf{8.32 ± 0.00} & \textbf{3.22 ± 0.00} & \textbf{2.08 ± 0.00} \\
\midrule
\multirow{1}{*}{BTER} & - & \textbf{2.24 ± 0.10} & \textbf{13.84 ± 0.01} & \textbf{2.09 ± 0.09} & \textbf{0.73 ± 0.01} & \textbf{0.95 ± 0.05} & \textbf{7.54 ± 0.00} & \textbf{0.98 ± 0.04} & \textbf{0.30 ± 0.01} \\
\midrule
\multirow{1}{*}{mKPGM} & - & \textbf{7.57 ± 0.01} & \textbf{17.47 ± 0.01} & \textbf{7.83 ± 0.03} & \textbf{4.26 ± 0.01} & \textbf{3.32 ± 0.01} & \textbf{8.28 ± 0.00} & \textbf{3.09 ± 0.01} & \textbf{1.74 ± 0.01} \\
\bottomrule
\end{tabular}
}
\end{table}



\clearpage
\newpage

\subsection{Supplementary for the Efficiency Comparison}\label{sec:supp:eff}

In this section, we begin with a note on e-FLOPs and then present a comparison of training and sampling time across methods, specifying the device environment used for each.

\paragraph{A note on e-FLOPs.} e-FLOPs is a hardware-agnostic proxy for training workload. It combines floating-point operations in neural networks with calibrated node-visit operations in tree-based components. These two operation types are not identical, since node visits involve comparison and branching rather than dense arithmetic, so e-FLOPs should be interpreted as a unified workload proxy rather than exact hardware-level FLOPs. In Section \ref{sec:num:eff}, the main e-FLOPs comparison counts the dominant generator-training stage; for SyNG-D, this is score-estimator training in the learned latent space and does not include the one-time latent-space fitting step. As a supplement, we report wall-clock training and sampling time for each method under its implemented device environment.


\begin{figure}[H]
  \centering
  \includegraphics[width=0.45\linewidth]{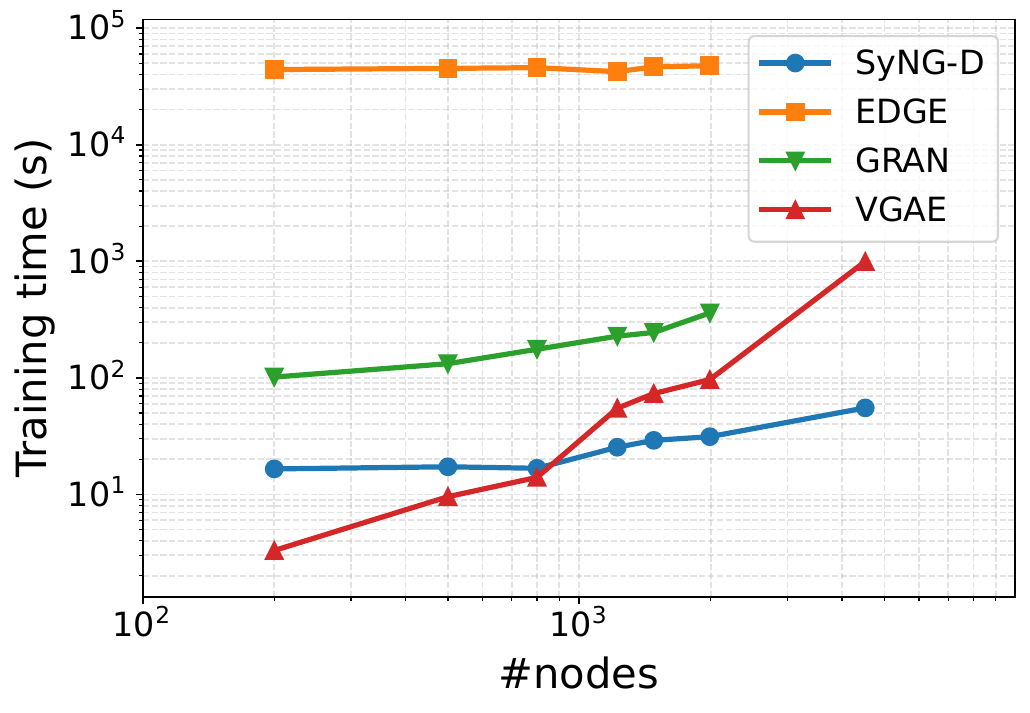}
  \caption{
  Wall-clock training time of different methods for datasets of different sizes.}
  \label{fig:efficiency-appendix}
\end{figure}


\paragraph{Evaluation metrics and configuration.} We compare training and sampling
 efficiency between SyNG-D and the baseline methods through the time they spend during training and sampling. 
SyNG-D and VGAE are trained on CPUs, while GRAN and EDGE are trained on a single NVIDIA GeForce RTX 4090 with memory of 24GB.
For each dataset, we train each model using the default training schedule, sample 128 networks from each model, and record the wall-clock training time and average sampling time.

\paragraph{Results and discussion.}
\cref{fig:efficiency-appendix} presents the training time comparisons between our method and the baselines. In particular, our model can be trained in tens of seconds even for graphs with up to 5{,}000 nodes, whereas deep learning based methods typically require on the order of hundreds to thousands of seconds. As network size increases, the training time for our method grows at a moderate rate, while the training time for VGAE increases much faster. This indicates that the computational cost of SyNG-D remains relatively stable as network size grows, demonstrating its advantages on large-scale networks. The training time of EDGE and GRAN remains high across all dataset sizes.

For sampling speed, on networks with fewer than 1{,}000 nodes, both SyNG-D and VGAE complete a single draw in under $0.1$\,s on average, whereas EDGE and GRAN require a few seconds. SyNG-D requires only a small number of diffusion steps to generate high-quality latent embeddings. For larger networks with 5{,}000 nodes, SyNG-D samples a synthetic network in only a few seconds, significantly faster than GRAN and EDGE, which require tens of seconds. The sampling time of SyNG-D is also more stable as network size increases, compared with the other methods.

These results highlight the scalability of SyNG-D and its advantage for large graphs.

\subsection{Analysis of Complexity} \label{sec:analysis}

We provide an analysis of the time complexity of the training process.
As suggested by \citet{ma2020universal}, each iteration of the projected gradient descent involves only matrix multiplication between the adjacency matrix and the current latent embedding estimate.
Therefore, the per-iteration time complexity of the projected gradient descent in estimating the latent node embeddings is $O(n^2 r)$, where $n$ is the number of nodes and $r$ is the latent dimension.
Meanwhile, the cost of generating a new group of latent embeddings is $O(n r)$, where the omitted constant depends on the model and implementation.
Since the latent embedding is typically low-dimensional, the overall time complexity of the training and inference of our approach is $O(n^2 r)$.

In the context of a large-scale sparse network dataset, the proposed SyNGLER framework remains scalable with a simple adaptation.
Concretely, under the Gaussian link function $p(a_{ij } \mid \pi_{ij}) \propto (2\pi \sigma^2)^{-1/2}\exp\{- (a_{ij} - \pi_{ij})^2/(2\sigma^2) \}$, the embedding estimation problem \cref{eq:lsm_estimate} can be efficiently solved with top-$r$ singular value decomposition.
According to \citet{saad2003iterative}, the time complexity of eigen-decomposition for a sparse adjacency matrix $A$ is $O(|E(A)|r + nr)$, where $E(A)$ denotes the edge set associated with the adjacency matrix $A$.
When the total number of edges is $O(n^c)$ with $c < 2$, the resulting time complexity grows more slowly than that of using a logistic link. In scenarios involving very large-scale and sparse networks, this approach is highly computationally efficient. We empirically study this approach on a network with one million nodes in \cref{sec:large_graph}.



For other deep generative models, \citet{zhu2022survey} suggests that the per-iteration time complexity to train a deep generative model over a graph of size $n$ is typically $O(n^2 M)$, where $M$ is the number of parameters in the model.
Note that deep generative models typically have a large number of parameters $M$, which can be much larger than $r$.
Therefore, we conclude that our method has a lower time complexity than deep generative models, which is consistent with our empirical results.



{
\section{Evaluation of SyNGLER on Large-Scale Network Dataset}\label{sec:large_graph}
In this section, we evaluate the scalability and effectiveness of SyNGLER on a large-scale synthetic network. 
To illustrate the capability of our framework under extreme graph sizes and sparsity levels, we construct a massive benchmark using the Stochastic Block Model (SBM) and assess whether SyNGLER is able to faithfully reproduce its structural patterns.

\paragraph{Large-Scale Network Simulation via SBM.} We first generate a network with $n=10^6$ nodes from a three-block Stochastic Block Model. 
The network is designed to be extremely sparse, with an average degree of approximately $5$. 
This setup provides a controlled environment to examine whether SyNGLER can recover community structure and degree behavior at scale.

\paragraph{Latent Space Estimation.}We apply our latent space model to embed the one-million-node network into a continuous low-dimensional space. To ensure computational feasibility at the one-million-node scale, we apply a linear latent space model to obtain the embeddings. 

\paragraph{Generative Resampling via SyNG-D (MLP).}
Using the estimated latent positions, we train our SyNG-D (MLP) model to resample latent embeddings. 
The model learns the distribution of the latent embeddings and enables resampling of synthetic latent vectors that preserve the structural structure and cluster patterns present in the original data.

Direct visualization or adjacency-level comparison is infeasible for networks of this scale. Instead, we validate the generative resampling by examining whether key structures are preserved. In particular, Figure~\ref{fig:millions-eval} shows that SyNG-D(MLP) successfully recovers the global community interaction patterns and preserves important spectral characteristics of the graph.

\begin{figure}[htbp]
    \centering
    \begin{subfigure}[t]{0.47\linewidth}
        \centering
        \includegraphics[width=\linewidth]{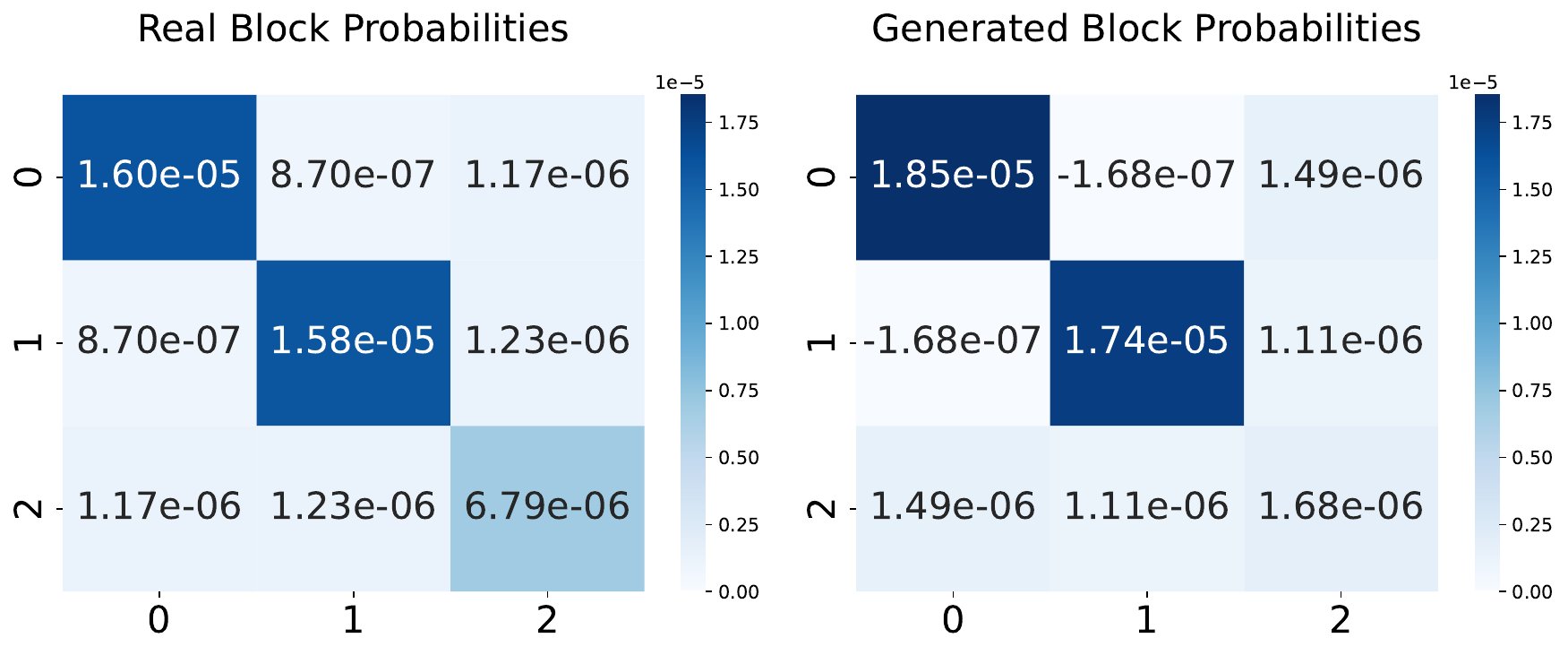}
        \caption{{Estimated $3 \times 3$ block interaction matrix from the generated network compared with the SBM ground truth.}}
        \label{fig:block-prob}
    \end{subfigure}
    \hfill
    \begin{subfigure}[t]{0.47\linewidth}
        \centering
        \includegraphics[width=\linewidth]{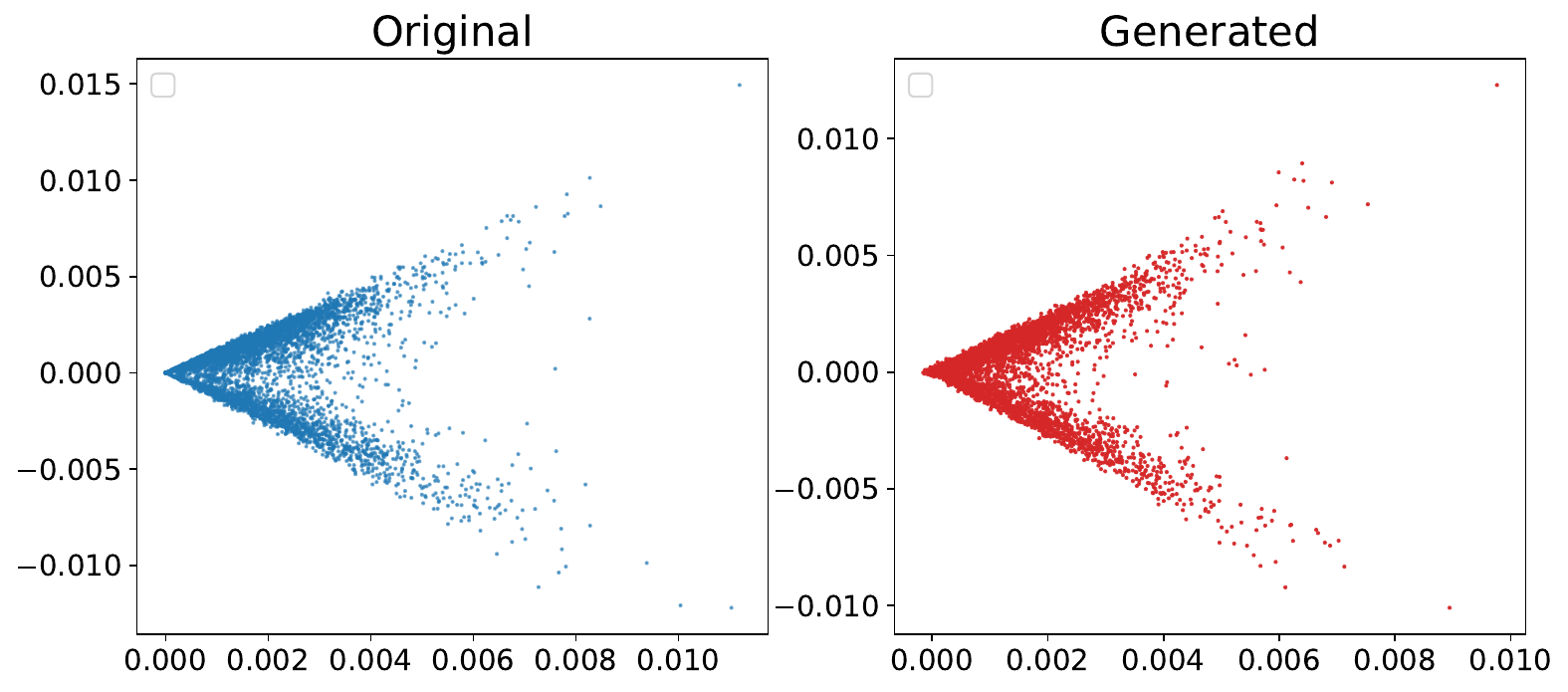}
        \caption{{Scatter plot of the first two dimensions of the eigenvectors of the original and generated networks.}}
        \label{fig:eigen-scatter}
    \end{subfigure}
    \caption{{Evaluation of SyNG-D(MLP) on the one-million-node SBM network.}}
    \label{fig:millions-eval}
\end{figure}

\label{subsec:complexity}

}

{\section{Details of SyNGLER-Attr and its Evaluation Results}
\label{app:attribute}
In this section, we provide additional details on the SyNGLER-Attr procedure and report its empirical performance on several attributed network datasets. We first present the full algorithmic workflow of SyNGLER-Attr, including latent factor estimation, attribute decomposition, and the joint mechanism for network and node attributes generation. The complete procedure is summarized in Algorithm~\ref{alg:syngler-attr}. In practice, we use the sigmoid function as the link function when modeling binary networks, so that 
$\mathrm{Bernoulli}\!\left(g(\cdot)\right)$ reduces to a standard logistic formulation for edge probabilities.

\begin{algorithm}[H]
{
\caption{{Synthetic Network Generation via Latent Emedding Reconstruction for Attributed Network}}
\label{alg:syngler-attr}
\begin{algorithmic}[1]
\STATE \textbf{Input:} Adjacency matrix $A \in \{0,1\}^{n\times n}$, Attribute matrix $Y \in \mathbb{R}^{n\times p}$.
\STATE Fit the latent space likelihood model  
$(A_{ij}| z_{1i},z_{1j},\alpha_i,\alpha_j)
\sim \mathrm{Bernoulli}\!\left(
g(z_{1i}^\top z_{1j} + \alpha_i + \alpha_j)
\right)$ to obtain the MLE $(\widehat Z_1,\,\widehat\alpha)$.

\STATE Regress $Y$ on $Z_1$ under  
$Y = \mathbf{1}_n\widehat \mu^\top + \widehat Z_1 \widehat \Lambda_1^\top + R$,  where $R$ denotes the residual matrix.

\STATE Conduct an eigenvalue ratio test on residual $R$ to determine $d_2$, then fit  
$R = Z_2\Lambda_2^\top + E,\; E_{i\cdot}\sim N(0,\Psi)$  
to obtain $(\widehat Z_2,\,\widehat\Lambda_2)$.

\STATE Form the full latent embedding $\widehat Z = (\widehat Z_1,\,\widehat Z_2)$ and train a generative model  
$\text{Sampler} = \mathrm{GenModel}\!\big(\{(\widehat z_i,\,\widehat\alpha_i)\}_{i=1}^n\big)$.

\STATE Generate new latent samples  
$(\tilde z_i,\,\tilde\alpha_i) \sim \text{Sampler},\; i=1,\dots,n$,  
where $\tilde z_i = (\tilde z_{1i}^\top,\,\tilde z_{2i}^\top)^\top$.

\STATE With sampled latent variables $\tilde z_i,\tilde{\alpha_i}$, generate network edges via  
$\tilde A_{ij} = \tilde A_{ji} \sim 
p\big(\cdot\,\big|\, 
\tilde z_{1i}^\top \tilde z_{1j} + \tilde\alpha_i + \tilde\alpha_j 
\big)$,  
and generate attributes via  
$\tilde Y = \mathbf{1}_n \widehat\mu^\top 
+ \tilde Z_1 \widehat\Lambda_1^\top
+ \tilde Z_2 \widehat\Lambda_2^\top$.
\STATE \textbf{Output:}Generated network $\tilde A$ and generated attributes $\tilde Y$.
\end{algorithmic}
}
\end{algorithm}

Empirically, we evaluate the SyNGLER-Attr procedure on the \texttt{Cora} dataset, a widely used attributed network benchmark. 

\begin{table}[H]
\centering
\caption{{Generation performance of SyNG-Attr and GraphMaker on the Cora dataset.}}
\label{tab:cora-struct}
{\begin{tabular}{l|cccc|c}
\toprule
Method & $\text{RMSE}_{Tri.}\!\downarrow$ & $\text{RMSE}_{Clus.} \!\downarrow$ & $\text{MMD}_{Eig.}\!\downarrow$ & $\text{MMD}_{DegC.} \!\downarrow$ & $\text{ML-R}$\\
\midrule
SyNG-Attr$_{\mathrm{MLP}}$   
& 36.07 & 5.38 & \textbf{0.68 $\pm$ 0.00} & $2.61 \pm 0.15$ & $0.98 \pm 0.01$\\
SyNG-Attr$_{\mathrm{Forest}}$   
& 0.54 & \textbf{2.09} & 0.88 $\pm$0.00 & \textbf{0.32 $\pm$ 0.05} & 0.99$\pm$ 0.02\\
SyNG-Attr$_{\mathrm{R}}$   
& \textbf{0.48} & 5.21 & 0.89 $\pm$ 0.01 & 0.50 $\pm$ 0.04 & \textbf{1.00 $\pm$0.00}\\
\midrule
GraphMaker 
& 0.58 & 7.72 & 1.03 $\pm$ 0.01 & {2.36 $\pm$ 0.01} & \textbf{1.00 $\pm$ 0.00} \\
\bottomrule
\end{tabular}}
\end{table}
\vspace{-10pt}

\begin{table}[H]
\centering
\caption{{Node-classification utility ratios on the Cora dataset. Values closer to one indicate better preservation of downstream classification utility.}}
\label{tab:cora-node-classification}
\begin{tabular}{lcc}
\toprule
Method & GCN & MLP\\
\midrule
GraphMaker & 1.01 & 1.02\\
SyNG-Attr & $0.98 \pm 0.02$ & $0.98 \pm 0.02$\\
\bottomrule
\end{tabular}
\end{table}
\vspace{-10pt}

\begin{wraptable}{r}{0.55\linewidth}
\vspace{-10pt}
\centering
\caption{{KS and MMD distances between row sums of generated and original attributes for SyNG-Attr and GraphMaker on the Cora dataset.}}
\label{tab:cora-attr}
\begin{tabular}{l r r r}
\toprule
\textbf{Method} & $d_{\mathrm{KS}}\!\downarrow$  & $d_{\mathcal{W}_1}\!\downarrow$ & MMD $\downarrow$ \\
\midrule
SyNG-Attr$_{\mathrm{MLP}}$    & 0.1398 &2.8085 & \textbf{0.1676} \\
SyNG-Attr$_{\mathrm{Forest}}$   & \textbf{0.1225} &\textbf{1.9584} & {0.1681} \\
SyNG-Attr$_{\mathrm{R}}$   & 0.1457 &2.3231 & 0.1701 \\
\midrule
GraphMaker  & 0.1400   &  5.1615  & 0.1678    \\
\bottomrule
\end{tabular}
\vspace{-6pt}
\end{wraptable}
In comparison with GraphMaker~\citep{li2023graphmaker}, we observe that SyNGLER-Attr produces synthetic graphs with close structural statistics while maintaining comparable attribute-level accuracy. 
Table~\ref{tab:cora-struct} reports the structural and ML-utility metrics, showing that SyNGLER-Attr achieves competitive or superior performance on triangle density, clustering coefficient, and spectral and centrality-based measures. For the ML-utility metric, we adopt a link prediction task, with the full evaluation protocol detailed in \cref{app:downstream}.
We further evaluate node-classification utility with GCN and MLP classifiers; as shown in Table~\ref{tab:cora-node-classification}, SyNGLER-Attr achieves ratios close to one, indicating that the generated attributed graph preserves the predictive signal needed for node classification.
To further evaluate attribute quality, we compute the Kolmogorov--Smirnov (KS) distance and Maximum Mean Discrepancy (MMD) between the generated and original attributes. 
As reported in Table~\ref{tab:cora-attr}, SyNGLER-Attr achieves smaller discrepancies than GraphMaker, confirming its ability to preserve attribute distributions.

}

{\section{Evaluations for Generated Graphs on Downstream Tasks}
\label{app:downstream}
To assess whether a generated graph can serve as a reliable surrogate for downstream machine learning tasks, we employ a discriminative-model–based evaluation protocol adapted from \citet{li2023graphmaker}. For a given dataset, we train a GCN-based graph auto-encoder (GAE) on the training split of the original graph, yielding a model with parameters $\widehat W$. We then train the same architecture on a generated graph $\hat{G}$ to obtain a second model with parameters $\check W$. 
Both models are subsequently evaluated on the \emph{same} held-out test split of the original graph, resulting in two AUC scores $\mathrm{ACC}(G \mid {G})\, \text{and} \, \mathrm{ACC}(G \mid \hat{G}),$
corresponding to the model trained on the original graph and the one trained on the generated graph, respectively. 
We use the ratio
\[
\frac{\mathrm{ACC}(G \mid \hat{G})}{\mathrm{ACC}(G \mid {G})}
\]
as the utility metric. A ratio close to one indicates that the generated graph offers comparable signal for training the GAE model, and therefore retains the structural information relevant for link prediction. All hyperparameters are tuned consistently across both training procedures to ensure a fair comparison.
The results across all four datasets are summarized in Table~\ref{tab:ml-utility-main}. 

\begin{table}[H]
\centering
\caption{{ML utility evaluation of SyNG-D, SyNG-R, EDGE, GRAN, and GraphMaker across four datasets. 
Entries marked with ``--'' indicate OOM issues.}}
\label{tab:ml-utility-main}
{\begin{tabular}{l c c c c c}
\toprule
Method & Config & DBLP & PolBlogs & YouTube & Yelp \\
\midrule
\multirow{5}{*}{SyNG-D} & 2 & \textbf{1.00 $\pm$ 0.00} & 0.98 $\pm$ 0.01 & 0.94 $\pm$ 0.02 & 0.98 $\pm$ 0.00 \\
 & 3 & 1.00 $\pm$ 0.00 & 0.98 $\pm$ 0.01 & 0.98 $\pm$ 0.01 & 0.99 $\pm$ 0.00 \\
 & 4 & 1.00 $\pm$ 0.00 & 0.99 $\pm$ 0.01 & 0.98 $\pm$ 0.01 & \textbf{0.99 $\pm$ 0.00} \\
 & 5 & 1.00 $\pm$ 0.00 & 0.99 $\pm$ 0.01 & 0.99 $\pm$ 0.01 & 0.99 $\pm$ 0.00 \\
 & 6 & 1.00 $\pm$ 0.00 & \textbf{0.99 $\pm$ 0.00} & \textbf{0.99 $\pm$ 0.01} & 0.99 $\pm$ 0.00 \\
\midrule
\multirow{5}{*}{SyNG-D(MLP)} & 2 & 1.00 $\pm$ 0.00 & 0.98 $\pm$ 0.01 & 0.90 $\pm$ 0.02 & 0.98 $\pm$ 0.00 \\
 & 3 & 1.00 $\pm$ 0.00 & \textbf{0.99 $\pm$ 0.01} & 0.98 $\pm$ 0.01 & 0.99 $\pm$ 0.00 \\
 & 4 & \textbf{1.00 $\pm$ 0.00} & 0.96 $\pm$ 0.01 & 0.96 $\pm$ 0.01 & 0.99 $\pm$ 0.00 \\
 & 5 & 1.00 $\pm$ 0.00 & 0.97 $\pm$ 0.01 & \textbf{0.99 $\pm$ 0.00} & \textbf{0.99 $\pm$ 0.00} \\
 & 6 & 1.00 $\pm$ 0.00 & 0.98 $\pm$ 0.01 & 0.98 $\pm$ 0.01 & 0.99 $\pm$ 0.00 \\
\midrule
\multirow{5}{*}{SyNG-R} & 2 & 1.00 $\pm$ 0.00 & 0.98 $\pm$ 0.01 & 0.96 $\pm$ 0.02 & 0.98 $\pm$ 0.00 \\
 & 3 & 1.00 $\pm$ 0.00 & 0.98 $\pm$ 0.01 & 0.98 $\pm$ 0.01 & 0.99 $\pm$ 0.00 \\
 & 4 & 1.00 $\pm$ 0.00 & 0.98 $\pm$ 0.01 & 0.98 $\pm$ 0.01 & 0.99 $\pm$ 0.00 \\
 & 5 & \textbf{1.00 $\pm$ 0.00} & 0.98 $\pm$ 0.01 & 0.99 $\pm$ 0.01 & 1.00 $\pm$ 0.00 \\
 & 6 & 1.00 $\pm$ 0.00 & \textbf{0.99 $\pm$ 0.01} & \textbf{0.99 $\pm$ 0.01} & \textbf{1.00 $\pm$ 0.00} \\
\midrule
\multirow{6}{*}{VGAE} & 2 & 1.00 $\pm$ 0.00 & 1.01 $\pm$ 0.00 & 1.00 $\pm$ 0.00 & 1.00 $\pm$ 0.00 \\
 & 3 & 1.00 $\pm$ 0.00 & 1.01 $\pm$ 0.00 & \textbf{1.00 $\pm$ 0.00} & 1.00 $\pm$ 0.00 \\
 & 4 & \textbf{1.00 $\pm$ 0.00} & \textbf{1.01 $\pm$ 0.00} & 1.00 $\pm$ 0.00 & 1.00 $\pm$ 0.00 \\
 & 5 & 1.00 $\pm$ 0.00 & 1.01 $\pm$ 0.00 & 1.00 $\pm$ 0.00 & 1.00 $\pm$ 0.00 \\
 & 6 & 1.00 $\pm$ 0.00 & 1.01 $\pm$ 0.00 & 0.99 $\pm$ 0.00 & \textbf{1.00 $\pm$ 0.00} \\
 & 16 & 1.00 $\pm$ 0.00 & 1.01 $\pm$ 0.00 & 0.99 $\pm$ 0.00 & 1.00 $\pm$ 0.00 \\
\midrule
\multirow{3}{*}{GRAN} & 128 & \textbf{0.98 $\pm$ 0.08} & 0.92 $\pm$ 0.08 & \textbf{1.00 $\pm$ 0.00} & - \\
 & 256 & 0.75 $\pm$ 0.00 & \textbf{1.04 $\pm$ 0.00} & 0.99 $\pm$ 0.00 & - \\
 & 512 & 0.83 $\pm$ 0.06 & 1.04 $\pm$ 0.00 & 0.98 $\pm$ 0.02 & - \\
\midrule
\multirow{1}{*}{EDGE} & -- & \textbf{1.00 $\pm$ 0.00} & \textbf{0.98 $\pm$ 0.01} & \textbf{1.00 $\pm$ 0.00} & - \\
\midrule
\multirow{1}{*}{GraphMaker} & -- & \textbf{0.95 $\pm$ 0.02} & \textbf{1.00 $\pm$ 0.00} & \textbf{0.99 $\pm$ 0.00} & \textbf{0.83 $\pm$ 0.01} \\
\midrule
\multirow{1}{*}{ER} & -- & \textbf{0.90 $\pm$ 0.03} & \textbf{1.00 $\pm$ 0.00} & \textbf{0.99 $\pm$ 0.00} & \textbf{0.80 $\pm$ 0.01} \\
\midrule
\multirow{1}{*}{BTER} & -- & \textbf{0.85 $\pm$ 0.05} & \textbf{0.95 $\pm$ 0.01} & \textbf{0.91 $\pm$ 0.01} & \textbf{0.94 $\pm$ 0.01} \\
\midrule
\multirow{1}{*}{mKPGM} & -- & \textbf{0.96 $\pm$ 0.02} & \textbf{1.01 $\pm$ 0.00} & \textbf{0.92 $\pm$ 0.01} & \textbf{0.89 $\pm$ 0.02} \\
\bottomrule
\end{tabular}}
\end{table}

We observe that both SyNG-D and SyNG-R consistently produce AUC ratios extremely close to one, indicating that the generated graphs preserve the predictive signal necessary for training link prediction models. 
In particular, SyNG-D achieves stable performance across all latent dimensions, and SyNG-R demonstrates similarly strong results with small variance. 
Compared with existing baselines such as EDGE, GRAN, and GraphMaker, SyNGLER exhibits both higher accuracy and greater robustness across datasets, further validating its effectiveness as a general-purpose synthetic graph generator for downstream ML tasks.


}
{
\section{Visualizations}\label{app:visualization}
\paragraph{Visualization of generated networks.}
We visualize the YouTube dataset with different layout algorithms to provide an intuitive comparison of graph generation quality across methods. In the main text, \cref{fig:vis_youtube} shows the visualization produced by the \texttt{spring\_layout} method in \texttt{networkx}\citep{osti_960616}, which utilize the Fruchterman-Reingold force-directed algorithm to highlight the structural patterns of network. Since different visualization algorithms may reveal different aspects of a network’s geometry, we include additional visualizations in this section under multiple layout schemes to offer a more comprehensive comparison of the generated graphs.

\begin{figure}[htbp]
    \centering
    \includegraphics[width=0.98\linewidth]{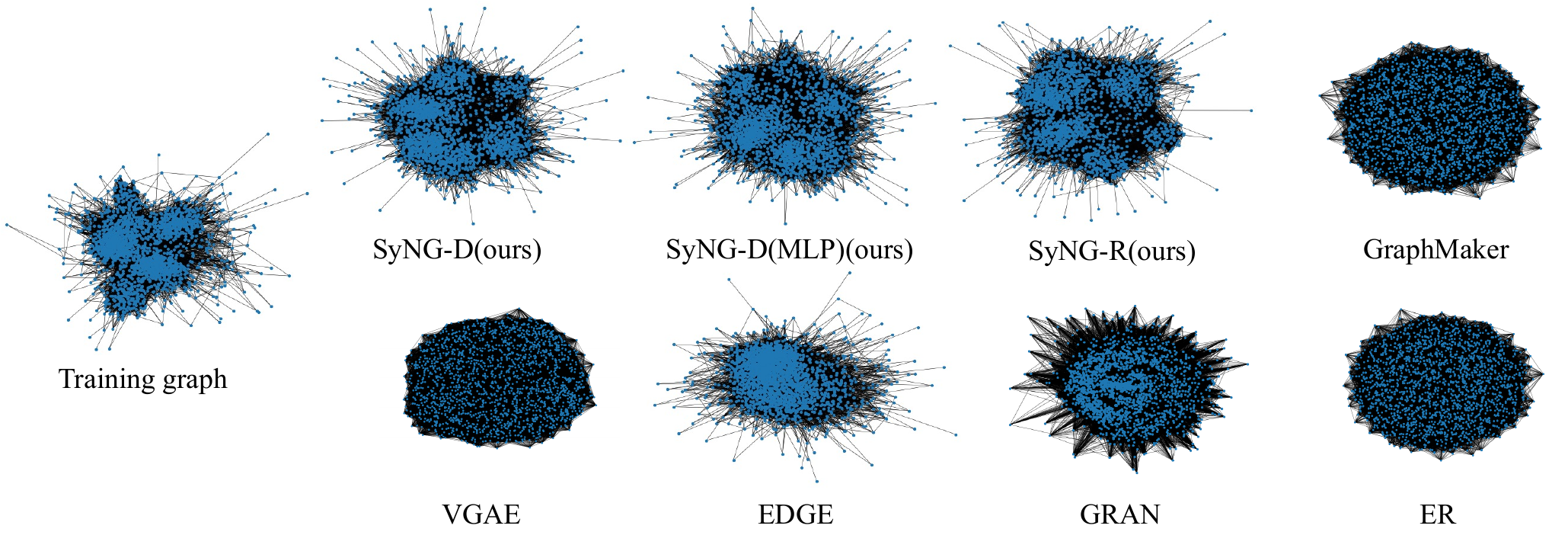}
    \caption{{Visualization via the Spring layout.}}
    \label{fig:fr-layout}
\end{figure}

\begin{figure}[htbp]
    \centering
    \includegraphics[width=0.98\linewidth]{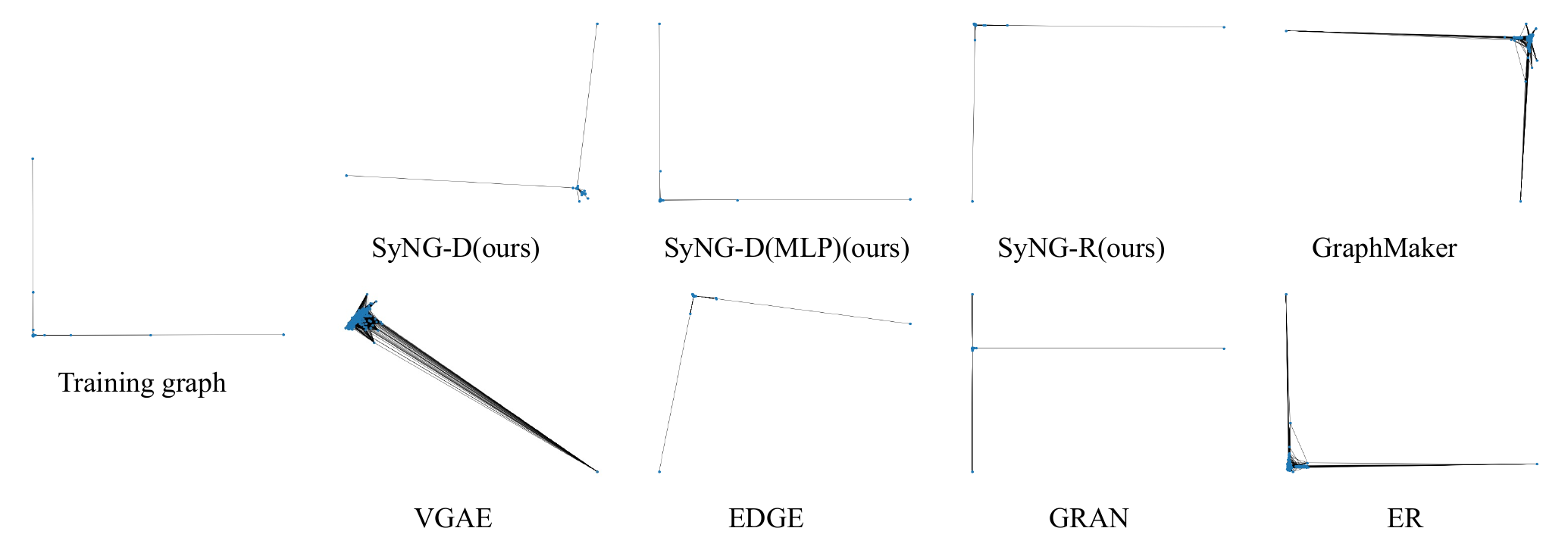}
    \caption{{Visualization via the Spectral layout.}}
    \label{fig:spl-layout}
\end{figure}

\begin{figure}[htbp]
    \centering
    \includegraphics[width=0.98\linewidth]{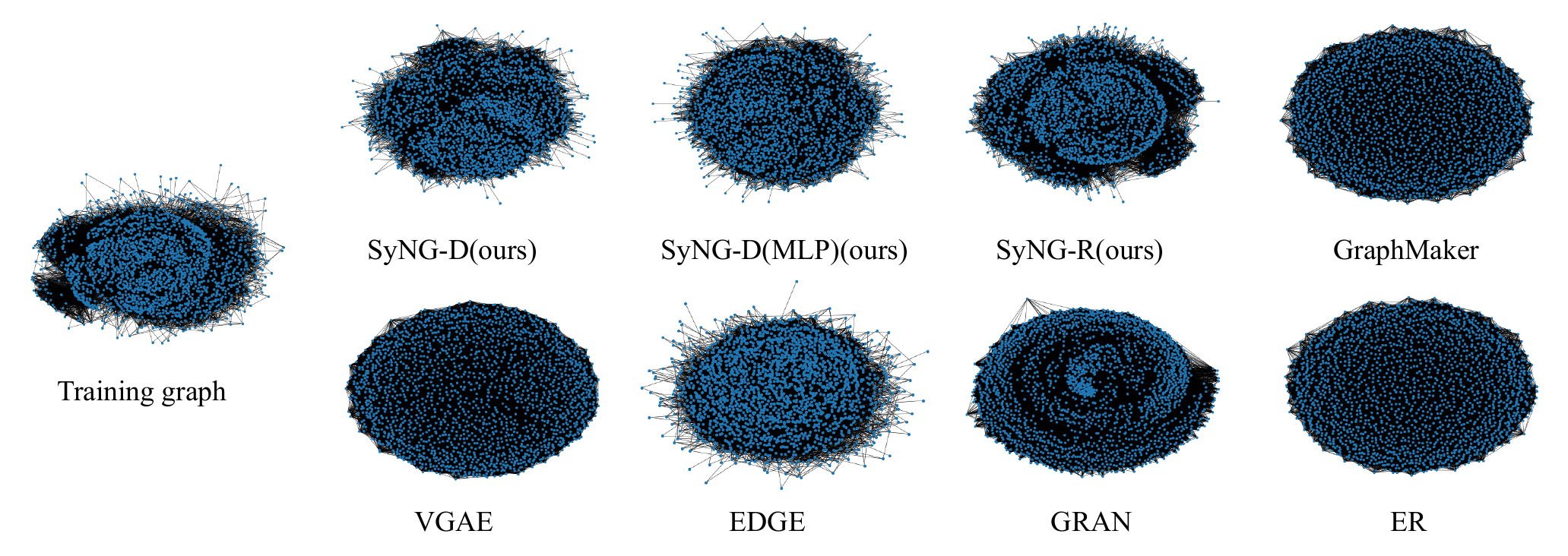}
    \caption{{Visualization via the Kamada kawai layout.}}
    \label{fig:kk-layout}
\end{figure}

}

\end{document}